\documentclass{article}

\usepackage[preprint]{neurips_2026}

\usepackage[utf8]{inputenc} % allow utf-8 input
\usepackage[T1]{fontenc}    % use 8-bit T1 fonts
\usepackage{hyperref}       % hyperlinks
\usepackage{url}            % simple URL typesetting
\usepackage{booktabs}       % professional-quality tables
\usepackage{amsfonts}       % blackboard math symbols
\usepackage{nicefrac}       % compact symbols for 1/2, etc.
\usepackage{microtype}      % microtypography
\usepackage{xcolor}         % colors

\usepackage{graphicx}
\usepackage{subcaption}
\usepackage{wrapfig}
\usepackage{enumitem}

\usepackage{amsmath}
\usepackage{amssymb}
\usepackage{mathtools}
\usepackage{amsthm}
\usepackage{float}

\newcommand{\E}{\mathbb{E}}
\newcommand{\R}{\mathbb{R}}
\newcommand{\vect}[1]{\mathbf{#1}}

\usepackage{algorithm}
\usepackage{algorithmic}

\usepackage[capitalize,noabbrev]{cleveref}

\theoremstyle{plain}
\newtheorem{theorem}{Theorem}[section]

\newtheorem{lemma}[theorem]{Lemma}
\newtheorem{corollary}[theorem]{Corollary}
\theoremstyle{definition}
\newtheorem{definition}[theorem]{Definition}
\newtheorem{assumption}[theorem]{Assumption}
\theoremstyle{remark}
\newtheorem{remark}[theorem]{Remark}

\newcommand{\method}{\text{CTWA}}

\title{Uncovering Cross-Objective Interference in Multi-Objective Alignment}

\author{%
  Yining Lu \\
  University of Notre Dame \\
  \texttt{ylu33@nd.edu}
  \And Meng Jiang \\
  University of Notre Dame\\
  \texttt{mjiang2@nd.ed} 
}

\begin{document}

\maketitle

\begin{abstract}
We study a persistent failure mode in multi-objective alignment for large language models (LLMs): training improves performance on only a subset of objectives while causing others to degrade. We formalize this phenomenon as cross-objective interference and conduct the first systematic study across scalarization algorithms, showing that interference is pervasive and exhibits strong model dependence. To explain this phenomenon, we derive a local covariance law showing that an objective improves when its reward exhibits positive covariance with the scalarized score. We extend this analysis to clipped surrogate objectives used in modern alignment, demonstrating that the covariance law remains valid under mild conditions despite clipping. Building on this analysis, we propose Covariance Targeted Weight Adaptation (CTWA), a plug-and-play method that maintains positive covariance between objective rewards and the training signal to effectively mitigate cross-objective interference. Finally, we complement these local improvement conditions with a global convergence analysis under the Polyak--\L{}ojasiewicz condition, establishing when non-convex scalarized optimization achieves global convergence and how cross-objective interference depends on specific model geometric properties.\footnote{We open-source code to reproduce our results: \href{https://github.com/yining610/ctwa}{https://github.com/yining610/ctwa}.}
\end{abstract}

\section{Introduction}

\begin{figure*}[ht]
  \centering
  \setlength{\tabcolsep}{1pt}
  \renewcommand{\arraystretch}{0}

  \begin{subfigure}{\textwidth}
    \centering
    \begin{tabular}{ccc}
      \includegraphics[width=0.33\textwidth]{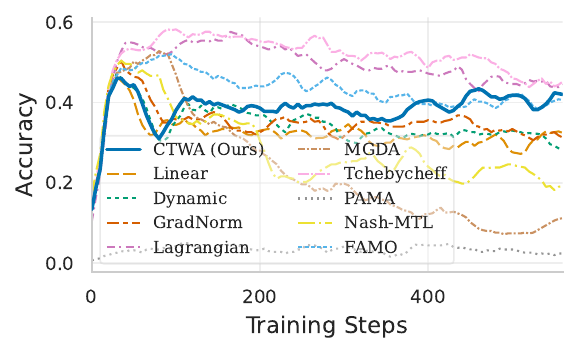} &
      \includegraphics[width=0.33\textwidth]{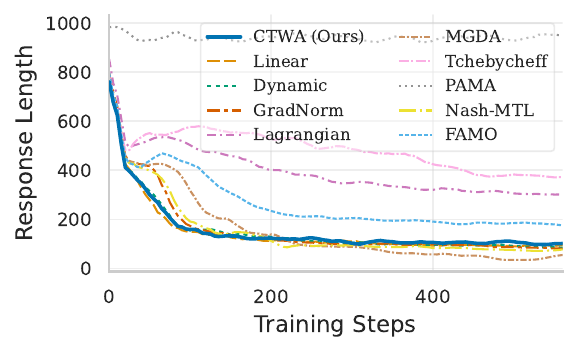} &
      \includegraphics[width=0.33\textwidth]{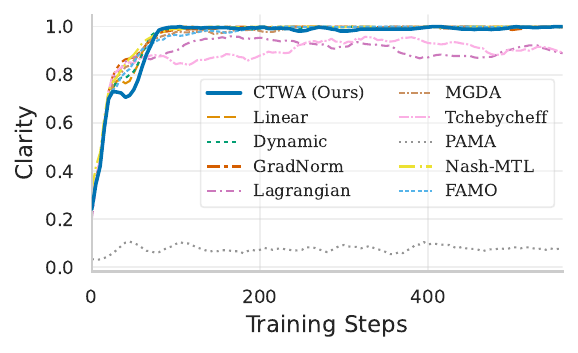}
    \end{tabular}
    \caption{Qwen2.5-1.5B-Base}
    \label{fig:reinforce-qwen2.5-base}
  \end{subfigure}

  \vspace{-2pt}

  \begin{subfigure}{\textwidth}
    \centering
    \begin{tabular}{ccc}
      \includegraphics[width=0.33\textwidth]{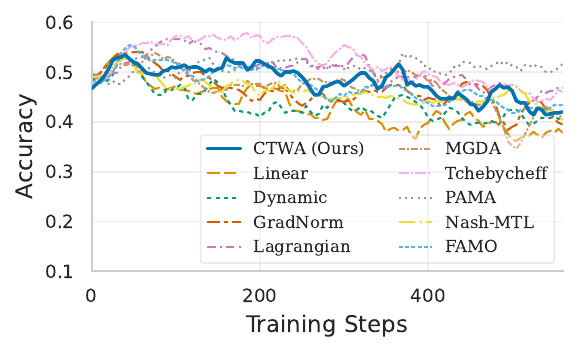} &
      \includegraphics[width=0.33\textwidth]{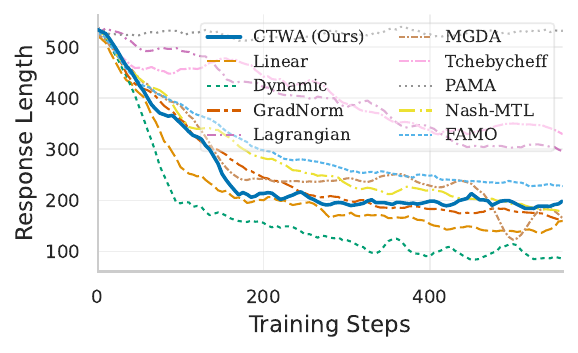} &
      \includegraphics[width=0.33\textwidth]{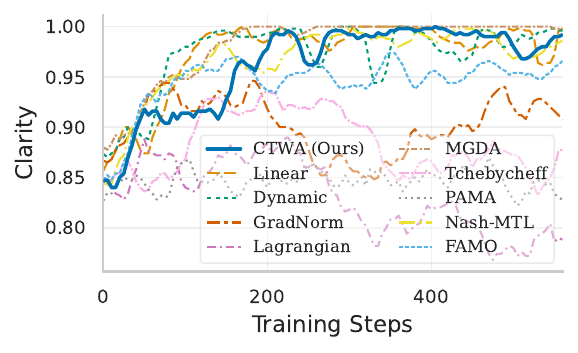}
    \end{tabular}
    \caption{Qwen2.5-1.5B-IFT}
    \label{fig:reinforce-qwen2.5-ift}
  \end{subfigure}

  \vspace{-2pt}

  \begin{subfigure}{\textwidth}
    \centering
    \begin{tabular}{ccc}
      \includegraphics[width=0.33\textwidth]{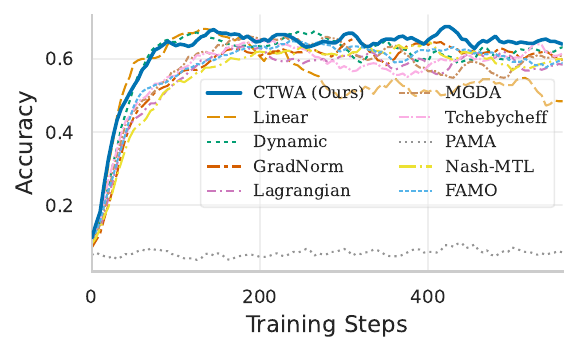} &
      \includegraphics[width=0.33\textwidth]{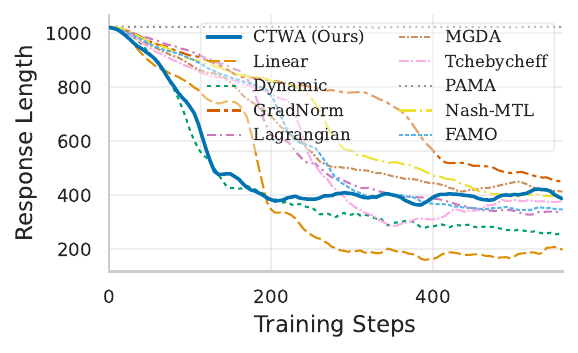} &
      \includegraphics[width=0.33\textwidth]{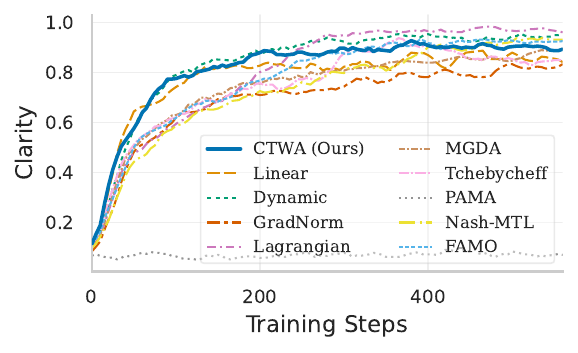}
    \end{tabular}
    \caption{Qwen3-1.7B-Base}
    \label{fig:reinforce-qwen3-base}
  \end{subfigure}
  \caption{Multi-objective alignment under different scalarization algorithms. We report test performance for three objectives: accuracy, conciseness, and clarity (left to right). We aim to train models with strong problem-solving ability (higher accuracy), computational efficiency (fewer response tokens), and clear reasoning processes (higher clarity). Results are shown for three models trained on the Math500 dataset with different scalarization algorithms adapted from MTL and MOO. \textit{Our method, CTWA, effectively mitigates cross-objective interference compared to others}. Competing methods either quickly sacrifice accuracy to achieve superficially high conciseness and clarity (e.g., GradNorm in \ref{fig:reinforce-qwen2.5-base}; Linear and Dynamic weighting in \ref{fig:reinforce-qwen2.5-ift}), or trying to maintain high accuracy while overlooking the improvement of others (e.g., Tchebycheff and Lagrangian in \ref{fig:reinforce-qwen2.5-base} and \ref{fig:reinforce-qwen2.5-ift}; PAMA in \ref{fig:reinforce-qwen2.5-ift}). In contrast, \method{} achieves strong, balanced performance across all three objectives. 
  For instance, in \ref{fig:reinforce-qwen3-base}, CTWA maintains the highest accuracy without any degradation while achieving competitive conciseness and clarity. Even when its accuracy is slightly below Lagrangian’s at step 500 (0.412 vs. 0.453 in \ref{fig:reinforce-qwen2.5-base}; 0.456 vs. 0.457 in \ref{fig:reinforce-qwen2.5-ift}), CTWA still offers a much better trade-off, with markedly better conciseness (107.4 vs. 308.0 in \ref{fig:reinforce-qwen2.5-base}; 193.3 vs. 305.3 in \ref{fig:reinforce-qwen2.5-ift}) and clarity (0.994 vs. 0.903 in \ref{fig:reinforce-qwen2.5-base}; 0.981 vs. 0.775 in \ref{fig:reinforce-qwen2.5-ift}), and outperforms other baselines.
  }
  \label{fig:main intro figure}
\end{figure*}

Existing approaches to multi-objective LLM alignment predominantly build on reinforcement fine-tuning (RFT) methods \citep{ouyang2022traininglanguagemodelsfollow, bai2022constitutionalaiharmlessnessai, lambert2025tulu3pushingfrontiers, shen2025simultaneousmultiobjectivealignmentverifiable}. These methods commonly reduce the multi-objective problem to optimizing a single scalar objective through \textit{scalarization}, applying either static weights \citep{kimiteam2025kimik15scalingreinforcement} or dynamic weights \citep{lu2026learningoptimizemultiobjectivealignment} at the reward- \citep{guo-etal-2024-controllable} or gradient-level \citep{li-etal-2025-gradient}. Despite its simplicity and popularity, we observe a persistent and underexplored failure mode: scalarized training frequently \textit{fails to improve all objectives simultaneously}. Instead, the model continues making progress on a subset of ``easy'' objectives while others degrade, a pattern we formalize as \textit{cross-objective interference}.

To investigate whether this phenomenon is an artifact of naive implementations or a fundamental limitation, we turn to the rich literature of Multi-Task Learning (MTL) and Multi-Objective Optimization (MOO). We evaluated a broad set of well-established algorithms with known convergence properties, spanning both reward- and gradient-level scalarization algorithms. 
Reward-level scalarization include classical approaches such as linear weighting \citep{10.1145/1390156.1390162} and the Lagrangian primal-dual formulation \citep{NIPS2013_3493894f}, as well as more recent methods such as smooth Tchebycheff scalarization \citep{li2026multiobjective}, PAMA \citep{10.1007/978-3-032-06078-5_15}, and dynamic reward weighting \citep{lu2026learningoptimizemultiobjectivealignment}.\footnote{We report results from the smoothed variant rather than the classical Tchebycheff scalarization \citep{10.1007/978-3-642-87563-2_5} because our experiments show that it yields more stable and stronger performance in LLM alignment.}
Gradient-level approaches aggregate per-objective gradients to a single optimal update direction via MGDA \citep{mgda}, GradNorm \citep{pmlr-v80-chen18a}, Nash-MTL \citep{pmlr-v162-navon22a}, and FAMO \citep{NEURIPS2023_b2fe1ee8}. To our knowledge, this is the first systematic evaluation of scalarization algorithms for multi-objective LLM alignment.

Our results reveal that all evaluated methods suffer from the cross-objective interference issue when applied to certain models (e.g., \Cref{fig:reinforce-qwen2.5-base} and \Cref{fig:reinforce-qwen2.5-ift}). Critically, this occurs even when objectives are not fundamentally conflicting under traditional gradient-based definitions from MTL and MOO \citep{10.1145/1014052.1014067, NEURIPS2021_9d27fdf2, shi2023recon, kim2025conflictaverse}.\footnote{Analysis is presented in \Cref{fig:cosine similarity} in Appendix \ref{appendix:experiments}.} This finding suggests the failure mode is model-dependent and runs deeper than existing MOO theories on linear scalarization \citep{lu2023multiobjective}, convexity \citep{wei2021fairnessaccuracypareto}, gradient conflict \citep{Sener2018MultiTaskLA}, and generalization tradeoffs \citep{NEURIPS2023_ddcf3462}, as these theories are developed for simplified settings rather than LLM alignment. Furthermore, we find the issue also happened on larger models (7B and 8B parameters) across different model families and datasets (\Cref{fig:large models} in Appendix~\ref{appendix:experiments}), suggesting that cross-objective interference is a general yet underexplored issue in multi-objective alignment.

To address this gap, we develop a theoretical framework analyzing multi-objective alignment through first-order improvement conditions. Beginning with the classic policy gradient algorithm \citep{NIPS1999_464d828b, hu2025reinforcestabilizingcriticfreepolicy}, we derive a reward-level local covariance law that precisely characterizes when an objective improves under scalarized alignment: \textit{when its true reward exhibits positive covariance with the scalarized score}. This explains why cross-objective interference happens: objectives that are easy to optimize can dominate the training, inducing negative covariance for harder objectives and causing them to degrade even as the overall scalarized return increases.

We then extend this analysis to clipped surrogate objectives used in modern RFT, such as GRPO \citep{shao2024deepseekmathpushinglimitsmathematical}, demonstrating that under mild conditions, the first-order covariance law remains valid despite clipping. This theoretical analysis directly motivates our method, Covariance Targeted Weight Adaptation (\method), which monitors covariance between each objective's true reward and the scalarization-induced (clipped) advantage weight, and adjusts weights to maintain positive covariance for all objectives. Through extensive experiments, we show that \method{} more effectively mitigates cross-objective interference and achieves stronger Pareto optimality compared to existing baselines.

While local covariance analysis provides conditions for objective improvement, it cannot explain why some models consistently exhibit cross-objective interference while others can optimize all objectives under identical training procedures (e.g., Qwen3-1.7B-Base in \Cref{fig:reinforce-qwen3-base}). To address this fundamental problem, we study the global geometry of scalarized RFT using the Polyak--\L{}ojasiewicz (PL) inequality, which accommodates non-convex objectives. We derive sufficient conditions under which the scalarized RFT objective satisfies a $\mu$-PL inequality, yielding a concrete, model-dependent mechanism for when cross-objective interference arises: (i) the policy assigns insufficient probability mass to the optimal trajectory, (ii) the scalarization yields weak reward margins between optimal and suboptimal trajectories, or (iii) token-level gradient contributions cancel due to an ill-conditioned Jacobian. Together, these analyses reveal that cross-objective interference is both algorithmic (covariance misalignment) and architectural (unfavorable geometry), providing actionable insights for robust multi-objective LLM alignment. In summary, our contributions are threefold:
\begin{itemize}[leftmargin=*]
\item \textbf{Systematic empirical study:} We provide the first systematic evaluation of classic MOO and MTL scalarization algorithms for LLM alignment, revealing a common cross-objective interference issue that varies across different models in multi-objective alignment.

\item \textbf{Local improvement theory and method:} We derive a reward-level local covariance law characterizing first-order conditions for objective improvement (\S\ref{subsec:reinforce-kl-cov}), extend it to clipped surrogate objectives (\S\ref{subsec:grpo-clipping}), and propose \method{} to mitigate cross-objective interference (\S\ref{sec:ctwa}).

\item \textbf{Global convergence analysis:} We analyze scalarized RFT under the PL condition, establishing sufficient conditions for global convergence and explaining cross-objective interference via model geometric properties, thus laying theoretical foundations for future work (\S\ref{sec:pl}).
\end{itemize}

\section{Related Work}
\subsection{Multi-Task Learning: Gradient Conflicts and Solutions}
MTL addresses joint training across multiple losses, where negative transfer often arises from conflicting gradients and imbalanced loss scales. Common solutions include adaptive weighting like GradNorm \citep{pmlr-v80-chen18a}, Nash-MTL \citep{pmlr-v162-navon22a} and FAMO \citep{NEURIPS2023_b2fe1ee8}, directly modifying gradients such as PCGrad \citep{NEURIPS2020_3fe78a8a}, CAGrad \citep{NEURIPS2021_9d27fdf2}, Gradient Vaccine \citep{wang2021gradient}, Recon \citep{shi2023recon}, and DB-MTL \citep{lin2025dualbalancingmultitasklearning}. While these MTL methods provide valuable insights for gradient-level control, they are developed primarily for supervised settings with convex or well-behaved loss structures.

\subsection{Multi-Objective Optimization: Scalarization and Pareto Optimality}

Classic MOO reduces multi-objective problems to single-objective optimization via scalarization, with convergence guarantees typically relying on convexity or determinism assumptions that do not transfer cleanly to LLM alignment. The foundational gradient-based approach MGDA \citep{mgda} computes a common descent direction by solving a minimum-norm problem over the convex hull of objective gradients. Recent extensions include PMGDA \citep{zhang2024pmgdapreferencebasedmultiplegradient}, which incorporates user preferences, and PAMA \citep{10.1007/978-3-032-06078-5_15}, which adapts the minimum-norm optimization to LLM alignment. While \citet{lu2023multiobjective} analyze when linear scalarization can in principle recover the full (non-convex) Pareto front, their guarantees rely on strong non-determinism and numerical stability assumptions. Stochastic MOO methods such as SMG \citep{liu2021stochasticmultigradientalgorithmmultiobjective}, MoCo \citep{fernando2023mitigating}, and SDMGrad \citep{NEURIPS2023_0e5b96f9} relax deterministic assumptions but remain primarily designed for supervised settings.

Recent work has refined Pareto stationarity concepts \citep{hu2025leveraging} and explored regimes where multiple objectives can facilitate optimization \citep{pmlr-v202-dann23a, pmlr-v267-efroni25a}. In multi-objective RL, CA-NPG \citep{10.1109/TPAMI.2025.3528944} and conflict-averse updates \citep{kim2025conflictaverse} aim to improve all objectives, but focus on KL and safety constraints rather than the covariance misalilgnment we identify. Crucially, none of these works formalizes cross-objective interference or explain its dependence on model geometry. 

\subsection{Multi-Objective LLM Alignment}
Most multi-objective alignment studies reduce multiple rewards to a scalar objective through reward-level scalarization, including static linear scalarization \citep{10.5555/3666122.3668696, zhang2025grpoleaddifficultyawarereinforcementlearning, yao2025training}, dynamic weighting \citep{lu2026learningoptimizemultiobjectivealignment}, Lagrangian relaxation \citep{moskovitz2024confronting}, or Tchebycheff scalarization and its variants \citep{steuer1983interactive, lin2024smooth, li2026multiobjective}. These methods can be extended to produce steerable policies that adapt to user preferences \citep{basaklar2023pdmorl, wang-etal-2024-arithmetic, xie-etal-2025-bone}. Alternatively, gradient-level scalarization constructs update directions directly in parameter space, such as GAPO \citep{li-etal-2025-gradient}, though such approaches remain less explored due to their high computational cost. 

\subsection{Challenges in Reinforcement Fine-tuning}

LLM RFT faces several optimization challenges beyond multi-objective settings. Lagrangian dynamics can become unstable when convexity assumptions fail \citep{FEIJER20101974}. LLM-specific challenges include vanishing gradients \citep{vanishing-gradients-reinforcement}, sensitivity to importance weighting and normalization \citep{zheng2025groupsequencepolicyoptimization, liu2026gdpogrouprewarddecouplednormalization}, and exploration difficulties \citep{jiang2025rethinkingentropyregularizationlarge}.  Recent RL scaling laws further suggest that optimization performance varies with model size \citep{khatri2025artscalingreinforcementlearning}. While prior work studies single-objective RL from these different perspectives, we explore a new optimization challenge in the multi-objective setting and answer the question of how to improve all objectives simultaneously.

\section{Preliminaries}
In this section, we establish notations used throughout and review the multi-objective RL.

\paragraph{Notation.}
Following the notations from \citet{vanishing-gradients-reinforcement}, let $\mathcal{D}$ be the dataset and $\mathcal{X}$ be a finite token vocabulary. We then define $\mathcal{X}^{L_\mathrm{in}}$ as the space of input prompts of length $L_\mathrm{in}$, and $\mathcal{X}^{L_\mathrm{out}}$ the space of output sequences of length $L_{\mathrm{out}}$. We study $M$ objectives and for a given input prompt $\vect{x} = (x_1,x_2,\cdots,x_{L_\mathrm{in}}) \in \mathcal{X}^{L_\mathrm{in}}$ and generated completion
$\vect{y} = (y_1,\dots,y_{L_{\mathrm{out}}}) \in \mathcal{X}^{L_\mathrm{out}}$, the reward function is $r: \mathcal{X}^{L_\mathrm{in}} \times \mathcal{X}^{L_\mathrm{out}} \to \R^M$.

\paragraph{RFT as a contextual bandit.} We model RFT of language models as a horizon-one (bandit) environment, where each input is a state and each output is an action that the model can take. An autoregressive language model with parameters $\theta \in \R^n$ induces a probability distribution $p_\theta(\cdot \mid \vect{x})$ over completions of length $L_{\mathrm{out}}$ via
\begin{align*}
p_\theta(\vect{y} \mid \vect{x})
&= \prod_{l=1}^{L_{\mathrm{out}}}
p_\theta\big(y_l \mid \vect{x}, \vect{y}_{\leq l-1}\big) \nonumber =\prod_{l=1}^{L_{\mathrm{out}}}\mathrm{softmax}\big(f(\vect{x},\vect{y}_{\leq l-1};\theta)\big)_{y_l},
\end{align*}
where $\vect{y}_{\leq l-1} \coloneq (y_1, y_2,\cdots,y_{l-1})$ is partial completion, $f(\vect{x},\vect{y}_{\leq l-1};\theta) \in \R^{|\mathcal{X}|}$ is the logits for the distribution of the next token at position $l$. For each objective $m\in\{1,\ldots,M\}$, define the expected objective reward
\begin{align*}
r_m(p_\theta) := \E_{\vect{x}\sim\mathcal{D}}\; \E_{\vect{y}\sim p_\theta(\cdot\mid \vect{x})}\big[r_m(\vect{x},\vect{y})\big].
\end{align*}

\paragraph{Scalarization.} We convert vector reward in $\R^M$ to a scalar score via a scalarization map $\Psi : \R^M \to \R$ and define the per-sample scalar score $s(\vect{x},\vect{y}) := \Psi(r(\vect{x},\vect{y}))$.
The induced value function for the input $\vect{x}$ thus is
\begin{equation}
V(\vect{x};\theta) \coloneq \E_{\vect{y}\sim p_\theta(\cdot\mid \vect{x})}\big[\,s(\vect{x},\vect{y})\,\big],
\label{eq:value function}
\end{equation}
and the overall RFT objective is to maximize $V(\theta) \coloneq \E_{\vect{x}\sim \mathcal{D}}\big[V(\vect{x};\theta)\big]$. If $M=1$ and $s$ is the identity function, the above objective reduces precisely to the single-objective RFT.

\section{Local Covariance Laws for Multi-Objective Policy Improvement}
\label{sec:local-law}
In this section, we establish sufficient conditions under which optimizing a scalarized score $s(\vect{x},\vect{y})$ guarantees first-order improvement in objectives $r_m(\vect{x},\vect{y})$. We begin by analyzing a KL-regularized improvement step in distribution space (\S\ref{subsec:reinforce-kl-cov}) coupled with a toy example (\S\ref{subsec:toy-example}), and extend the analysis to clipped surrogate objectives used in modern RFT (\S\ref{subsec:grpo-clipping}). We defer proof to Appendix \ref{appendix:proof}.

\subsection{KL-Regularized Policy Improvement}
\label{subsec:reinforce-kl-cov}

For fixed $\vect{x}$, write $p_{\theta;\vect{x}}(\cdot):=p_\theta(\cdot\mid\vect{x})$. For each prompt $\vect{x}$, define the KL-regularized improvement step in distribution space:
\begin{equation}
\label{eq:kl-policy-improvement}
p_{\theta;\vect{x}}^{+}\coloneq
\arg\max_{q \in \Delta(\mathcal{X}^{L_{\mathrm{out}}})}
\Big\{
\E_{\vect{y}\sim q}\big[s(\vect{x},\vect{y})\big]
-\frac{1}{\eta}\,\mathrm{KL}\big(q\|p_{\theta;\vect{x}}\big)
\Big\},
\end{equation}
where $\eta>0$ is stepsize and $\Delta(\mathcal{X}^{L_\mathrm{out}})$ is the simplex over completions.

\begin{lemma}
\label{lemma:exp-tilt}
The optimizer of \Cref{eq:kl-policy-improvement} is
$
p_{\theta;\vect{x}}^{+}(\vect{y})=\frac{p_{\theta;\vect{x}}(\vect{y})\exp\!\big(\eta\,s(\vect{x},\vect{y})\big)}{\E_{\vect{y}\sim p_{\theta;\vect{x}}}\!\Big[\exp\!\big(\eta\,s(\vect{x},\vect{y})\big)\Big]}.
$
\end{lemma}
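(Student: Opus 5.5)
The plan is to prove the formula through the Gibbs variational principle. We rewrite the objective in \Cref{eq:kl-policy-improvement} as a negative KL divergence to the tilted distribution plus a constant, and then use that KL is nonnegative and vanishes only at equality. This avoids Lagrange multipliers and handles the simplex constraint and support issues automatically.

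Fix $\vect{x}$ and write $p=p_{\theta;\vect{x}}$ and $s(\vect{y})=s(\vect{x},\vect{y})$. The set $\mathcal{X}^{L_{\mathrm{out}}}$ is finite, so $Z:=\E_{\vect{y}\sim p}[\exp(\eta s(\vect{y}))]$ is finite and strictly positive. Define the candidate $p^{+}(\vect{y}):=p(\vect{y})\exp(\eta s(\vect{y}))/Z$, which is a probability distribution on the support of $p$.

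Take any $q\in\Delta(\mathcal{X}^{L_{\mathrm{out}}})$. If $q$ is not absolutely continuous with respect to $p$, then $\mathrm{KL}(q\|p)=+\infty$ and the objective equals $-\infty$, so such $q$ never attain the maximum. If $q\ll p$, then $q\ll p^{+}$ as well, and on the support of $q$ we have $\log\frac{q}{p}=\log\frac{q}{p^{+}}+\eta s-\log Z$. Taking expectation under $q$ gives
\begin{equation*}
\E_{q}[s]-\frac{1}{\eta}\mathrm{KL}(q\|p)=\frac{1}{\eta}\log Z-\frac{1}{\eta}\mathrm{KL}(q\|p^{+}).
\end{equation*}

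By Gibbs' inequality, $\mathrm{KL}(q\|p^{+})\ge 0$, with equality if and only if $q=p^{+}$. Since $\eta>0$, the objective is therefore bounded above by $\frac{1}{\eta}\log Z$, and this bound is attained uniquely at $q=p^{+}$. This is the claimed formula. As a byproduct, the optimal value is $\frac{1}{\eta}\log\E_p[e^{\eta s}]$, which may be useful later in the paper.

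The main obstacle is only bookkeeping: handling $q$ that put mass outside the support of $p$, and using the convention $0\log 0=0$. Both are settled by the case split above.
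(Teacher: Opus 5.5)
Your proof is correct, but it takes a different route from the paper's. The paper uses a Lagrange multiplier for the normalization constraint only. It writes the stationarity condition $s(\vect{x},\vect{y})-\frac{1}{\eta}\big(\log\frac{q(\vect{y})}{p_{\theta;\vect{x}}(\vect{y})}+1\big)+\lambda=0$ for those $\vect{y}$ with $q(\vect{y})>0$, exponentiates to get $q\propto p_{\theta;\vect{x}}\,e^{\eta s}$, and fixes the constant by normalization; the support issue is left to a remark afterwards.

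That computation derives the form of the solution rather than guessing it, and it adapts to other regularizers where no identity like yours is available. As written, however, it only identifies a stationary point. When it sums the formula over all $\vect{y}$, it tacitly assumes the optimizer puts mass on every $\vect{y}$ in the support of $p_{\theta;\vect{x}}$, since the nonnegativity constraints are never handled through KKT conditions. It also never invokes strict concavity of the objective in $q$ to conclude that the stationary point is the global maximizer.

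Your identity $\E_q[s]-\frac{1}{\eta}\mathrm{KL}(q\|p)=\frac{1}{\eta}\log Z-\frac{1}{\eta}\mathrm{KL}(q\|p^{+})$ instead verifies a known candidate. In exchange it gives you, all at once, global optimality, uniqueness, correct treatment of $q\not\ll p$, and the optimal value $\frac{1}{\eta}\log Z$. Of the two arguments, yours is the more rigorous.
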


\begin{theorem}[First-order local covariance law]
\label{theorem:local-cov-law}
Assume $r_m(\vect{x},\vect{y})$ is bounded and there exists $\eta_0>0$ such that, for all $\vect{x}\in\mathcal{X}^{L_{\mathrm{in}}}$ and $ |\eta| \le \eta_0$, $\E_{\vect{y}\sim p_{\theta;\vect{x}}}\big[\exp(\eta\,s(\vect{x},\vect{y}))\big] < \infty$. Then for each objective $m\in\{1,\dots,M\}$,
\begin{align}
\label{eq:local-cov-law}
r_m(p_\theta^{+})-r_m(p_\theta)= \eta\E_{\vect{x}\sim \mathcal{D}}
\Big[
\mathrm{Cov}_{\vect{y}\sim p_{\theta}(\cdot\mid \vect{x})}
\big(r_m(\vect{x},\vect{y}),\,s(\vect{x},\vect{y})\big)
\Big]+O(\eta^2).
\end{align}
Consequently, if $\E_{\vect{x}\sim \mathcal{D}}\Big[\mathrm{Cov}_{\vect{y}\sim p_{\theta}(\cdot\mid\vect{x})}\big(r_m(\vect{x},\vect{y}),s(\vect{x},\vect{y})\big)\Big] > 0$, then $r_m(p_\theta^{+})>r_m(p_\theta)$ for sufficiently small $\eta>0$.
\end{theorem}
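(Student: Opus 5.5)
We use \Cref{lemma:exp-tilt} to write the updated policy in closed form and then Taylor-expand in $\eta$. Fix a prompt $\vect{x}$ and abbreviate $p=p_{\theta;\vect{x}}$, $s(\vect{y})=s(\vect{x},\vect{y})$, $r_m(\vect{y})=r_m(\vect{x},\vect{y})$. By \Cref{lemma:exp-tilt},
\begin{equation*}
\E_{\vect{y}\sim p^{+}_{\theta;\vect{x}}}[r_m(\vect{y})]=\frac{N(\eta)}{Z(\eta)},\qquad N(\eta):=\E_{p}\big[r_m e^{\eta s}\big],\quad Z(\eta):=\E_{p}\big[e^{\eta s}\big].
\end{equation*}
The plan is to show that $g(\eta):=N(\eta)/Z(\eta)$ is twice differentiable near $0$. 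At $\eta=0$ we have $g(0)=\E_p[r_m]$. Its derivative there is
\begin{equation*}
g'(0)=\frac{N'(0)Z(0)-N(0)Z'(0)}{Z(0)^2}=\E_p[r_m s]-\E_p[r_m]\E_p[s]=\mathrm{Cov}_p(r_m,s).
\end{equation*}
A second-order Taylor expansion with remainder then gives $g(\eta)-g(0)=\eta\,\mathrm{Cov}_p(r_m,s)+\tfrac{\eta^2}{2}g''(\xi)$ for some $\xi$ between $0$ and $\eta$.

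The technical step is justifying the differentiation under the expectation and controlling the remainder. The moment-generating-function assumption gives $Z(\eta)<\infty$ for $|\eta|\le\eta_0$. By the standard MGF argument, $Z$ is then analytic on $(-\eta_0,\eta_0)$, and all moments $\E_p[|s|^k e^{\eta s}]$ are finite for $|\eta|<\eta_0$. Boundedness $|r_m|\le B$ gives $|N^{(k)}(\eta)|\le B\,\E_p[|s|^k e^{\eta s}]$, so $N$ is smooth on the same interval. Dominated convergence justifies differentiating under $\E_p$ on, say, $|\eta|\le\eta_0/2$. Also $Z(\eta)>0$ and $Z$ is continuous, with $Z(0)=1$. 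Hence $g''$ is continuous and bounded on $[-\eta_0/2,\eta_0/2]$. A cleaner route is to note that $g''(\eta)$ is the third joint cumulant $\kappa(r_m,s,s)$ under the tilted law $p^{(\eta)}$, and bound it by $O(B\,\E_{p^{(\eta)}}[s^2])$.

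We then average over $\vect{x}\sim\mathcal{D}$ to get \Cref{eq:local-cov-law}, using $r_m(p^+_\theta)=\E_{\vect{x}}[g_{\vect{x}}(\eta)]$. This step needs the $O(\eta^2)$ constant to be uniform in $\vect{x}$, i.e.\ $\sup_{\vect{x}}\sup_{|\eta|\le\eta_0/2}|g_{\vect{x}}''(\eta)|<\infty$, and this is the main obstacle. The hypothesis is stated "for all $\vect{x}$" but only pointwise finiteness. Two observations resolve it in the paper's setting. First, $\mathcal{X}^{L_{\mathrm{out}}}$ is finite, so each $g_{\vect{x}}$ is a finite ratio of exponential sums. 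Second, if $\mathcal{X}^{L_{\mathrm{in}}}$ is finite (as in the Notation paragraph), a maximum over finitely many prompts suffices. If instead only a finite MGF is available, I would first try to bound $|g''_{\vect{x}}(\eta)|\le 4B\,\E_{p}[s^2e^{\eta s}]/Z(\eta)$. Then I would impose or derive a uniform integrable bound, e.g.\ $\E_{\vect{x}}\sup_{|\eta|\le\eta_0/2}\E_p[s^2 e^{\eta s}]<\infty$ together with $Z\ge e^{\eta\E_p s}$ by Jensen. This dominates the remainder and lets Fubini pass the expansion through $\E_{\vect{x}}$. In the common case of bounded $s$ (bounded rewards and continuous $\Psi$), this is immediate with constant $2B\|s\|_\infty^2$.

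Finally, the "consequently" part follows directly. Write $C:=\E_{\vect{x}}[\mathrm{Cov}(r_m,s)]>0$ and let $K$ be the remainder constant. Then $r_m(p^+_\theta)-r_m(p_\theta)\ge \eta C-K\eta^2>0$ for all $0<\eta<\min(\eta_0/2,\,C/K)$.
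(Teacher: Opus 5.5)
Your proposal is correct and follows the paper's route: you use the exponential-tilt form from Lemma~\ref{lemma:exp-tilt}, expand to first order in $\eta$ so that the linear coefficient is $\mathrm{Cov}_p(r_m,s)$, and then average over prompts. The paper gets there by expanding $p^{+}_{\theta;\vect{x}}(\vect{y})$ and $1/Z_\eta$ pointwise rather than differentiating the ratio $N/Z$, and it never justifies that the $O(\eta^2)$ remainder is uniform in $\vect{x}$; your argument from the finiteness of $\mathcal{X}^{L_{\mathrm{out}}}$ and $\mathcal{X}^{L_{\mathrm{in}}}$ fills that gap.
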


\begin{remark}
\Cref{eq:local-cov-law} tells that optimizing the scalar score $s$ improves objective $m$ at first order when completions with higher $s(\vect{x},\vect{y})$ also tend to have higher $r_m(\vect{x},\vect{y})$, leading to positive covariance averaged across prompts. Conversely, negative covariance brings a local tradeoff where increasing $s$ necessarily decreases $r_m$ at first order. 

Because covariance is computed on-policy, its sign can flip over training as $p_\theta$ moves. The scalarized update may improve objective $m$ early in training but degrade it later (e.g., accuracy in \Cref{fig:reinforce-qwen2.5-base}), even when objectives are not inherently conflicting on the global Pareto front.
For linear scalarization $s_\lambda=\sum_j \lambda_j r_j$, the condition becomes $\mathrm{Cov}(r_m,s_\lambda)=\sum_j \lambda_j \mathrm{Cov}(r_m,r_j)$, making cross-objective interference issue more concrete: emphasizing an easy objective can flip the sign for a harder one when their rewards are weakly or negatively correlated on-policy.
\end{remark}

\subsection{A Two-Mode Toy Example: When Scalarization Hurts an Objective}
\label{subsec:toy-example}
We analyze a minimal setting where the completion distribution places most of its mass on two modes (e.g., two distinct styles or solutions that the model frequently samples). Formally, for a fixed prompt $x$, we idealize the completion space by two canonical outputs $\mathcal{Y}(\vect{x})=\{\vect{y}_{\mathrm{good}},\ \vect{y}_{\mathrm{bad}}\}$, where ``good'' and ``bad'' are defined with respect to a particular objective $m$, not the scalar score. Let $p_t \coloneq p_{\theta_t;\vect{x}}(\vect{y}_{\mathrm{bad}})$ so that $1-p_t = p_{\theta_t;\vect{x}}(\vect{y}_{\mathrm{good}})$ and define 
\begin{align*}
&s_{\mathrm{good}}\coloneq s(\vect{x},\vect{y}_{\mathrm{good}}),\;
s_{\mathrm{bad}}\coloneq s(\vect{x},\vect{y}_{\mathrm{bad}}),\;
r_{\mathrm{good}}\coloneq r_m(\vect{x},\vect{y}_{\mathrm{good}}),\;
r_{\mathrm{bad}}\coloneq r_m(\vect{x},\vect{y}_{\mathrm{bad}}).
\end{align*}

\paragraph{Optimizing $s$ concentrates probability on what $s$ favors.}
By Lemma~4.1, after one KL-regularized improvement step, the probability of $\vect{y}_{\mathrm{bad}}$ becomes $p_{t+1}=(p_t\,e^{\eta s_{\mathrm{bad}}}) / 
(p_t\,e^{\eta s_{\mathrm{bad}}}+(1-p_t)\,e^{\eta s_{\mathrm{good}}})$,
and the log-odds update as
\begin{equation}
\label{eq:log-odds}
\log\frac{p_{t+1}}{1-p_{t+1}}=\log\frac{p_t}{1-p_t}
+\eta\big(s_{\mathrm{bad}}-s_{\mathrm{good}}\big).
\end{equation}
If the proxy score ranks the bad mode higher, $
s_{\mathrm{bad}}>s_{\mathrm{good}}$, then the log-odds increase linearly in $t$, and $p_t\to 1$ (i.e., the update drives mass toward $\vect{y}_{\mathrm{bad}}$).

\paragraph{Biased objective can decrease monotonically over steps.}
Assume the objective $m$ prefers $\vect{y}_{\mathrm{good}}$ mode, i.e., $r_{\mathrm{good}} > r_{\mathrm{bad}}$. Then the expected objective for input $\vect{x}$ is
\begin{align*}
\E_{\vect{y}\sim p_{\theta_t}(\cdot\mid\vect{x})}\big[r_m(\vect{x},\vect{y})\big]=(1-p_t)\,r_{\mathrm{good}} + p_t\,r_{\mathrm{bad}} =r_{\mathrm{good}} - p_t\,(r_{\mathrm{good}}-r_{\mathrm{bad}}),
\end{align*}
which is strictly decreasing in $p_t$. Combining with \Cref{eq:log-odds} yields a simple cross-objective interference: if the scalarized score favors the mode that is worse for objective $m$ ($s_{\mathrm{bad}}>s_{\mathrm{good}}$) while objective $m$ prefers the other mode ($r_{\mathrm{good}}>r_{\mathrm{bad}}$), then training increases $p_t$ at each step and $\E[r_m(\vect{x},\vect{y})]$ decreases monotonically toward $r_{\mathrm{bad}}$.

\paragraph{The covariance law flags the failure.}
In this two-mode example, the conditional covariance has a closed form:
\begin{align}
\label{eq:toy-cov}
\mathrm{Cov}_{\vect{y}\sim p_{\theta_t}(\cdot\mid\vect{x})}
\big(r_m(\vect{x},\vect{y}),s(\vect{x},\vect{y})\big) =
p_t(1-p_t)\,(r_{\mathrm{good}}-r_{\mathrm{bad}})\,(s_{\mathrm{good}}-s_{\mathrm{bad}}).
\end{align}
Under the interference configuration $s_{\mathrm{bad}}>s_{\mathrm{good}}$ and $r_{\mathrm{good}}>r_{\mathrm{bad}}$, \Cref{eq:toy-cov} is negative when $p_t\in(0,1)$. Therefore, our local covariance law predicts that, for sufficiently small $\eta$, the KL-regularized improvement step that increases $s$ must decrease the true objective $r_m$ at first order.

\subsection{Extension to Clipped Surrogate Objectives}
\label{subsec:grpo-clipping}

The two-mode example is illustrative, but modern RFT methods such as GRPO and PPO optimize clipped surrogate objectives rather than the idealized value function. In this section, we show that the derived covariance law continues to hold under mild conditions. The key observation is that clipping does not introduce a fundamentally different improvement mechanism, and it only removes a subset of weighted logit-gradient terms from the update. Therefore, when the removed gradient mass is sufficiently small, the first-order improvement guarantee is preserved. For simplicity, we defer the full formal analysis to Appendix~\ref{appendix:analysis} and present the simplified corollary below.

\begin{corollary}[Clipping robustness (Simplified)]
\label{cor:clipping-robustness-simplified}
Suppose the unclipped surrogate update has a positive first-order improvement margin for objective $m$, meaning that its update direction locally points toward increasing $r_m$. Let the clipping distortion be the Fisher-weighted gradient mass of the weighted logit-gradient terms removed by clipping. If this distortion is smaller than the unclipped improvement margin, then the clipped update still satisfies $r_m(\theta^{+})\ge r_m(\theta)$ for sufficiently small learning rate $\eta>0$.
\end{corollary}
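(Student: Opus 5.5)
We will argue by a first-order Taylor expansion of $r_m$ along the clipped update direction, splitting that direction into the unclipped direction minus the part removed by clipping. Fix the current parameter $\theta$ and write the unclipped surrogate gradient as a weighted sum of logit-gradient terms,
\begin{align*}
g_{\mathrm{unc}} = \E_{\vect{x}}\,\E_{\vect{y}\sim p_{\theta;\vect{x}}}\Big[\sum_{l} w_l(\vect{x},\vect{y})\,\nabla_\theta \log p_\theta(y_l\mid \vect{x},\vect{y}_{\le l-1})\Big],
\end{align*}
where $w_l$ is the (advantage-based) weight induced by the scalar score $s$. Clipping zeroes the terms whose ratio leaves $[1-\epsilon,1+\epsilon]$ in the direction of the advantage. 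Accordingly, we write $g_{\mathrm{clip}} = g_{\mathrm{unc}} - g_{\mathrm{rem}}$, where $g_{\mathrm{rem}}$ collects exactly the removed weighted terms. The update is $\theta^+=\theta+\eta\, G\, g_{\mathrm{clip}}$, with $G$ the identity or the Fisher preconditioner used to define the "Fisher-weighted" mass.

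\textbf{Step 1 (expansion).} Assuming $r_m(\theta)$ is $C^2$ with bounded Hessian near $\theta$ (bounded rewards and smooth softmax logits suffice), we have
\begin{align*}
r_m(\theta^+) - r_m(\theta) = \eta\,\langle \nabla r_m(\theta), G g_{\mathrm{unc}}\rangle - \eta\,\langle \nabla r_m(\theta), G g_{\mathrm{rem}}\rangle + O(\eta^2).
\end{align*}
The first inner product is the unclipped improvement margin $\Delta_m>0$. By the score-function identity $\nabla r_m(\theta)=\E[r_m\nabla\log p_\theta]$, which is the same computation underlying \Cref{theorem:local-cov-law}, this margin reduces in the unpreconditioned tabular case to the covariance term $\E_{\vect{x}}[\mathrm{Cov}(r_m,s)]$.

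\textbf{Step 2 (bounding the removed term).} We apply Cauchy--Schwarz in the Fisher geometry:
\begin{align*}
|\langle \nabla r_m, G g_{\mathrm{rem}}\rangle| \le \|\nabla r_m\|_{G}\,\|g_{\mathrm{rem}}\|_{G}.
\end{align*}
We then define the clipping distortion $\delta_m$ as this product. Equivalently, $\delta_m$ is $\|g_{\mathrm{rem}}\|_{F}$ after normalizing $\nabla r_m$. Since $r_m$ is bounded, $\nabla r_m = \E[(r_m-b)\nabla\log p]$ has Fisher norm at most $\sup|r_m-b|$, so $\delta_m$ is controlled by the Fisher-weighted mass of the removed terms alone. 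This matches the quantity named in the corollary.

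\textbf{Step 3 (conclusion).} Combining the two steps gives
\begin{align*}
r_m(\theta^+)-r_m(\theta)\ge \eta(\Delta_m-\delta_m) - C\eta^2.
\end{align*}
Under the hypothesis $\delta_m<\Delta_m$, the right side is positive whenever $\eta<(\Delta_m-\delta_m)/C$. This yields $r_m(\theta^+)\ge r_m(\theta)$, in fact strictly.

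\textbf{Main obstacle.} The delicate point is that the clipping set itself depends on the ratio $p_\theta/p_{\theta_{\mathrm{old}}}$, so $g_{\mathrm{clip}}$ is only piecewise smooth. We handle this by evaluating at $\theta=\theta_{\mathrm{old}}$, where all ratios equal $1$, and treating the clipped gradient as the gradient taken at the end of the inner PPO/GRPO epoch. Over a single small step the active set is frozen, so the Taylor remainder applies on each piece. Getting a uniform constant $C$ then requires a bound on the Hessian of $r_m$ along the segment, which we obtain from boundedness of $r_m$ and of the logit Jacobian.
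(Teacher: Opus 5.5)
Your proof follows the paper's route: you split the clipped direction into the unclipped one minus the removed terms, bound the removed part by Cauchy--Schwarz in the $F^{-1/2}$ geometry (Corollary~\ref{cor:clipping-robustness}), and close with the descent bound of Lemma~\ref{lemma:universal-lower-bound}; your strict hypothesis also gives the strictly positive clipped margin that Theorem~\ref{theorem:fisher-cov-sufficient} needs, which the paper's non-strict version glosses over. Two asides are wrong but not load-bearing: the margin equals $\E_{\vect{x}}[\mathrm{Cov}(r_m,s)]$ only for the Fisher-preconditioned (natural-gradient) tabular update, as in Corollary~\ref{cor:categorical-reduction}, while the unpreconditioned one gives the per-prompt sum $\sum_{\vect{y}} p_\theta(\vect{y}\mid\vect{x})^2(r_m-\bar r_m)(s-\bar s)$; and the piecewise smoothness of the clipped surrogate never enters, since you only Taylor-expand the smooth $r_m$ along a fixed direction (moreover, at $\theta=\theta_{\mathrm{old}}$ nothing is clipped, so evaluating there makes the statement vacuous).
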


We also find in the categorical bandit case, this condition reduces to the same covariance form as \Cref{theorem:local-cov-law} (see Corollary~\ref{cor:categorical-reduction}). Importantly, this clipping distortion can be carefully controlled in practice through techniques such as learning rate scheduling or reward normalization, keeping clipping distortion small enough for the covariance law to hold. We empirically validate this in \Cref{fig:grpo result}, where GRPO yields results consistent with the clipping-free REINFORCE findings reported in \Cref{fig:main intro figure}, demonstrating the robustness of our covariance law.

\section{Covariance Targeted Weight Adaptation}
\label{sec:ctwa}
Motivated by the above analysis, we propose \textit{Covariance Targeted Weight Adaptation (\method)}, a plug-and-play controller that adapts scalarization weights to maintain sufficiently large covariance between each objective reward $r_m(\vect{x},\vect{y})$ and the clipped advantage weight $w(\vect{x}, \vect{y})$ induced by the underlying PPO-style update. \Cref{alg:ctwa} outlines the full procedure with GRPO as the example.

Following \citet{lu2026learningoptimizemultiobjectivealignment}, we use weighted sum as the scalarization function in our experiments, $
s_{\lambda}(\vect{x},\vect{y}) \coloneq \sum_{m=1}^M \lambda_m\, r_m(\vect{x},\vect{y})$. Let $w^{\mathrm{clip}}_{k,l}(\theta)$ denote the tokenwise clipped advantage weight (defined in Corollary \ref{cor:clipping-robustness}). We compute it under the updated policy $\theta$ on the sampled $K$ completions. We aggregate $w^{\mathrm{clip}}_{k,l}(\theta)$ over tokens to obtain a completion-level weight $w\left(\vect{x},\vect{y}^{(k)};\theta\right)
\coloneq \frac{1}{L_{\mathrm{out}}}\sum_{l=1}^{L_{\mathrm{out}}} w^{\mathrm{clip}}_{k,l}(\theta)$.
For each prompt $\vect{x}$ and objective $m$, \method{} first computes the within-prompt empirical covariance $\widehat{\mathrm{Cov}}_m(\vect{x})
\coloneq \mathrm{Cov}_{k=1\cdots K}\!\Big( r_m(\vect{x},\vect{y}^{(k)}), w(\vect{x},\vect{y}^{(k)};\theta)\Big)$, then averages across prompts in the batch $c_m \coloneq \E_{\vect{x}\ \text{in batch}}\big[\widehat{\mathrm{Cov}}_m(\vect{x})\big]$. 

In practice, \method{} treats the covariance as a diagnostic of whether the induced update direction benefits objective $m$, and uses it to adjust $\lambda_m$ for the next policy update. To enforce a covariance safety margin, it runs an exponential moving average (EMA) of this signal and increases the scalarization weight when the covariance falls below a predefined threshold. Specifically, we maintain an EMA of the batch covariance, $\bar{c}_m \leftarrow (1-\tau)\bar{c}_m + \tau c_m$ and define a nonnegative deficit $\delta_m \coloneq \big[c^\ast_m - \bar{c}_m\big]_+$. To ensure $\lambda_m>0$ and obtain stable multiplicative updates, we parameterize $\lambda_m=\exp(u_m)$ and update in log-space: $u_m \leftarrow u_m + \eta_\lambda\,\delta_m,\; \lambda_m \leftarrow \exp(u_m)$.

\subsection{Experiments}

\begin{wrapfigure}{ht}{0.42\columnwidth}
  \vspace{-1.0em}
  \centering
  \includegraphics[width=\linewidth]{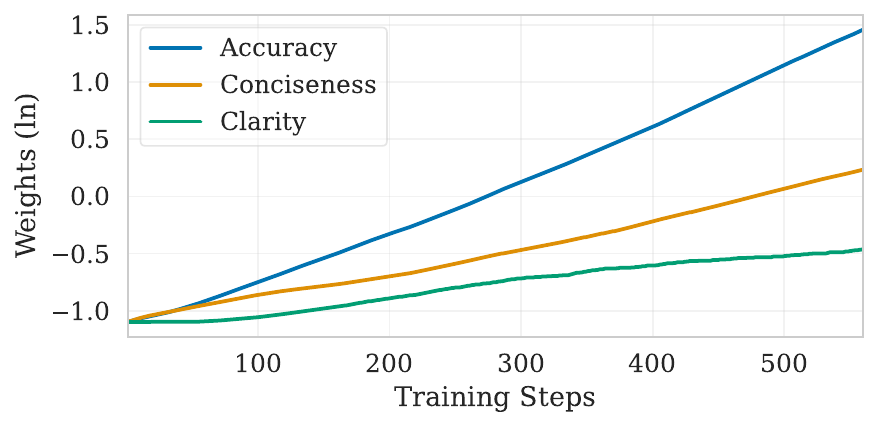}
  \caption{Scalarization weights in log space ($u_m$) during training of Qwen3-1.7B-Base.}
  \vspace{-1.0em}
  \label{fig:covariance-weights}
\end{wrapfigure}

We evaluate our proposed method against existing baselines on the Math500 dataset \citep{lightman2024lets} using different pretrained models including Qwen2.5-1.5B-Base and its instruction-finetuned version Qwen 2.5-1.5B-IFT \citep{qwen2025qwen25technicalreport}, and Qwen3-1.7B-Base \citep{yang2025qwen3technicalreport}. We optimize three objectives: accuracy, conciseness, and clarity. We assume all objectives are equally important and initialize their weights $\lambda_m$ to $[0.333, 0.333, 0.334]$ respectively. We set the predefined covariance targets for the three objectives to $c^\ast_m=[0.15, 0.08, 0.08]$. Each objective is evaluated using heuristic rules that produce verifiable rewards (0 or 1). For easier analysis, we report accuracy and clarity as reward scores, and conciseness as response length. The main results, trained with the REINFORCE algorithm without clipping, are shown in \Cref{fig:main intro figure}. We further evaluate \method{} under clipped GRPO and on a different model family, SmolLM2-1.7B \citep{allal2025smollm2smolgoesbig}. As shown in Appendix~\ref{appendix:experiments}, these additional experiments demonstrate that \method{} more effectively mitigates cross-objective interference than the baselines, resulting in superior Pareto optimality. Below, we provide detailed experiment analysis.\footnote{We provide training parameters for all algorithms in \Cref{table:hyperparameters}.}

By our experiments, we want to answer the following two questions: (1) \textit{Is covariance the missing quantity for understanding and mitigating cross-objective interference}, and (2) \textit{Does \method{} improves performance by controlling this interference, rather than by favoring particular objectives?}

\begin{figure}[t]
  \centering
  \includegraphics[width=\textwidth]{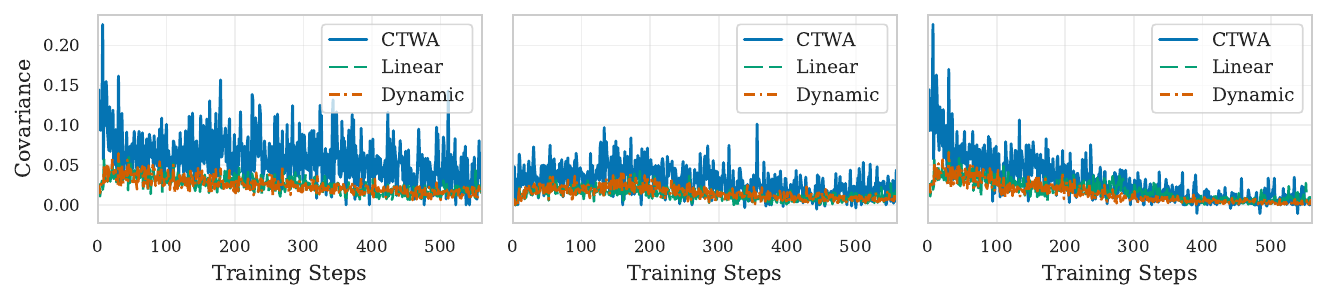}
  \caption{Covariance $c_m$ between reward and clipped advantage weight for each objective during training of Qwen3-1.7B-Base.}
  \label{fig:covariance}
\end{figure}

\begin{figure}[t]
  \centering
  \setlength{\tabcolsep}{1pt}
  \renewcommand{\arraystretch}{0}

  \begin{subfigure}{\textwidth}
    \centering
    \begin{tabular}{ccc}
      \includegraphics[width=0.33\textwidth]{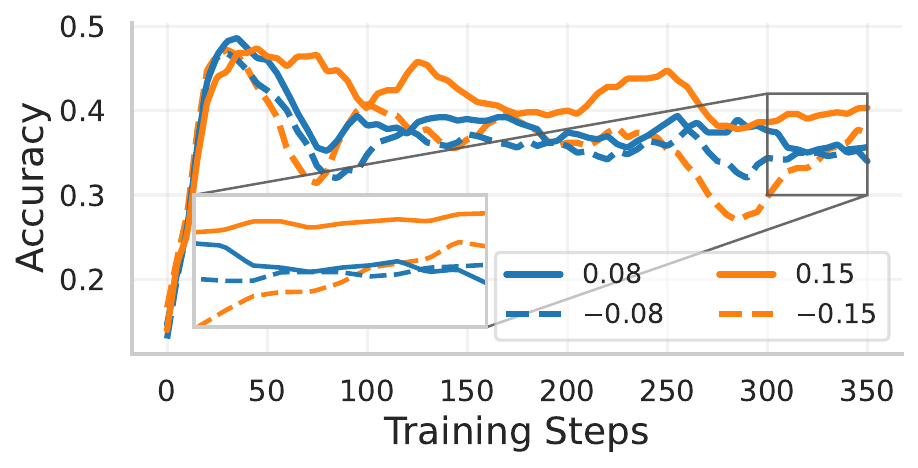} &
      \includegraphics[width=0.33\textwidth]{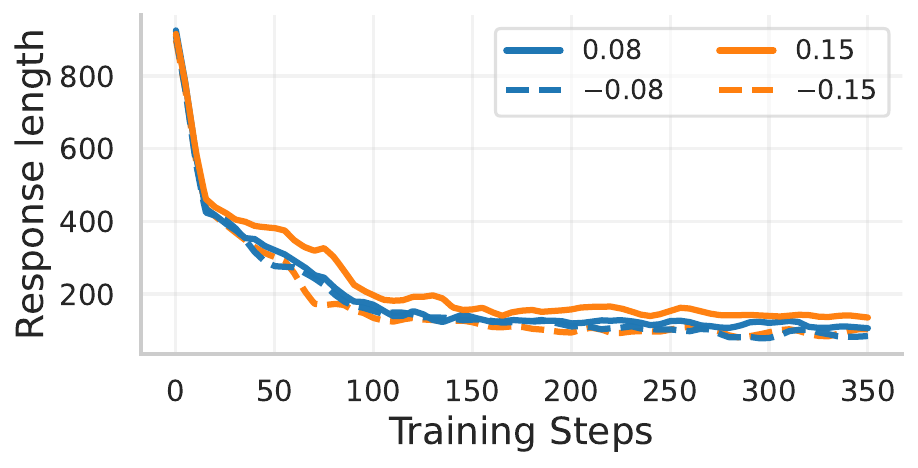} &
      \includegraphics[width=0.33\textwidth]{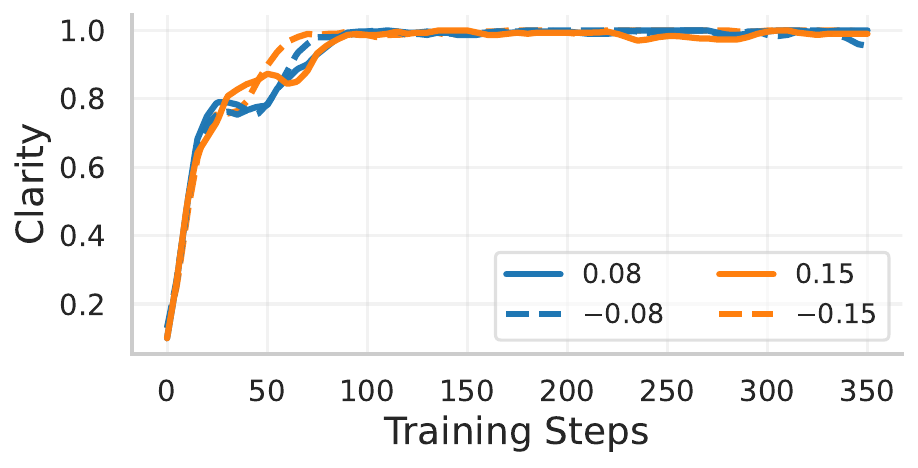}
    \end{tabular}
    \caption{Positive and Negative Covariance Targets}
    \label{fig:sensitivity-pos-neg}
  \end{subfigure}

  \vspace{-2pt}

  \begin{subfigure}{\textwidth}
    \centering
    \begin{tabular}{ccc}
      \includegraphics[width=0.33\textwidth]{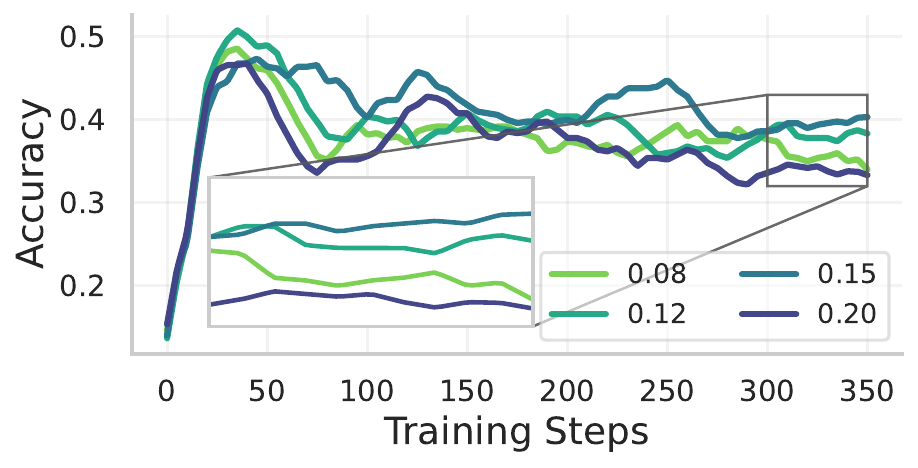} &
      \includegraphics[width=0.33\textwidth]{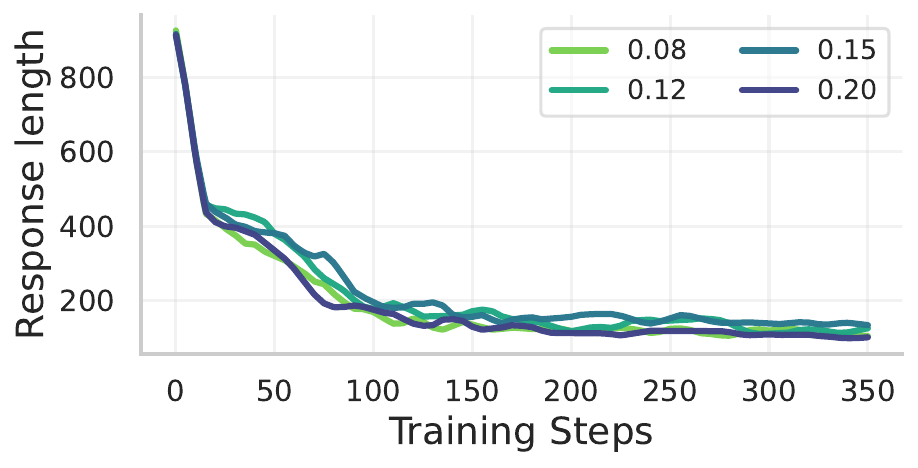} &
      \includegraphics[width=0.33\textwidth]{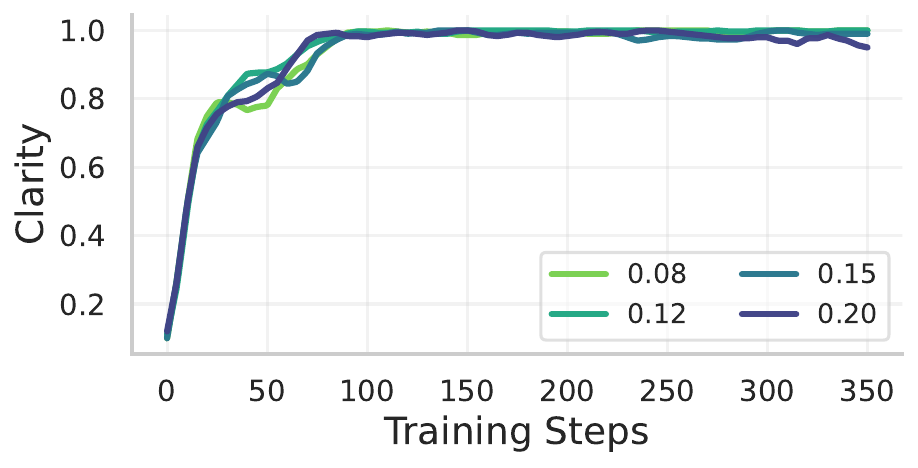}
    \end{tabular}
    \caption{Positive Asymmetric Covariance Targets}
    \label{fig:sensitivity-pos}
  \end{subfigure}

  \caption{Sensitivity analysis of the covariance targets $c_m^\ast$ for \method{} trained with REINFORCE on Qwen2.5-1.5B-Base. We vary only the covariance target for accuracy while fixing the targets for conciseness and clarity at the default value of 0.08. Clearly, \textit{positive covariance targets consistently outperform negative ones, and \method{} remains robust across a broad range of positive settings.}}
  \label{fig:sensitivity}
\end{figure}

To answer the first question, we analyze the behavior of \method{} throughout training by tracking both the objective weights (\Cref{fig:covariance-weights}) and the covariance $c_m$ for each objective (\Cref{fig:covariance}). As shown in \Cref{fig:covariance-weights}, the weight for accuracy grows exponentially faster than those for conciseness and clarity, suggesting that accuracy is a harder objective to optimize and thus requires more attention from the scalarization. More importantly, from the covariance perspective in \Cref{fig:covariance}, baselines that suffer from cross-objective interference consistently exhibit lower covariance than \method{} across all objectives.\footnote{We consider these baselines because they all apply scalarization at the reward level.} This consistent gap provides direct evidence that covariance is closely tied to alignment performance and should therefore be taken into account when designing methods to improve it.

To address the second question, we further test whether assigning a higher covariance target to accuracy (0.15) than to conciseness and clarity (0.08) truly mitigates interference, rather than merely favoring accuracy by design. We therefore perform a sensitivity analysis over a broad range of covariance targets, including positive and negative, symmetric and asymmetric settings, as shown in \Cref{fig:sensitivity}. We observe two critical patterns. First, negative covariance targets consistently underperform positive ones on accuracy, and the performance gap becomes larger as the magnitude difference increases, e.g., comparing 0.15 versus -0.15 to 0.08 versus -0.08 (\Cref{fig:sensitivity-pos-neg}). Second, \method{} remains robust across a reasonably wide range of positive targets, with larger positive targets generally yielding better performance (\Cref{fig:sensitivity-pos}). The strongest results are achieved with positive asymmetric settings such as $[0.12, 0.08, 0.08]$ and $[0.15, 0.08, 0.08]$, while performance remains competitive across the broader range from $[0.08, 0.08, 0.08]$ to $[0.20, 0.08, 0.08]$. Notably, accuracy improves as the target increases from 0.08 to 0.15, but declines at an excessively large target such as 0.20, which appears difficult to attain in practice (as also reflected in \Cref{fig:covariance}). Overall, these results suggest that the gains of \method{} do not come from merely upweighting accuracy, but rather from maintaining a proper positive covariance for objectives that mitigates cross-objective interference.

We also want to highlight that \method{} is more computationally efficient than those strong baselines such as dynamic weighting or MGDA, as it avoids computing per-objective gradients or performing projected gradient descent at each step. The required covariance components can be computed alongside the standard RFT process with negligible overhead.

\section{Global Convergence of Multi-Objective Alignment via $\mu$-PL Condition}
\label{sec:pl}

While Section~\ref{sec:local-law} characterizes when individual objectives improve locally, it does not explain the model dependence observed in practice. We therefore move from local improvement to the global optimization geometry of the scalarization-induced value function \(V(\vect{x};\theta)\). Because $V(\vect{x};\theta)$ is highly non-convex and the autoregressive parameterization induces a non-convex feasible policy set, classical convex analysis is inapplicable. We instead study \textit{when the scalarized RFT objective satisfies the Polyak--\L{}ojasiewicz (PL) condition}, a benign non-convex structure that yields meaningful convergence guarantees. This perspective helps explain a second mechanism behind cross-objective interference: even when the scalarized value provides a well-defined ascent direction, optimization can stall near suboptimal solutions that favor easily optimized objectives if the model geometry is unfavorable (small $\mu$). We now introduce useful assumptions for the theorem. The formal definition and proof are provided in Appendix~\ref{appendix:proof-pl}.

\begin{assumption}[Bounded score and unique optimal completion]
\label{assumption:bounded-score}
There exists $B>0$ such that $|s(\vect{x},\vect{y})|\le B$ for all $\vect{y}\in\mathcal{X}^{L_{\mathrm{out}}}$.
Moreover, there exists a unique maximizer $\vect{y}^\ast = \arg\max_{\vect{y}} s(\vect{x},\vect{y})$ and a margin $\Delta_s>0$ such that
$s(\vect{x},\vect{y}^\ast)-s(\vect{x},\vect{y})\ge \Delta_s$ for all $\vect{y}\neq \vect{y}^\ast$.
\end{assumption}

\begin{assumption}[Non-saturation for suboptimal policy]
\label{assumption:ns}
There exists $\epsilon\in(0,1)$ such that for every suboptimal parameter $\theta$
(i.e., $V(\vect{x};\theta)<V(\vect{x};\theta^\ast)$),
token probabilities are bounded away from $1$: $p_\theta\big(y_l \mid \vect{x}, \vect{y}_{\le l-1}\big)\le 1-\epsilon, \;\forall l\in\{1,\dots,L_{\mathrm{out}}\},\; \forall y_l\in\mathcal{X}$.
\end{assumption}

\begin{assumption}[Aligned token gradients]
\label{assumption:aligned-gradients}
There exists $c\in(0,1]$ such that for any $\vect{y}$ and any positions $l,k\in\{1,\dots,L_{\mathrm{out}}\}$ with
$v_l(\vect{x},\vect{y};\theta)\neq 0$ and $v_k(\vect{x},\vect{y};\theta)\neq 0$,
$$
\cos\big(v_l(\vect{x},\vect{y};\theta),v_k(\vect{x},\vect{y};\theta)\big)\ge c, \;\text{where } v_l(\vect{x},\vect{y};\theta)\coloneq J_{f(\vect{x},\vect{y}_{\le l-1};\theta)}^\top\Big(\vect{e}_{y_l}-p_\theta(\cdot\mid\vect{x},\vect{y}_{\le l-1})\Big)\in\R^n.
$$
\end{assumption}

$v_l(\vect{x},\vect{y};\theta)$ is the token-level logit-gradient contribution with $f$ the logit map and $J_f$ its Jacobian. These assumptions have clear interpretations in our setting. Assumption~\ref{assumption:bounded-score} is natural because all three rewards are bounded verifiable signals. Assumption~\ref{assumption:ns} matches practical RFT setups where KL or entropy regularization prevents token probabilities from saturating at $1$. Assumption~\ref{assumption:aligned-gradients} is a bit stronger and model-dependent but has a direct geometric meaning: token-level gradient contributions along a favorable trajectory should be sufficiently aligned, so that they reinforce rather than cancel each other.

\begin{theorem}($\mu$-PL condition of multi-objective alignment)
\label{theorem:pl for raw}
For parameters $\theta \in \R^n$ and input $\vect{x}\in\mathcal{X}^{L_\mathrm{in}}$, define the scalarized multiobjective value function $V(\vect{x},\theta)$ as in \Cref{eq:value function}. If the scalarization function $s$ and policy $\theta$ meet the Assumptions \ref{assumption:bounded-score}-\ref{assumption:aligned-gradients}, then it holds that:
\begin{align*}
\frac{1}{2}\|\nabla_\theta V(\vect{x};\theta)\|^2 \ge \mu(V(\vect{x};\theta^\ast) - V(\vect{x};\theta)),\;
\text{with } \mu = \frac{1}{2B}\Big(\frac{p_\theta(\vect{y}^\ast\mid\vect{x})}{1-p_\theta(\vect{y}^\ast\mid \vect{x})}s(\vect{x},\vect{y}^\ast)\gamma - 2B\sigma_\mathrm{max}\Big). \nonumber
\end{align*}
$\gamma = \sqrt{cL_\mathrm{out}\sigma_\mathrm{min}^2 \epsilon^2\frac{|\mathcal{X}^{L_\mathrm{out}}|}{|\mathcal{X}^{L_\mathrm{out}}|-1}}$. $\sigma_\mathrm{max}$ and $\sigma_\mathrm{min}$ are the largest and smallest singular value of $J_{f(\vect{x}, \vect{y}_{\leq l-1};\theta)}$.
\end{theorem}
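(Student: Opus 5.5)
The plan is to lower-bound $\|\nabla_\theta V(\vect{x};\theta)\|$ by a quantity proportional to the suboptimality gap $V(\vect{x};\theta^\ast)-V(\vect{x};\theta)$ and then square. I start from the policy-gradient identity for the contextual bandit,
\begin{align*}
\nabla_\theta V(\vect{x};\theta)=\E_{\vect{y}\sim p_\theta(\cdot\mid\vect{x})}\big[s(\vect{x},\vect{y})\nabla_\theta\log p_\theta(\vect{y}\mid\vect{x})\big],
\end{align*}
where by the softmax chain rule $\nabla_\theta\log p_\theta(\vect{y}\mid\vect{x})=\sum_{l=1}^{L_{\mathrm{out}}} v_l(\vect{x},\vect{y};\theta)$, with $v_l$ as in Assumption~\ref{assumption:aligned-gradients}. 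I then split the expectation into the optimal-completion term $p_\theta(\vect{y}^\ast\mid\vect{x})\,s(\vect{x},\vect{y}^\ast)\sum_l v_l(\vect{x},\vect{y}^\ast;\theta)$ and the remainder over $\vect{y}\neq\vect{y}^\ast$. By the reverse triangle inequality, $\|\nabla V\|$ is at least the norm of the first term minus the norm of the remainder.

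\textbf{Lower bound on the optimal-trajectory term.} Expanding $\|\sum_l v_l\|^2=\sum_{l,k}\langle v_l,v_k\rangle$ and using Assumption~\ref{assumption:aligned-gradients} (all cosines $\ge c$) gives $\|\sum_l v_l\|^2\ge c\,L_{\mathrm{out}}\min_l\|v_l\|^2$ (after also dropping the nonnegative off-diagonal terms, keeping a factor $c$). Each factor satisfies $\|v_l\|\ge\sigma_{\min}\|\vect{e}_{y_l}-p_\theta(\cdot\mid\cdot)\|\ge\sigma_{\min}\,\epsilon$, since by Assumption~\ref{assumption:ns} the $y_l$ coordinate of $\vect{e}_{y_l}-p_\theta$ is $1-p_\theta(y_l\mid\cdot)\ge\epsilon$. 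The extra factor $|\mathcal{X}^{L_\mathrm{out}}|/(|\mathcal{X}^{L_\mathrm{out}}|-1)$ in $\gamma$ should appear from a sharper bound on $\|\vect{e}-p\|^2$ when the mass $1-p$ is spread over the other coordinates. At worst this factor is absorbed as a normalization constant, which I will track but not fight for. The result is a lower bound of $p_\theta(\vect{y}^\ast\mid\vect{x})\,s(\vect{x},\vect{y}^\ast)\,\gamma$.

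\textbf{Upper bound on the remainder.} Here I bound $\|s\,\nabla\log p\|$ using $|s|\le B$ together with $\|J^\top(\vect{e}-p)\|\le\sigma_{\max}\|\vect{e}-p\|$, and collect the terms into something of order $2B\sigma_{\max}(1-p_\theta(\vect{y}^\ast\mid\vect{x}))$. To match the stated $\mu$, I need the gradient bound to be normalized per-sequence rather than summed over $L_{\mathrm{out}}$. This is where I expect looseness; I would first try using the score-function identity $\E[\nabla\log p]=0$ to replace $s$ by $s-s(\vect{x},\vect{y}^\ast)$, which is bounded by $2B$. Combining the two bounds gives
\begin{align*}
\|\nabla V\|\ge p^\ast s^\ast\gamma-2B\sigma_{\max}(1-p^\ast)=(1-p^\ast)\Big(\tfrac{p^\ast}{1-p^\ast}s^\ast\gamma-2B\sigma_{\max}\Big)=2B(1-p^\ast)\mu,
\end{align*}
where $p^\ast=p_\theta(\vect{y}^\ast\mid\vect{x})$ and $s^\ast=s(\vect{x},\vect{y}^\ast)$.

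\textbf{Relating the gap to $1-p^\ast$ and squaring.} By Assumption~\ref{assumption:bounded-score}, $V(\vect{x};\theta^\ast)-V(\vect{x};\theta)\le s^\ast-V(\vect{x};\theta)=\sum_{\vect{y}\neq\vect{y}^\ast}p_\theta(\vect{y})(s^\ast-s(\vect{y}))\le 2B(1-p^\ast)$, using that $\theta^\ast$ attains at most $s^\ast$. When $\mu\le0$ the claim is trivial. Otherwise,
\begin{align*}
\tfrac12\|\nabla V\|^2\ge\tfrac12\big(2B(1-p^\ast)\mu\big)^2\ge\mu\,(V^\ast-V)\cdot\big(B(1-p^\ast)\mu\big),
\end{align*}
so I must either show $B(1-p^\ast)\mu\ge 1$ or, more plausibly, use the unsquared linear bound $\|\nabla V\|\ge\mu(V^\ast-V)$ combined with $\|\nabla V\|\ge\ldots$. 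This constant bookkeeping, namely getting exactly the stated $\mu$ after squaring, is the main obstacle, along with the remainder bound above. If it does not close cleanly, I would present the result with $\mu$ defined up to the factor arising from $\|\nabla V\|\ge 2B\mu(1-p^\ast)\ge\mu(V^\ast-V)$.
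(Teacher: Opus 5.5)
The two steps you flag as obstacles are genuine gaps, and as written your proposal does not prove the statement. The rest is sound. Your reverse-triangle split is the same estimate the paper obtains by projecting $\nabla_\theta V$ onto $u=\phi(\vect{x},\vect{y}^\ast;\theta)/\|\phi(\vect{x},\vect{y}^\ast;\theta)\|$. Your bound $\|\sum_l v_l(\vect{x},\vect{y}^\ast;\theta)\|\ge\gamma$ is its trajectory-gradient lemma; the extra factor comes from Cauchy--Schwarz over the other $|\mathcal{X}|-1$ vocabulary entries. Your bound $V(\vect{x};\theta^\ast)-V(\vect{x};\theta)\le 2B(1-p^\ast)$ is its value-gap lemma.

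\textbf{First gap: the remainder bound.} The bound $2B\sigma_{\max}(1-p^\ast)$ needs $\|\phi(\vect{x},\vect{y};\theta)\|\le 2\sigma_{\max}$ for all $\vect{y}$. But $\phi$ is a sum of $L_{\mathrm{out}}$ token terms, so what you can prove is $\|\phi\|\le\sqrt2\,L_{\mathrm{out}}\sigma_{\max}$. The per-sequence bound is false in general: it contradicts $\|\phi(\vect{x},\vect{y}^\ast;\theta)\|\ge\gamma$ whenever $\gamma>2\sigma_{\max}$, e.g.\ for large $L_{\mathrm{out}}$ with the other constants fixed. Your baseline idea cannot repair this, for two reasons. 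It acts on the $|s|$ factor, not on $\|\phi\|$. And with baseline $s^\ast$, the identity $\sum_{\vect{y}}p_\theta(\vect{y}\mid\vect{x})\phi(\vect{x},\vect{y};\theta)=0$ rewrites $\nabla_\theta V$ as $\sum_{\vect{y}\neq\vect{y}^\ast}p_\theta(\vect{y}\mid\vect{x})\big(s(\vect{x},\vect{y})-s^\ast\big)\phi(\vect{x},\vect{y};\theta)$. This deletes the $\vect{y}^\ast$ term your lower bound rests on.

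\textbf{Second gap: the squaring.} As you computed, squaring $\|\nabla_\theta V\|\ge 2B(1-p^\ast)\mu$ gives the PL form only if $B(1-p^\ast)\mu\ge 1$, and nothing provides that.

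The paper's proof passes both points by assertion. It writes $\|\phi\|\le\|J^\top\|\,\|\vect{e}_{y_l}-p_\theta(\cdot\mid\vect{x},\vect{y}_{\le l-1})\|\le 2\sigma_{\max}$, as though $\phi$ were a single token term. It then stops at the unsquared chain $\|\nabla_\theta V\|\ge\langle\nabla_\theta V,u\rangle\ge 2B\mu(1-p^\ast)\ge\mu\big(V(\vect{x};\theta^\ast)-V(\vect{x};\theta)\big)$ and calls that the $\mu$-PL inequality. So your fallback (the unsquared statement) is exactly what this line of argument delivers, and only once the missing $L_{\mathrm{out}}$ factor is inserted.

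What actually closes the literal statement is your own remark that the case $\mu\le 0$ is trivial, because under Assumption~\ref{assumption:ns} it is the only case. At a suboptimal $\theta$ every token probability is at most $1-\epsilon$, so $p^\ast\le(1-\epsilon)^{L_{\mathrm{out}}}$. Since $1-(1-\epsilon)^{L_{\mathrm{out}}}\ge L_{\mathrm{out}}\epsilon(1-\epsilon)^{L_{\mathrm{out}}-1}$, this gives $p^\ast/(1-p^\ast)\le(1-\epsilon)/(L_{\mathrm{out}}\epsilon)$. Hence, for $s^\ast\ge 0$, with $N=|\mathcal{X}^{L_{\mathrm{out}}}|\ge 2$ and $c\le 1$,
\[
\frac{p^\ast}{1-p^\ast}\,s^\ast\gamma\;\le\;(1-\epsilon)\sqrt{\frac{c}{L_{\mathrm{out}}}}\sqrt{\frac{N}{N-1}}\;B\,\sigma_{\min}\;\le\;\sqrt{2}\,B\,\sigma_{\max}\;\le\;2B\,\sigma_{\max},
\]
and for $s^\ast<0$ the left side is nonpositive. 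So $\mu\le 0$, the right-hand side $\mu\big(V(\vect{x};\theta^\ast)-V(\vect{x};\theta)\big)$ is nonpositive, and the inequality holds trivially. At an optimal $\theta$ one has $p^\ast=1$ and there is nothing to prove. This completes a proof of the statement as written. It also shows that, under the stated assumptions, the inequality carries no information. Neither your argument nor the paper's produces a positive PL constant, and getting one would require changing the assumptions or the form of $\mu$, not tighter bookkeeping.
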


\begin{remark}
The constant $\mu$ quantifies how ``sharp'' the scalarized landscape $V(x;\theta)$ is around its maximizer $\theta^\ast$. Its closed form shows that $\mu$ is large when (i) the policy places large probability mass on the optimal completion $y^\ast$ so that $\frac{p_\theta(\vect{y}^\ast\mid \vect{x})}{1-p_\theta(\vect{y}^\ast\mid \vect{x})}$ is large, (ii) the optimal scalarized score $s(x,y^\ast)$ is large, and (iii) the logit map is well-conditioned, captured by a large $\gamma$ which summarizes the Jacobian and non-saturation constants.
\end{remark}

Intuitively, $\mu \propto \frac{p_\theta(\vect{y}^\ast\mid \vect{x})}{1-p_\theta(\vect{y}^\ast\mid \vect{x})}\,s(\vect{x},\vect{y}^\ast)\gamma - 2B\sigma_{\max}$ represents the net effect of the aligned ascent signal from the optimal completion minus the worst-case destructive contribution from non-optimal completions. Therefore, $\mu$ becomes non-positive when the aligned signal is weak (e.g., small $p_\theta(\vect{y}^\ast\mid \vect{x})$ or small $\gamma$) or when the logit map $J_f$ is highly skewed (large $\sigma_{\max}/\sigma_{\min}$ such that the gradient step is inefficient).

Importantly, a favorable $\mu$ ensures convergence of $V$ and does not by itself prevent cross-objective interference. $V$ can increase while some objective $r_m$ decreases when the covariance condition from \Cref{sec:local-law} fails for that objective. Taken together, \Cref{sec:local-law} and \Cref{sec:pl} disentangle these two complementary mechanisms and suggest a principled recipe for robust multi-objective alignment: \textit{ensure convergence of the scalarized objective (i.e., make $\mu$ positive and sufficiently large) while maintaining per-objective covariance alignment along training}, so that increasing $V$ translates into simultaneous improvement across all objectives rather than cross-objective interference.

\section{Conclusion}
In this paper, we conducted the first systematic study of scalarization algorithms in multi-objective LLM alignment and formalized a common failure mode, cross-objective interference. Through rigorous analysis, we identified the conditions under which each objective can be improved at first order and when the scalarized optimization satisfies the PL inequality, jointly uncovering the fundamental challenges in multi-objective alignment. Guided by these insights, we propose \method{}, which mitigates cross-objective interference more effectively than existing baselines and yields superior Pareto-optimal solutions. Limitations and future work are discussed in Appendix~\ref{appendix:limitations}.

% \begin{ack}
% Use unnumbered first level headings for the acknowledgments. All acknowledgments
% go at the end of the paper before the list of references. Moreover, you are required to declare
% funding (financial activities supporting the submitted work) and competing interests (related financial activities outside the submitted work).
% More information about this disclosure can be found at: \url{https://neurips.cc/Conferences/2026/PaperInformation/FundingDisclosure}.

% Do {\bf not} include this section in the anonymized submission, only in the final paper. You can use the \texttt{ack} environment provided in the style file to hide this section in the anonymized submission automatically.
% \end{ack}

\bibliography{ref}
\bibliographystyle{ref}

%%%%%%%%%%%%%%%%%%%%%%%%%%%%%%%%%%%%%%%%%%%%%%%%%%%%%%%%%%%%

\newpage
\appendix
\section{Experiment Results}
\label{appendix:experiments}
\begin{figure}[ht]
  \centering
\begin{subfigure}[t]{0.33\textwidth}
  \centering
  \includegraphics[width=\linewidth]{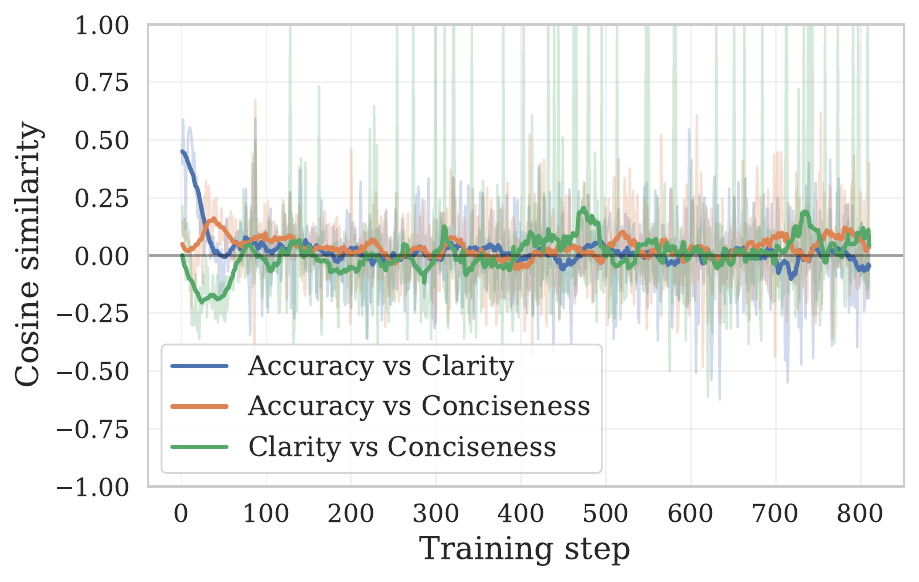}
  \caption{Qwen2.5-1.5B-Base}
\end{subfigure}\hfill
\begin{subfigure}[t]{0.33\textwidth}
  \centering
  \includegraphics[width=\linewidth]{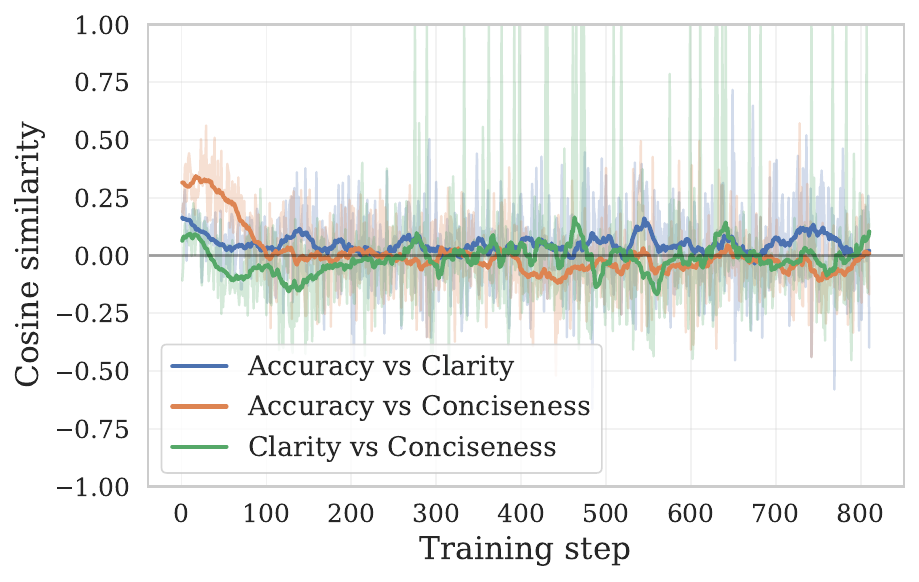}
  \caption{Qwen2.5-1.5B-IFT}
\end{subfigure}\hfill
\begin{subfigure}[t]{0.33\textwidth}
  \centering
  \includegraphics[width=\linewidth]{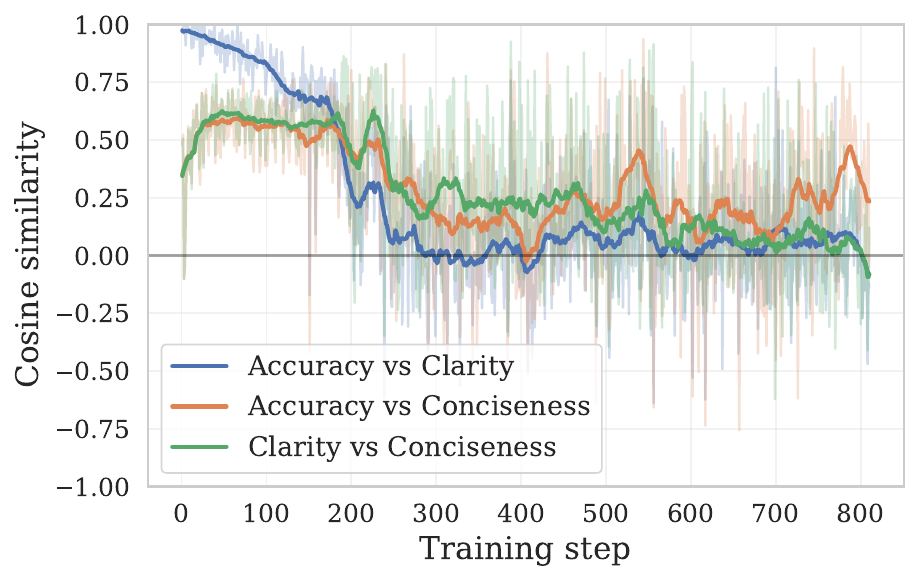}
  \caption{Qwen3-1.7B-Base}
\end{subfigure}

\caption{Gradient alignment across objectives during multi-objective alignment. We measure pairwise cosine similarity between per-objective gradients throughout training. Negative values indicate conflicting updates, which is a standard proxy for identifying conflicting objectives in MTL \citep{NEURIPS2020_3fe78a8a}. Across all three models, cosine similarities remain mostly non-negative and converge toward 0 as training progresses, suggesting that objectives are weakly coupled, neither strongly synergistic nor persistently antagonistic, with no conflicting behavior observed.}
\label{fig:cosine similarity}
\end{figure}

\begin{figure*}[ht]
  \centering
  \setlength{\tabcolsep}{1pt}
  \renewcommand{\arraystretch}{0}

  \begin{subfigure}{\textwidth}
    \centering
    \begin{tabular}{ccc}
      \includegraphics[width=0.33\textwidth]{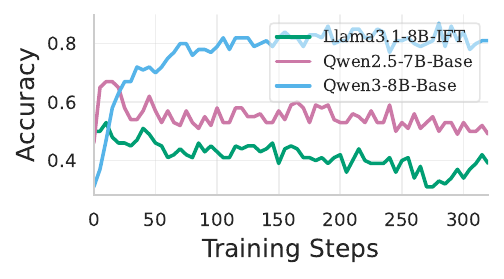} &
      \includegraphics[width=0.33\textwidth]{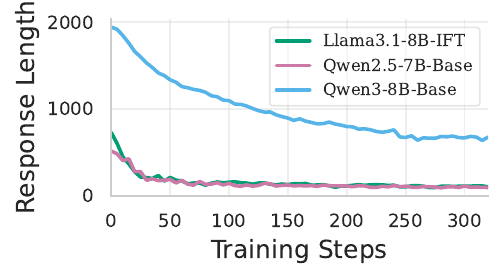} &
      \includegraphics[width=0.33\textwidth]{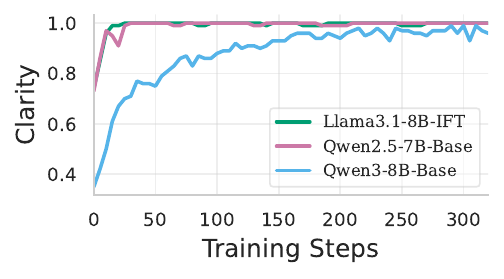}
    \end{tabular}
    \caption{Learning curves of different models trained with Dynamic Reward Weighting \citep{lu2026learningoptimizemultiobjectivealignment} on the MATH500 problems \citep{lightman2024lets}.}
  \end{subfigure}

  \vspace{-2pt}

  \begin{subfigure}{\textwidth}
    \centering
    \begin{tabular}{ccc}
      \includegraphics[width=0.33\textwidth]{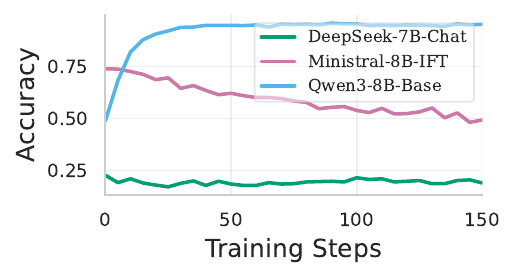} &
      \includegraphics[width=0.33\textwidth]{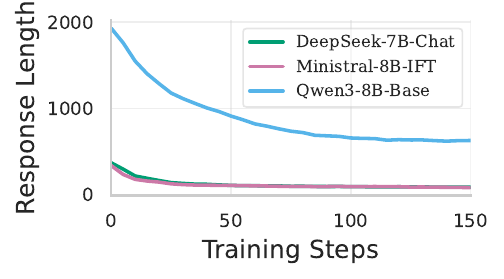} &
      \includegraphics[width=0.33\textwidth]{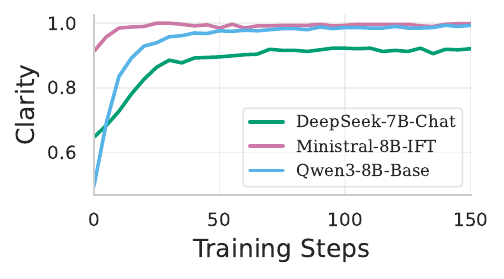}
    \end{tabular}
    \caption{Learning curves of different models trained with Linear Weighting on the MATH algebra problems \citep{mathlighteval}.}
  \end{subfigure}
  \caption{Test performance on three objectives throughout training across five models, two datasets, and two scalarization methods. Cross-objective interference is observed not only in smaller models (at the 1.5B and 1.7B scales as shown in \Cref{fig:main intro figure}) but also in larger models, including Qwen2.5-7B-Base, Llama3.1-8B-IFT, and Ministral-8B-IFT, under different scalarization methods and datasets. These results suggest that \textit{cross-objective interference is a general yet underexplored phenomenon in multi-objective alignment}.}
  \label{fig:large models}
\end{figure*}

\begin{figure}[H]
  \centering
\begin{subfigure}[t]{0.33\textwidth}
  \centering
  \includegraphics[width=\linewidth]{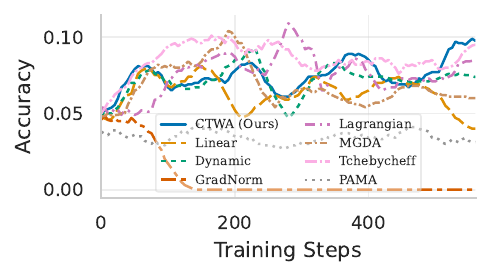}
\end{subfigure}\hfill
\begin{subfigure}[t]{0.33\textwidth}
  \centering
  \includegraphics[width=\linewidth]{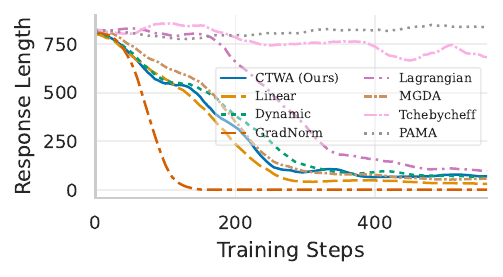}
\end{subfigure}\hfill
\begin{subfigure}[t]{0.33\textwidth}
  \centering
  \includegraphics[width=\linewidth]{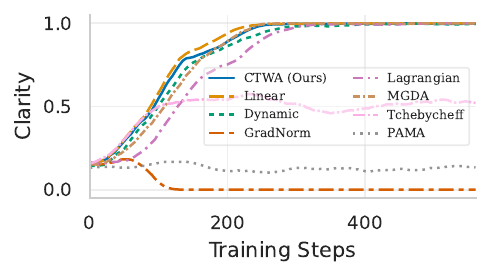}
\end{subfigure}
\caption{Multi-objective alignment using REINFORCE trained on SmolLM2-1.7B \citep{allal2025smollm2smolgoesbig}. \textit{\method{} successfully overcomes cross-objective interference, demonstrating steady improvement across all objectives and yielding the most Pareto-efficient results overall.}}
\label{fig:smollm-reinforce}
\end{figure}

\begin{figure*}[ht]
  \centering
  \setlength{\tabcolsep}{1pt}
  \renewcommand{\arraystretch}{0}

  \begin{subfigure}{\textwidth}
    \centering
    \begin{tabular}{ccc}
      \includegraphics[width=0.33\textwidth]{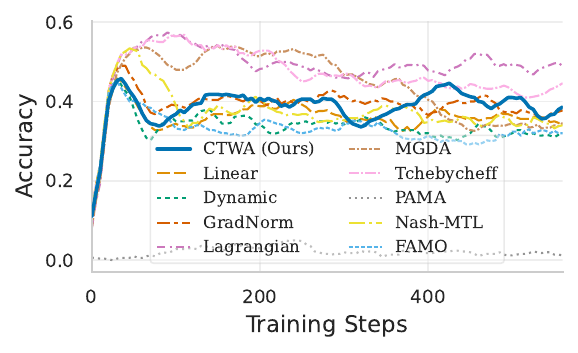} &
      \includegraphics[width=0.33\textwidth]{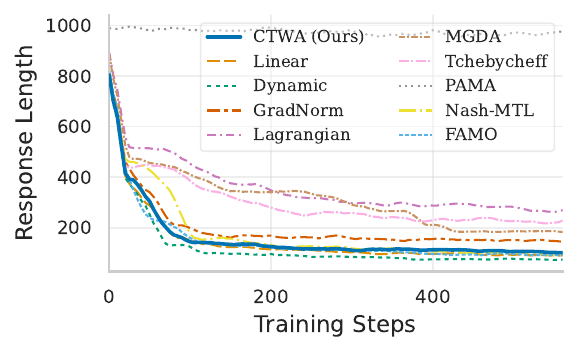} &
      \includegraphics[width=0.33\textwidth]{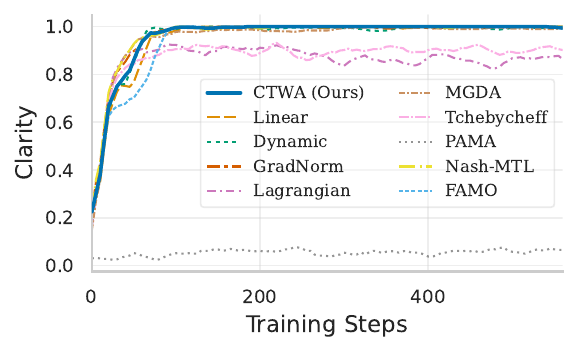}
    \end{tabular}
    \caption{Qwen2.5-1.5B-Base}
    \label{fig:grpo-qwen2.5-base}
  \end{subfigure}

  \vspace{-2pt}

  \begin{subfigure}{\textwidth}
    \centering
    \begin{tabular}{ccc}
      \includegraphics[width=0.33\textwidth]{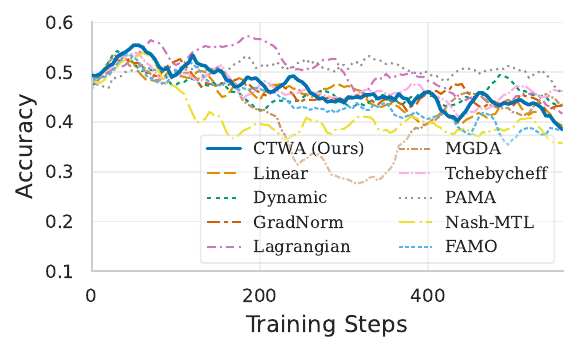} &
      \includegraphics[width=0.33\textwidth]{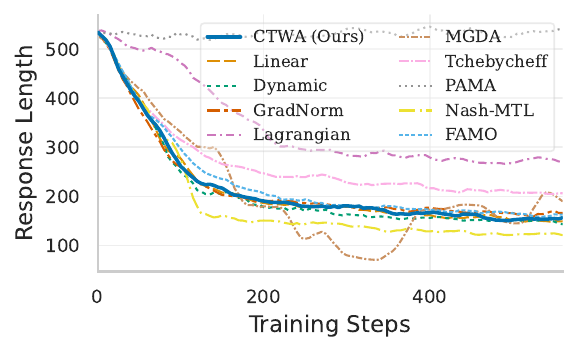} &
      \includegraphics[width=0.33\textwidth]{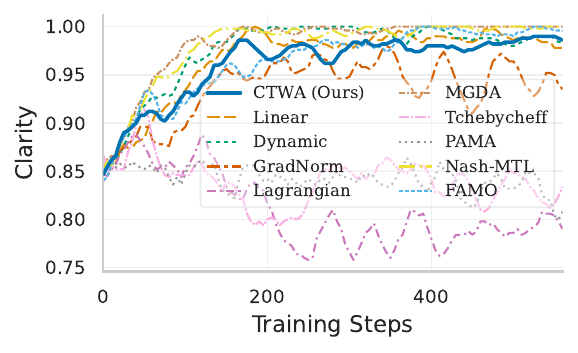}
    \end{tabular}
    \caption{Qwen2.5-1.5B-IFT}
    \label{fig:grpo-qwen2.5-ift}
  \end{subfigure}

  \vspace{-2pt}

  \begin{subfigure}{\textwidth}
    \centering
    \begin{tabular}{ccc}
      \includegraphics[width=0.33\textwidth]{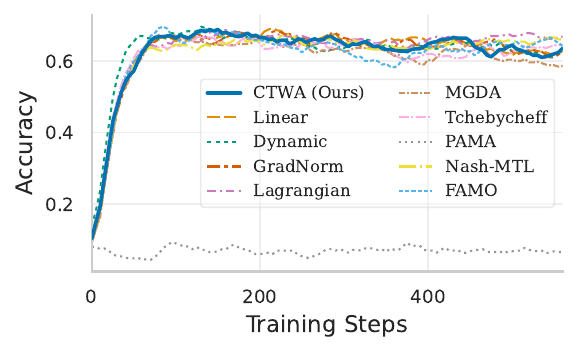} &
      \includegraphics[width=0.33\textwidth]{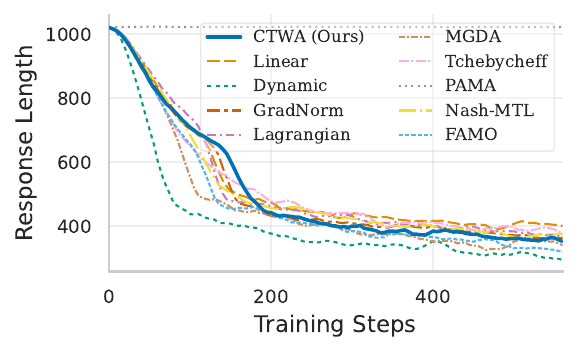} &
      \includegraphics[width=0.33\textwidth]{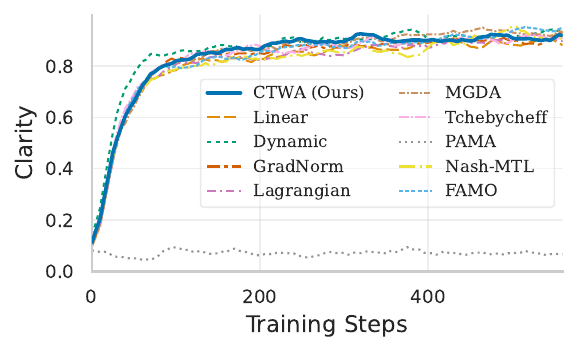}
    \end{tabular}
    \caption{Qwen3-1.7B-Base}
    \label{fig:grpo-qwen3-base}
  \end{subfigure}
  \caption{Multi-objective alignment using GRPO with different scalarization algorithms. We report test performance along training for three objectives: accuracy, conciseness, and clarity (left to right). Similar to observations from \Cref{fig:main intro figure}, \method{} achieves the most balanced performance across objectives without excessively sacrificing one for another. While Lagrangian, PAMA and Tchebycheff maintain higher accuracy in \ref{fig:grpo-qwen2.5-base} and \ref{fig:grpo-qwen2.5-ift}, each of them has significant drawbacks. Lagrangian and Tchebycheff exhibit remarkably worse conciseness and clarity, and PAMA fails to improve these two objectives at all. Instead, \textit{\method{} effectively mitigates cross-objective interference, achieving competitive performance on all objectives across different models and RL algorithms.}}
  \label{fig:grpo result}
\end{figure*}

\newpage

\section{Limitations and Future Work}
\label{appendix:limitations}
\paragraph{Scope of Baselines.}
We select representative baselines that have either been shown to be effective for multi-objective LLM alignment, such as Dynamic Weighting, PAMA, and Linear Scalarization, or are canonical methods in MTL and MOO, such as MGDA, GradNorm, and Nash-MTL. Our goal is not to exhaustively benchmark all existing methods from the MTL, MOO, and LLM alignment literatures, but rather to uncover the limitations of major reward-level and gradient-level scalarization paradigms for LLM alignment. We leave the adaptation and evaluation of additional approaches, such as SDMGrad \citep{NEURIPS2023_0e5b96f9} and FairGrad \citep{maheshwari2023fairgrad}, to future work.

\paragraph{Practical Application}
\cref{theorem:pl for raw} explains model-dependence of cross-objective interference, but directly turning it into a practical training strategy is challenging. Because Assumption~\ref{assumption:aligned-gradients} has token-level Jacobian terms that are prohibitively expensive to compute for modern LMs. We also explored an approximate solution that projects per-token logit gradients into a cone around a shared reference direction during backpropagation, but it did not yield desired gains. We therefore view \cref{theorem:pl for raw} mainly as an explanatory framework and we hope it may inspire more scalable training methods in future work.

\section{Formal Analysis for Clipped Surrogate Objectives}
\label{appendix:analysis}
We establish useful lemmas and theorems that analyze sufficient conditions for multi-objective improvement under clipped surrogate objectives. We use GRPO as a running example, while the same argument applies to PPO-style clipped surrogate objectives, with the only difference being the advantage estimator. All proofs are deferred to Appendix~\ref{appendix:proof}.

Fix a prompt $\vect{x}$ and sample a group of $K$ completions $\vect{y}^{(1)},\dots,\vect{y}^{(K)}\sim p_{\theta_{\mathrm{old}}}(\cdot\mid \vect{x})$.
Let $s_j \coloneq s(\vect{x},\vect{y}^{(j)})$ and define the group normalization $A_i(\vect{x}) \coloneq \frac{s_i-\bar{s}(\vect{x})}{\hat{\sigma}(\vect{x})}$. The GRPO surrogate objective is
\begin{align*}
J(\theta)\coloneq
\E\Bigg[
\sum_{k=1}^{K}\sum_{l=1}^{L_{\mathrm{out}}}
\min\Big\{\rho_{k,l}(\theta)A_k(\vect{x}),\,\bar{\rho}_{k,l}(\theta)A_k(\vect{x})\Big\}
\Bigg]
-\beta\,\E\big[\mathrm{KL}(p_{\theta}\|p_{\mathrm{ref}})\big]
+\lambda\,\E\big[H(p_{\theta})\big],
\end{align*}
where $\beta,\lambda\ge 0$ control KL and entropy regularization. $\bar{\rho}_{k,l}(\theta)$ is the clipped result of importance ratio $\rho_{k,l}(\theta)$, $\bar{\rho}_{k,l}(\theta)=\mathrm{clip}\big(\rho_{k,l}(\theta),1-\varepsilon,1+\varepsilon\big)$. We know that the unclipped indicator selecting the active branch of the minimum is
\begin{equation}
\label{eq:unclipped-indicator}
\mathbf{1}_{k,l}(\theta)=
\begin{cases}
1, & A_k(\vect{x})\ge 0 \ \text{and}\ \rho_{k,l}(\theta)\le 1+\varepsilon,\\
1, & A_k(\vect{x})<0 \ \text{and}\ \rho_{k,l}(\theta)\ge 1-\varepsilon,\\
0, & \text{otherwise}.
\end{cases}
\end{equation}
Recall $f(\vect{x},\vect{y}_{\le l-1};\theta)\in\R^{|\mathcal{X}|}$ is the next-token logit map and $p_\theta(\cdot\mid \vect{x},\vect{y}_{\le l-1})=\mathrm{softmax}(f(\vect{x},\vect{y}_{\le l-1};\theta))$.
Define the tokenwise logit-gradient feature $\phi_l(\vect{x},\vect{y};\theta)$ and the clipped advantage weight $W_{k,l}(\theta)$:
\begin{equation}
\label{eq:Wkl}
\phi_l(\vect{x},\vect{y};\theta)\coloneq\nabla_{\theta}\log p_\theta\big(y_l\mid \vect{x},\vect{y}_{\le l-1}\big),\,
W_{k,l}(\theta)\coloneq A_k(\vect{x})\,\rho_{k,l}(\theta)\,\mathbf{1}_{k,l}(\theta).
\end{equation}

\begin{lemma}
\label{lemma:universal-lower-bound}
Let $r_m(\theta)\coloneq r_m(p_\theta)$ and assume $r_m$ has an $L_m$-Lipschitz gradient around $\theta$.
For any direction $d(\theta) = \nabla J(\theta)$ and update $\theta^{+}=\theta+\eta d(\theta)$, we have
\begin{equation*}
r_m(\theta^{+})-r_m(\theta)
\ \ge\
\eta\,\langle \nabla r_m(\theta), d(\theta)\rangle
-\frac{L_m}{2}\eta^2\|d(\theta)\|^2,\; \forall m.
\end{equation*}
In particular, if $\langle \nabla r_m(\theta), d(\theta)\rangle\ge 0$ and $\eta>0$ is sufficiently small,
then $r_m(\theta^{+})\ge r_m(\theta)$.
\end{lemma}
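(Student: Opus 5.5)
This is the descent lemma for $L_m$-smooth functions, read in the ascent direction. The plan is to apply the fundamental theorem of calculus along the segment from $\theta$ to $\theta^{+}$ and control the deviation of the gradient using the Lipschitz assumption. Define $g(t)\coloneq r_m(\theta+t\eta d(\theta))$ for $t\in[0,1]$. Differentiability of $r_m$ near $\theta$, which is implied by the Lipschitz-gradient assumption, gives
\begin{equation*}
r_m(\theta^{+})-r_m(\theta)=\int_0^1 \eta\,\big\langle \nabla r_m(\theta+t\eta d(\theta)),\, d(\theta)\big\rangle\,dt .
\end{equation*}

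Next, add and subtract $\nabla r_m(\theta)$ inside the inner product. The constant term integrates to exactly $\eta\langle\nabla r_m(\theta),d(\theta)\rangle$. For the remainder, apply Cauchy--Schwarz and then the $L_m$-Lipschitz property:
\begin{equation*}
\big|\langle \nabla r_m(\theta+t\eta d)-\nabla r_m(\theta), d\rangle\big|\le L_m\, t\eta\,\|d\|^2 .
\end{equation*}
Integrating $\eta\cdot L_m t\eta\|d\|^2$ over $t\in[0,1]$ gives $\tfrac{L_m}{2}\eta^2\|d\|^2$. Subtracting this bound yields the stated inequality.

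The only subtle point is that the Lipschitz condition holds only \emph{around} $\theta$. I would handle this by fixing a neighborhood $U$, say a ball of radius $R$, on which $\nabla r_m$ is $L_m$-Lipschitz. I then restrict to $\eta\le R/\|d(\theta)\|$ so the whole segment lies in $U$; if $d(\theta)=0$ the claim is trivial. Since the lemma is a local statement for small $\eta$, this restriction costs nothing. It holds for every $m$ because the argument never uses the specific form of $d$, and the inequality holds for any direction. The choice $d=\nabla J$ is just the case of interest.

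For the ``in particular'' part there are two cases. If $\langle\nabla r_m,d\rangle>0$, choose $\eta<2\langle\nabla r_m,d\rangle/(L_m\|d\|^2)$, so the right-hand side is positive. If the inner product equals $0$, the bound only gives $-\tfrac{L_m}{2}\eta^2\|d\|^2$, which does not yield $r_m(\theta^+)\ge r_m(\theta)$ outright. I expect this boundary case to be the main obstacle. The fix I would try first is to read the conclusion as holding up to $O(\eta^2)$, or equivalently to require strict positivity, which is how it is used in Corollary~\ref{cor:clipping-robustness-simplified}, where the margin exceeds the distortion. Alternatively, one can note that with $d=0$ equality holds trivially. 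I would state this caveat explicitly in the proof.
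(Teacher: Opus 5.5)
Your proof is correct and is essentially the paper's argument (fundamental theorem of calculus along the segment, add and subtract $\nabla r_m(\theta)$, Cauchy--Schwarz with the Lipschitz bound, integrate to get $\tfrac{L_m}{2}\eta^2\|d(\theta)\|^2$), only reparametrized to $t\in[0,1]$ and with the locality of the Lipschitz assumption made explicit, which the paper leaves implicit. Your caveat about the ``in particular'' clause is also well founded: the paper's proof never addresses that clause, it genuinely fails when $\langle\nabla r_m,d\rangle=0\neq d$ (e.g.\ a three-outcome softmax at uniform logits with rewards $(0,1,0)$ and $d=(-1,0,1)$ gives $r_m(\theta+\eta d)=1/(1+2\cosh\eta)<1/3=r_m(\theta)$ for every $\eta>0$), and \Cref{theorem:fisher-cov-sufficient} in fact only invokes it when $d_{\mathrm{nat}}=0$ or the margin is strictly positive, exactly as you propose.
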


\begin{theorem}[Fisher-covariance sufficient condition for natural gradient updates]
\label{theorem:fisher-cov-sufficient}
For completions $\vect{y}^{(1)},\dots,\vect{y}^{(K)}\sim p_{\theta_{\mathrm{old}}}(\cdot\mid\vect{x})$, write $\phi_{k,l}(\theta)\coloneq \phi_l(\vect{x},\vect{y}^{(k)};\theta)$.
Define the aggregated Fisher matrix:
$$
F(\theta)\coloneq 
\E\Bigg[\Big(\sum_{k=1}^{K}\sum_{l=1}^{L_{\mathrm{out}}} \phi_{k,l}(\theta)\Big)
\Big(\sum_{k=1}^{K}\sum_{l=1}^{L_{\mathrm{out}}} \phi_{k,l}(\theta)\Big)^\top\Bigg],
$$
and the weighted feature mean and regularizer gradient:
\begin{align*}
G(\theta)\coloneq 
\E\Bigg[\sum_{k=1}^{K}\sum_{l=1}^{L_{\mathrm{out}}} W_{k,l}(\theta)\,\phi_{k,l}(\theta)\Bigg], \; &R(\theta)\coloneq \beta\,\nabla_{\theta}\E\big[\mathrm{KL}(p_\theta\|p_{\mathrm{ref}})\big]
-\lambda\,\nabla_{\theta}\E\big[H(p_\theta)\big].
\end{align*}
Assume $F(\theta)$ is invertible and consider the natural gradient direction $d_{\mathrm{nat}}(\theta)\coloneq F(\theta)^{-1}\big(G(\theta)-R(\theta)\big),\;
\theta^{+}=\theta+\eta\, d_{\mathrm{nat}}(\theta)$. If for every $m$,
\begin{equation}
\label{eq:fisher-cov-condition}
\nabla r_m(\theta)^{\top}F(\theta)^{-1}G(\theta)
\ \ge\
\nabla r_m(\theta)^{\top}F(\theta)^{-1}R(\theta),
\end{equation}
and moreover either $d_{\mathrm{nat}}(\theta)=0$ or the inequality holds with a positive margin
$$
\gamma_m(\theta)\coloneq \nabla r_m(\theta)^{\top}F(\theta)^{-1}\big(G(\theta)-R(\theta)\big)>0,\;\forall m,
$$
then for sufficiently small $\eta>0$ we have $r_m(\theta^{+})\ge r_m(\theta)$ for all $m$ and strictly if $\min_m \gamma_m(\theta)>0$.
\end{theorem}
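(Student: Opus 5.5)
The plan is to reduce the statement to Lemma~\ref{lemma:universal-lower-bound}, applied to the specific direction $d_{\mathrm{nat}}(\theta)=F(\theta)^{-1}(G(\theta)-R(\theta))$. That lemma is phrased for $d=\nabla J$, but its proof only uses the descent-lemma consequence of the $L_m$-Lipschitz gradient of $r_m$ around $\theta$. It therefore holds verbatim for any fixed direction $d$. The first step is to note this and write
\begin{equation*}
r_m(\theta^{+})-r_m(\theta)\ \ge\ \eta\,\langle\nabla r_m(\theta),d_{\mathrm{nat}}(\theta)\rangle-\frac{L_m}{2}\eta^2\|d_{\mathrm{nat}}(\theta)\|^2 .
\end{equation*}
We also need $\theta^{+}$ to stay inside the neighborhood where the Lipschitz bound holds. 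This is guaranteed for $\eta$ small, since $d_{\mathrm{nat}}(\theta)$ is a fixed finite vector once $F(\theta)$ is invertible.

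Next we identify the first-order term. By linearity of $F^{-1}$,
\begin{equation*}
\langle\nabla r_m,d_{\mathrm{nat}}\rangle=\nabla r_m^\top F^{-1}G-\nabla r_m^\top F^{-1}R=\gamma_m(\theta).
\end{equation*}
Condition \eqref{eq:fisher-cov-condition} is exactly $\gamma_m(\theta)\ge 0$. It helps to remark why $G$ is the right object. Using $\nabla\rho_{k,l}=\rho_{k,l}\phi_{k,l}$, and the fact that $\mathbf{1}_{k,l}$ is locally constant off the measure-zero boundary $\rho=1\pm\varepsilon$, we get $\nabla J=G-R$. So $d_{\mathrm{nat}}$ is the preconditioned surrogate ascent direction. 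This identification is interpretive only and is not logically needed.

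Then we split into the two cases allowed by the hypothesis. If $d_{\mathrm{nat}}(\theta)=0$, then $\theta^{+}=\theta$ and equality holds trivially. Otherwise $\gamma_m(\theta)>0$ for every $m$. Choose
\begin{equation*}
\eta<\min_m \frac{2\gamma_m(\theta)}{L_m\|d_{\mathrm{nat}}(\theta)\|^2},
\end{equation*}
intersected with the neighborhood radius. The right-hand side of the bound is then $\eta\big(\gamma_m-\tfrac{L_m}{2}\eta\|d_{\mathrm{nat}}\|^2\big)>0$. This gives strict improvement simultaneously for all $m$, since $M$ is finite and the minimum of finitely many positive thresholds is positive.

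The main obstacle is the regularity bookkeeping, not the algebra. Two points need care. First, the Lipschitz neighborhood must contain the whole segment $[\theta,\theta^{+}]$. Second, in the degenerate case $d_{\mathrm{nat}}\neq 0$ with some $\gamma_m=0$, the hypothesis rules this out by requiring a positive margin, so only the non-strict conclusion is ever claimed without it. I would state these explicitly and otherwise cite Lemma~\ref{lemma:universal-lower-bound}.
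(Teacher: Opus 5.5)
Your proposal is correct and matches the paper's proof essentially step for step: apply the descent-lemma bound of Lemma~\ref{lemma:universal-lower-bound} to $d_{\mathrm{nat}}(\theta)$, using that it holds for any fixed direction. Then identify the first-order term as $\gamma_m(\theta)$, handle $d_{\mathrm{nat}}(\theta)=0$ trivially, and otherwise take $\eta$ below $\min_m 2\gamma_m(\theta)/(L_m\|d_{\mathrm{nat}}(\theta)\|^2)$. The differences are cosmetic: the paper first derives $\nabla J = G-R$, which you correctly note is not logically needed since $d_{\mathrm{nat}}$ is defined directly from $G-R$, and you are more careful than the paper about keeping the segment $[\theta,\theta^{+}]$ inside the neighborhood where the Lipschitz bound holds.
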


\begin{corollary}[Categorical bandit case]
\label{cor:categorical-reduction}
Fix a prompt $\vect{x}$ and suppose the policy over completions $\vect{y}\in\mathcal{X}^{L_{\mathrm{out}}}$ is a categorical distribution parameterized by logits $\theta\in\R^{|\mathcal{X}^{L_{\mathrm{out}}}|}$
$$
p_\theta(\vect{y}\mid \vect{x})=\frac{\exp(\theta_{\vect{y}})}{\sum_{\vect{y}'}\exp(\theta_{\vect{y}'})}.
$$
Let $w(\vect{x},\vect{y})$ be an arbitrary scalar weight assigned to each completion (i.e., a per-sample quantity whose expected value $\E_{\vect{y}\sim p_\theta(\cdot\mid\vect{x})}[w(\vect{x},\vect{y})]$ we seek to maximize via natural gradient ascent).
Define the categorical Fisher matrix
\begin{align*}
F(\theta)&\coloneq
\E_{\vect{y}\sim p_\theta(\cdot\mid\vect{x})}\big[\nabla_\theta \log p_\theta(\vect{y}\mid\vect{x})
\nabla_\theta \log p_\theta(\vect{y}\mid\vect{x})^{\top}\big].
\end{align*}
Let $d_{\mathrm{nat}}(\theta)$ be any natural gradient direction and take the update $\theta^{+}=\theta+\eta d_{\mathrm{nat}}(\theta)$ with learning rate $\eta>0$. Then for each objective $m$, we have
\begin{align*}
\E_{\vect{y}\sim p_{\theta^{+}}(\cdot\mid\vect{x})}\big[r_m(\vect{x},\vect{y})\big]-\E_{\vect{y}\sim p_{\theta}(\cdot\mid\vect{x})}\big[r_m(\vect{x},\vect{y})\big] =\eta\mathrm{Cov}_{\vect{y}\sim p_{\theta}(\cdot\mid \vect{x})}\big(r_m(\vect{x},\vect{y}),w(\vect{x},\vect{y})\big)+O(\eta^2).
\end{align*}
Specifically, $\mathrm{Cov}\big(r_m(\vect{x},\vect{y}),w(\vect{x},\vect{y})\big)\ge 0$ for all $m$ is a sufficient first-order condition to ensure that no objective degrades.
\end{corollary}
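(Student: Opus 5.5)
In the categorical (softmax over whole completions) parameterization the Fisher matrix has a closed form, so I would compute the natural-gradient direction explicitly, then Taylor-expand the objective along it. Write $N=|\mathcal{X}^{L_{\mathrm{out}}}|$ and $p=p_\theta(\cdot\mid\vect{x})\in\R^N$.

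\textbf{Fisher matrix and natural gradient.} Since $\nabla_\theta\log p_\theta(\vect{y}\mid\vect{x})=\vect{e}_{\vect{y}}-p$, we get
\[
F(\theta)=\mathrm{diag}(p)-pp^\top .
\]
This matrix is singular, with kernel spanned by $\mathbf{1}$; that is why the statement speaks of ``any'' natural gradient direction. The vanilla gradient of $\E_{p}[w]$ is $g=\mathrm{diag}(p)\,(w-\bar w\mathbf{1})$, where $\bar w=\E_p[w]$. I would check directly that
\[
F(\theta)\,(w-c\mathbf{1})=g\quad\text{for every } c\in\R .
\]
Hence the set of natural gradient directions, i.e.\ the solutions of $F d=g$, is exactly $\{w+c\mathbf{1}:c\in\R\}$. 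In particular, the pseudoinverse choice is $d_{\mathrm{nat}}=w-\bar w\mathbf{1}$.

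\textbf{First-order expansion.} Let $\bar r_m(\theta)=\sum_{\vect{y}}p_\theta(\vect{y}\mid\vect{x})\,r_m(\vect{x},\vect{y})$. Its gradient is
\[
\nabla\bar r_m=\mathrm{diag}(p)\,(r_m-\bar r_m\mathbf{1}).
\]
A Taylor expansion gives $\bar r_m(\theta+\eta d)-\bar r_m(\theta)=\eta\langle\nabla\bar r_m,d\rangle+O(\eta^2)$. Substituting $d=w+c\mathbf{1}$: the $\mathbf{1}$-component contributes $c\sum_{\vect{y}}p(\vect{y})\,(r_m(\vect{y})-\bar r_m)=0$, because softmax is invariant to adding a constant to the logits. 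The remaining term is
\[
\sum_{\vect{y}}p(\vect{y})\,\big(r_m(\vect{y})-\bar r_m\big)\,w(\vect{y})=\mathrm{Cov}_{p}(r_m,w),
\]
which is the claimed first-order coefficient.

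\textbf{Remainder and the sufficient condition.} To control the $O(\eta^2)$ term I would use that $\bar r_m$ is a smooth (analytic) function of the logits, with bounded second derivatives on a neighborhood of $\theta$. This holds because $r_m$ is bounded and the completion space is finite, and $w$ is finite so $\|d\|$ is fixed. Equivalently, this is Lemma~\ref{lemma:universal-lower-bound} with $d=d_{\mathrm{nat}}$. Alternatively, one can note that the update is an exponential tilt $p^+\propto p\,e^{\eta w}$, exactly as in Lemma~\ref{lemma:exp-tilt}, and then reuse the expansion from Theorem~\ref{theorem:local-cov-law} with $s$ replaced by $w$. For the final sentence of the corollary: if the covariance is strictly positive, the first-order term dominates for small $\eta$. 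In the boundary case $\mathrm{Cov}=0$ the first-order argument alone does not give non-degradation, so I would state that case as first-order non-degradation, matching the corollary's wording.

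\textbf{Main obstacle.} The one real subtlety is the singularity of $F$. Everything hinges on showing that every solution of $Fd=g$ differs from $w$ only by a multiple of $\mathbf{1}$, and that this component is annihilated by $\nabla\bar r_m$. Both facts follow from $\ker F=\mathrm{span}(\mathbf{1})$ whenever $p$ has full support, which is automatic under softmax.
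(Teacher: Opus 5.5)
Your proposal is correct and follows essentially the paper's route. Like the paper, you use the closed form $F(\theta)=\mathrm{diag}(p)-pp^\top$, check that a constant shift of $w$ solves $F d=\nabla_\theta\E_p[w]$, and apply a first-order expansion to obtain $\eta\,\mathrm{Cov}_p(r_m,w)$. Your version is slightly more complete: you handle every solution $w+c\mathbf{1}$, whereas the paper carries out the computation only for the single choice $w-\E_p[w]\mathbf{1}$, and you also observe that the step is exactly the exponential tilt of Lemma~\ref{lemma:exp-tilt}.

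One minor slip: the Moore--Penrose (minimum-norm) solution is the one orthogonal to $\mathbf{1}$, namely $w$ minus the \emph{unweighted} average of $w$, not $w-\bar w\mathbf{1}$ with $\bar w=\E_p[w]$. This does not affect your argument, because it covers all $c$.
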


\begin{corollary}[Clipping robustness]
\label{cor:clipping-robustness}
Define the unclipped and clipped weights for each token
$$
w^{\mathrm{unclip}}_{k,l}(\theta)\coloneq A_k(\vect{x})\rho_{k,l}(\theta),
w^{\mathrm{clip}}_{k,l}(\theta)\coloneq w^{\mathrm{unclip}}_{k,l}(\theta)\mathbf{1}_{k,l}(\theta),
$$
where $\mathbf{1}_{k,l}(\theta)\in\{0,1\}$ is defined in \Cref{eq:unclipped-indicator} and $\mathbf{1}_{k,l}(\theta)=0$ exactly when the term is clipped away and contributes zero gradient.
Let
\begin{align*}
G^{\mathrm{unclip}}(\theta)\coloneq \E\Bigg[\sum_{k=1}^{K}\sum_{l=1}^{L_{\mathrm{out}}} 
w^{\mathrm{unclip}}_{k,l}(\theta)\phi_{k,l}(\theta)\Bigg],\;
G^{\mathrm{clip}}(\theta)\coloneq \E\Bigg[\sum_{k=1}^{K}\sum_{l=1}^{L_{\mathrm{out}}} 
w^{\mathrm{clip}}_{k,l}(\theta)\phi_{k,l}(\theta)\Bigg].
\end{align*}
For each objective $m$, define the \emph{unclipped} first-order margin
$$
\gamma_m^{\mathrm{unclip}}(\theta)\coloneq \nabla r_m(\theta)^{\top}F(\theta)^{-1}\big(G^{\mathrm{unclip}}(\theta)-R(\theta)\big).
$$
Then the clipped first-order margin satisfies
\begin{align*}
\nabla r_m(\theta)^{\top}F(\theta)^{-1}&\big(G^{\mathrm{clip}}(\theta)-R(\theta)\big) \\
&\ge 
\gamma_m^{\mathrm{unclip}}(\theta) - \big\|F(\theta)^{-1/2}\nabla r_m(\theta)\big\|\cdot
\big\|F(\theta)^{-1/2}\big(G^{\mathrm{unclip}}(\theta)-G^{\mathrm{clip}}(\theta)\big)\big\|.
\end{align*}
Consequently, if $\gamma_m^{\mathrm{unclip}}(\theta)\ge \kappa_m>0$ and
\begin{equation*}
\big\|F(\theta)^{-1/2}\big(G^{\mathrm{unclip}}(\theta)-G^{\mathrm{clip}}(\theta)\big)\big\| \le
\frac{\kappa_m}{\big\|F(\theta)^{-1/2}\nabla r_m(\theta)\big\|},
\end{equation*}
then
$\nabla r_m(\theta)^{\top}F(\theta)^{-1}\big(G^{\mathrm{clip}}(\theta)-R(\theta)\big)\ge 0$,
so the sufficient condition for $r_m(\theta^{+})\ge r_m(\theta)$ in \Cref{theorem:fisher-cov-sufficient} still holds for objective $m$ under clipping.
\end{corollary}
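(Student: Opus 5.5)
The plan is to split the clipped margin into the unclipped margin minus a correction term, and then bound that correction with Cauchy--Schwarz in the inner product induced by $F(\theta)^{-1}$. By linearity of expectation, $G^{\mathrm{clip}}(\theta)=G^{\mathrm{unclip}}(\theta)-\big(G^{\mathrm{unclip}}(\theta)-G^{\mathrm{clip}}(\theta)\big)$. The difference $G^{\mathrm{unclip}}-G^{\mathrm{clip}}=\E\big[\sum_{k,l}w^{\mathrm{unclip}}_{k,l}(1-\mathbf{1}_{k,l})\phi_{k,l}\big]$ collects exactly the terms removed by clipping. Substituting this decomposition gives the identity
\begin{align*}
\nabla r_m^{\top}F^{-1}\big(G^{\mathrm{clip}}-R\big)
=\gamma_m^{\mathrm{unclip}}(\theta)-\nabla r_m^{\top}F^{-1}\big(G^{\mathrm{unclip}}-G^{\mathrm{clip}}\big).
\end{align*}
This step is purely algebraic.

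Next I bound the correction term. I take $F(\theta)$ to be symmetric positive definite: it is a second-moment matrix, hence positive semidefinite, and it is invertible as assumed in \Cref{theorem:fisher-cov-sufficient}. It therefore has a symmetric positive definite square root $F^{1/2}$, with $F^{-1}=F^{-1/2}F^{-1/2}$. I rewrite the correction as $\langle F^{-1/2}\nabla r_m,\;F^{-1/2}(G^{\mathrm{unclip}}-G^{\mathrm{clip}})\rangle$ and apply Cauchy--Schwarz. This bounds its absolute value by $\|F^{-1/2}\nabla r_m\|\cdot\|F^{-1/2}(G^{\mathrm{unclip}}-G^{\mathrm{clip}})\|$. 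Substituting into the identity yields the stated lower bound.

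For the consequence, I plug in the hypotheses $\gamma_m^{\mathrm{unclip}}\ge\kappa_m$ and $\|F^{-1/2}(G^{\mathrm{unclip}}-G^{\mathrm{clip}})\|\le \kappa_m/\|F^{-1/2}\nabla r_m\|$. The product of norms is then at most $\kappa_m$, so the clipped margin is at least $\kappa_m-\kappa_m=0$. The degenerate case $\nabla r_m(\theta)=0$, where the hypothesis would divide by zero, is handled separately: there the clipped margin is trivially $0$. A nonnegative clipped margin is exactly condition \eqref{eq:fisher-cov-condition} with $G=G^{\mathrm{clip}}$. By the definition in \Cref{eq:Wkl}, $G^{\mathrm{clip}}$ coincides with the $G(\theta)$ of \Cref{theorem:fisher-cov-sufficient}, since $W_{k,l}=w^{\mathrm{clip}}_{k,l}$. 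Invoking that theorem, together with \Cref{lemma:universal-lower-bound} for the second-order remainder, gives $r_m(\theta^+)\ge r_m(\theta)$ for small $\eta$.

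The main obstacle is this last invocation. \Cref{theorem:fisher-cov-sufficient} asks for either a strictly positive margin or $d_{\mathrm{nat}}=0$, but we only obtain a margin $\ge 0$. In the boundary case where the clipped margin is exactly zero and $d_{\mathrm{nat}}\neq0$, the first-order term vanishes, and \Cref{lemma:universal-lower-bound} only gives $r_m(\theta^+)-r_m(\theta)\ge-\tfrac{L_m}{2}\eta^2\|d\|^2$, not monotonicity. To deal with this, I would state the conclusion as ``the sufficient condition \eqref{eq:fisher-cov-condition} holds,'' as the corollary does. I would also note that a strict inequality in the distortion bound gives a strictly positive margin, and hence a strict improvement for sufficiently small $\eta$.
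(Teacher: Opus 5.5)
Your proposal is correct and follows the paper's proof essentially verbatim: the same split of the clipped margin into $\gamma_m^{\mathrm{unclip}}(\theta)$ minus the correction $\nabla r_m(\theta)^{\top}F(\theta)^{-1}\big(G^{\mathrm{unclip}}(\theta)-G^{\mathrm{clip}}(\theta)\big)$, bounded by Cauchy--Schwarz after writing $F^{-1}=F^{-1/2}F^{-1/2}$. Your extra remarks are sound refinements that the paper leaves implicit: the positive definiteness of $F$, and the fact that a clipped margin of exactly zero with $d_{\mathrm{nat}}\neq 0$ does not by itself give monotonicity under \Cref{theorem:fisher-cov-sufficient}; note also that the case $\nabla r_m(\theta)=0$ is vacuous, since it would force $\gamma_m^{\mathrm{unclip}}(\theta)=0<\kappa_m$.
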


\begin{remark}
Corollary \ref{cor:clipping-robustness} shows that clipping can only affect a first-order improvement guarantee by removing a subset of weighted logit-gradient terms $w^{\mathrm{unclip}}_{k,l}(\theta)\,\phi_{k,l}(\theta)$ from the update. The Fisher-distortion $\|F(\theta)^{-1/2}(G^{\mathrm{unclip}}(\theta)-G^{\mathrm{clip}}(\theta))\|$ therefore measures, in natural gradient geometry, how much ``gradient mass'' is deleted by clipping. This distortion becomes large when many importance ratios fall outside the clipping window, especially on samples with large magnitude advantages. Importantly, this can be carefully controlled in practice through techniques such as learning rate scheduling or reward normalization, ensuring that clipping distortion remains small and the covariance law continues to hold.
\end{remark}

\section{Proof}
\label{appendix:proof}
\subsection{Proof of Lemma \ref{lemma:exp-tilt}}
\begin{proof}
Fix $\vect{x}$. We optimize over distributions $q$ on $\mathcal{X}^{L_{\mathrm{out}}}$ satisfying
$q(\vect{y})\ge 0$ and $\sum_{\vect{y}}q(\vect{y})=1$.
Expanding the KL term gives
$$
\mathrm{KL}(q\|p_{\theta;\vect{x}})=\sum_{\vect{y}}q(\vect{y})\log\frac{q(\vect{y})}{p_{\theta;\vect{x}}(\vect{y})}.
$$
Thus the objective can be written as
$$
\max_{q \in \Delta(\mathcal{X}^{L_{\mathrm{out}}})}
\left\{
\sum_{\vect{y}}q(\vect{y})s(\vect{x},\vect{y})
-\frac{1}{\eta}\sum_{\vect{y}}q(\vect{y})\log\frac{q(\vect{y})}{p_{\theta;\vect{x}}(\vect{y})}
\right\}.
$$
Introduce a Lagrange multiplier $\lambda\in\R$ for the normalization constraint $\sum_{\vect{y}} q(\vect{y})=1$, and consider
$$
\mathcal{L}(q,\lambda)=\sum_{\vect{y}}q(\vect{y})\,s(\vect{x},\vect{y})-\frac{1}{\eta}\sum_{\vect{y}}q(\vect{y})\log\frac{q(\vect{y})}{p_{\theta;\vect{x}}(\vect{y})}+\lambda\Big(\sum_{\vect{y}}q(\vect{y})-1\Big).
$$
For any $\vect{y}$ with $q(\vect{y})>0$, the stationarity condition is
\begin{align*}
\frac{\partial}{\partial q(\vect{y})}\mathcal{L}(q,\lambda)=s(\vect{x},\vect{y})-\frac{1}{\eta}\left(\log\frac{q(\vect{y})}{p_{\theta;\vect{x}}(\vect{y})}+1\right)+\lambda=0.
\end{align*}
Rearranging yields
$$
\log\frac{q(\vect{y})}{p_{\theta;\vect{x}}(\vect{y})}
=
\eta\,s(\vect{x},\vect{y})+\eta\lambda-1.
$$
Exponentiating both sides gives us
$$
q(\vect{y})=p_{\theta;\vect{x}}(\vect{y})\exp\!\big(\eta\,s(\vect{x},\vect{y})\big)\exp(\eta\lambda-1).
$$
Let $C:=\exp(\eta\lambda-1)$, which does not depend on $\vect{y}$. Enforcing normalization $\sum_{\vect{y}} q(\vect{y})=1$ implies
$$
1=\sum_{\vect{y}}q(\vect{y})=C\sum_{\vect{y}}p_{\theta;\vect{x}}(\vect{y})\exp\!\big(\eta\,s(\vect{x},\vect{y})\big),
$$
so
$$
C=\frac{1}{\sum_{\vect{y}}p_{\theta;\vect{x}}(\vect{y})\exp\!\big(\eta\,s(\vect{x},\vect{y})\big)}=\frac{1}{\E_{\vect{y}\sim p_{\theta;\vect{x}}}\big[\exp(\eta\,s(\vect{x},\vect{y}))\big]}.
$$
Substituting back, the maximizer is
$$
q(\vect{y})
=
\frac{p_{\theta;\vect{x}}(\vect{y})\exp\!\big(\eta\,s(\vect{x},\vect{y})\big)}
{\E_{\vect{y}\sim p_{\theta;\vect{x}}}\big[\exp(\eta\,s(\vect{x},\vect{y}))\big]}.
$$
Identifying $q=p_{\theta;\vect{x}}^{+}$ yields the claim.
\end{proof}
\begin{remark}
The update \Cref{eq:kl-policy-improvement} trades off increasing $\E_{\vect{y}\sim q}[s(\vect{x},\vect{y})]$ with staying close to $p_{\theta;\vect{x}}$ in $\mathrm{KL}(q\|p_{\theta;\vect{x}})$.
Lemma \ref{lemma:exp-tilt} shows the solution is an exponential tilting,
$
p_{\theta;\vect{x}}^{+}(\vect{y}) \propto p_{\theta;\vect{x}}(\vect{y})e^{\eta s(\vect{x},\vect{y})},
$
so it can only reweight completions already supported by $p_{\theta;\vect{x}}$.
In particular, if $p_{\theta;\vect{x}}(\vect{y})=0$ then $p_{\theta;\vect{x}}^{+}(\vect{y})=0$, since otherwise $\mathrm{KL}(q\|p_{\theta;\vect{x}})=+\infty$.
\end{remark}

\subsection{Proof of \Cref{theorem:local-cov-law}}
\begin{proof}
Fix $\vect{x}$. By Lemma~\ref{lemma:exp-tilt}, the optimizer of \Cref{eq:kl-policy-improvement} satisfies
$$
p_{\theta;\vect{x}}^{+}(\vect{y})=\frac{p_{\theta;\vect{x}}(\vect{y})\exp\big(\eta\,s(\vect{x},\vect{y})\big)}{\E_{\vect{y}\sim p_{\theta;\vect{x}}}\Big[\exp\big(\eta\,s(\vect{x},\vect{y})\big)\Big]}
$$
Let $\bar{s}(\vect{x})\coloneq \E_{\vect{y}\sim p_{\theta;\vect{x}}}\big[s(\vect{x},\vect{y})\big]$.
Using the Taylor expansion $\exp(\eta s)=1+\eta s+O(\eta^2)$,
we have
$$
Z_\eta(\vect{x})=\E_{\vect{y}\sim p_{\theta;\vect{x}}}\big[1+\eta s(\vect{x},\vect{y})+O(\eta^2)\big]=1+\eta\bar{s}(\vect{x})+O(\eta^2).
$$
For small $\eta$, $(1+u)^{-1}=1-u+O(u^2)$ implies
$$
\frac{1}{Z_\eta(\vect{x})}
=
1-\eta\,\bar{s}(\vect{x})+O(\eta^2).
$$
Substituting back yields the pointwise expansion
$$
p_{\theta;\vect{x}}^{+}(\vect{y})=p_{\theta;\vect{x}}(\vect{y})\Big(1+\eta\big(s(\vect{x},\vect{y})-\bar{s}(\vect{x})\big)\Big)+O(\eta^2)p_{\theta;\vect{x}}(\vect{y}).
$$
Therefore,
\begin{align*}
&\E_{\vect{y}\sim p_{\theta;\vect{x}}^{+}}\big[r_m(\vect{x},\vect{y})\big]
=
\sum_{\vect{y}} p_{\theta;\vect{x}}^{+}(\vect{y})r_m(\vect{x},\vect{y})\\
&=
\sum_{\vect{y}} p_{\theta;\vect{x}}(\vect{y})r_m(\vect{x},\vect{y})+\eta\sum_{\vect{y}} p_{\theta;\vect{x}}(\vect{y})r_m(\vect{x},\vect{y})\big(s(\vect{x},\vect{y})-\bar{s}(\vect{x})\big)+O(\eta^2)\sum_{\vect{y}}p_{\theta;\vect{x}}(\vect{y})r_m(\vect{x},\vect{y})\\
&=
\E_{\vect{y}\sim p_{\theta;\vect{x}}}\big[r_m(\vect{x},\vect{y})\big]+\eta\Big(
\E_{\vect{y}\sim p_{\theta;\vect{x}}}\big[r_m(\vect{x},\vect{y})s(\vect{x},\vect{y})\big]-\E_{\vect{y}\sim p_{\theta;\vect{x}}}\big[r_m(\vect{x},\vect{y})\big]
\E_{\vect{y}\sim p_{\theta;\vect{x}}}\big[s(\vect{x},\vect{y})\big]
\Big)+O(\eta^2)\\
&=
\E_{\vect{y}\sim p_{\theta;\vect{x}}}\big[r_m(\vect{x},\vect{y})\big]+\eta
\mathrm{Cov}_{\vect{y}\sim p_{\theta}(\cdot\mid \vect{x})}
\big(r_m(\vect{x},\vect{y}),s(\vect{x},\vect{y})\big)+O(\eta^2).
\end{align*}
The third equality follows since $r_m(\vect{x},\vect{y})$ is bounded, so the remainder term is $O(\eta^2)$. Finally, taking expectation over $\vect{x}\sim \mathcal{D}$ gives \Cref{eq:local-cov-law}.
\end{proof}

\subsection{Proof of Lemma \ref{lemma:universal-lower-bound}}
\begin{proof}
Define the deterministic one-dimensional function
$$
g(t)\coloneq r_m\big(\theta+t d(\theta)\big),\; t\in[0,\eta].
$$
By the fundamental theorem of calculus,
\begin{align*}
r_m(\theta^{+})-r_m(\theta)
&= g(\eta)-g(0)
= \int_{0}^{\eta} g'(t)dt \\
&= \int_{0}^{\eta} \big\langle d(\theta),\nabla r_m\big(\theta+t d(\theta)\big)\big\rangle dt \\
&= \int_{0}^{\eta} \big\langle d(\theta),\nabla r_m(\theta)\big\rangle dt
+ \int_{0}^{\eta} \big\langle d(\theta),\nabla r_m\big(\theta+t d(\theta)\big)-\nabla r_m(\theta)\big\rangle dt \\
&= \eta\big\langle \nabla r_m(\theta), d(\theta)\big\rangle
+ \int_{0}^{\eta} \big\langle \nabla r_m\big(\theta+t d(\theta)\big)-\nabla r_m(\theta), d(\theta)\big\rangle dt .
\end{align*}
For the second term, the Cauchy-Schwarz inequality implies
$$
\big\langle \nabla r_m\big(\theta+t d(\theta)\big)-\nabla r_m(\theta), d(\theta)\big\rangle
\ge
-\|\nabla r_m\big(\theta+t d(\theta)\big)-\nabla r_m(\theta)\|\|d(\theta)\|.
$$
By the $L_m$-Lipschitzness of $\nabla r_m$,
$$
\|\nabla r_m\big(\theta+t d(\theta)\big)-\nabla r_m(\theta)\|
\le
L_m\|\theta+t d(\theta)-\theta\|
=
L_m t\|d(\theta)\|.
$$
Therefore,
\begin{align*}
\int_{0}^{\eta} \big\langle \nabla r_m\big(\theta+t d(\theta)\big)-\nabla r_m(\theta), d(\theta)\big\rangle dt
\ge
-\int_{0}^{\eta} L_m t\|d(\theta)\|^2 dt =
-\frac{L_m}{2}\eta^2\|d(\theta)\|^2.
\end{align*}
Combining the bounds yields
$$
r_m(\theta^{+})-r_m(\theta)
\ge
\eta\big\langle \nabla r_m(\theta), d(\theta)\big\rangle
-\frac{L_m}{2}\eta^2\|d(\theta)\|^2,
$$
which proves the lemma.
\end{proof}

\subsection{Proof of \Cref{theorem:fisher-cov-sufficient}}

\begin{proof}
Write the GRPO surrogate objective as
$$
J(\theta)=
\E\Bigg[
\sum_{k=1}^{K}\sum_{l=1}^{L_{\mathrm{out}}}
\min\Big\{\rho_{k,l}(\theta)A_k(\vect{x}),\bar{\rho}_{k,l}(\theta)A_k(\vect{x})\Big\}
\Bigg]
-\beta\E\big[\mathrm{KL}(p_{\theta}\|p_{\mathrm{ref}})\big]
+\lambda\E\big[H(p_{\theta})\big].
$$
Fix $(\vect{x},\vect{y}^{(1:K)})$ and a token index $(k,l)$. Define
$$
s_{k,l}(\theta)\coloneq \min\Big\{\rho_{k,l}(\theta)A_k(\vect{x}),\bar{\rho}_{k,l}(\theta)A_k(\vect{x})\Big\}.
$$
By the definition of tokenwise clipping, $s_{k,l}(\theta)$ equals $\rho_{k,l}(\theta)A_k(\vect{x})$
on the unclipped region and equals $\bar{\rho}_{k,l}(\theta)A_k(\vect{x})$ on the clipped region.
On the clipped region, $\bar{\rho}_{k,l}(\theta)$ is constant in $\theta$ (equal to $1+\varepsilon$ or $1-\varepsilon$),
so its gradient is zero. On the unclipped region,
$$
\nabla s_{k,l}(\theta)
=
A_k(\vect{x})\nabla\rho_{k,l}(\theta)
=
A_k(\vect{x})\rho_{k,l}(\theta)\nabla\log p_\theta\big(y_l^{(k)}\mid \vect{x},\vect{y}_{\le l-1}^{(k)}\big)
=
W_{k,l}(\theta)\phi_{k,l}(\theta),
$$
where $W_{k,l}(\theta)=A_k(\vect{x})\rho_{k,l}(\theta)\mathbf{1}_{k,l}(\theta)$ encodes exactly the unclipped region.
Therefore,
$$
\nabla_\theta \E\Big[\sum_{k,l} s_{k,l}(\theta)\Big]
=
\E\Big[\sum_{k,l} \nabla s_{k,l}(\theta)\Big]
=
\E\Big[\sum_{k,l} W_{k,l}(\theta)\phi_{k,l}(\theta)\Big]
=
G(\theta).
$$
For the regularizers,
$$
\nabla_\theta\Big(-\beta\E[\mathrm{KL}(p_{\theta}\|p_{\mathrm{ref}})]
+\lambda\E[H(p_{\theta})]\Big)
=
-\Big(\beta\nabla_\theta\E[\mathrm{KL}(p_{\theta}\|p_{\mathrm{ref}})]
-\lambda\nabla_\theta\E[H(p_{\theta})]\Big)
=
-R(\theta).
$$
Combining these two parts gives $\nabla J(\theta)=G(\theta)-R(\theta)$, hence
$$
d_{\mathrm{nat}}(\theta)=F(\theta)^{-1}\nabla J(\theta)=F(\theta)^{-1}\big(G(\theta)-R(\theta)\big).
$$

By Lemma \ref{lemma:universal-lower-bound}, for each objective $m$ and any direction $d(\theta)$,
$$
r_m(\theta+\eta d(\theta)) - r_m(\theta)
\ge
\eta\nabla r_m(\theta)^{\top} d(\theta)
-\frac{L_m}{2}\eta^2\|d(\theta)\|^2.
$$
Setting $d(\theta)=d_{\mathrm{nat}}(\theta)$ and using the definition of $d_{\mathrm{nat}}(\theta)$ yields
\begin{align*}
\nabla r_m(\theta)^{\top} d_{\mathrm{nat}}(\theta)
&=
\nabla r_m(\theta)^{\top}F(\theta)^{-1}\big(G(\theta)-R(\theta)\big) \\
&=
\nabla r_m(\theta)^{\top}F(\theta)^{-1}G(\theta)
-\nabla r_m(\theta)^{\top}F(\theta)^{-1}R(\theta).
\end{align*}
Thus the condition \Cref{eq:fisher-cov-condition} implies $\nabla r_m(\theta)^{\top} d_{\mathrm{nat}}(\theta)\ge 0$ for all $m$.
If $d_{\mathrm{nat}}(\theta)=0$, then $\theta^{+}=\theta$ and $r_m(\theta^{+})=r_m(\theta)$ trivially.
Otherwise, if $\gamma_m(\theta)=\nabla r_m(\theta)^{\top} d_{\mathrm{nat}}(\theta)>0$ for all $m$, then
$$
r_m(\theta^{+})-r_m(\theta)
\ge
\eta\gamma_m(\theta)-\frac{L_m}{2}\eta^2\|d_{\mathrm{nat}}(\theta)\|^2.
$$
Choosing
$$
0<\eta \le \min_{m}\frac{2\gamma_m(\theta)}{L_m\|d_{\mathrm{nat}}(\theta)\|^2}
$$
ensures the right-hand side is nonnegative for every $m$, hence $r_m(\theta^{+})\ge r_m(\theta)$ for all $m$.
If $\min_m\gamma_m(\theta)>0$, then taking $\eta$ strictly smaller than the bound gives strict improvement for all $m$.
\end{proof}

\subsection{Proof of Corollary \ref{cor:categorical-reduction}}
\begin{proof}
Fix $\vect{x}$. For categorical softmax logits, for any $\vect{y},\vect{z}\in\mathcal{X}^{L_{\mathrm{out}}}$,
the Jacobian satisfies
$$
\frac{\partial p_\theta(\vect{y}\mid\vect{x})}{\partial \theta_{\vect{z}}}
=
p_\theta(\vect{y}\mid\vect{x})\big(\mathbf{1}\{\vect{y}=\vect{z}\}-p_\theta(\vect{z}\mid\vect{x})\big).
$$

For each $\vect{z}$, differentiating the expectation yields
\begin{align*}
\frac{\partial}{\partial \theta_{\vect{z}}}\E_{\vect{y}\sim p_\theta(\cdot\mid\vect{x})}\big[w(\vect{x},\vect{y})\big]
&=
\sum_{\vect{y}} w(\vect{x},\vect{y})\frac{\partial p_\theta(\vect{y}\mid\vect{x})}{\partial \theta_{\vect{z}}} \\
&=
\sum_{\vect{y}} w(\vect{x},\vect{y})p_\theta(\vect{y}\mid\vect{x})
\big(\mathbf{1}\{\vect{y}=\vect{z}\}-p_\theta(\vect{z}\mid\vect{x})\big) \\
&=
p_\theta(\vect{z}\mid\vect{x})w(\vect{x},\vect{z})
-
p_\theta(\vect{z}\mid\vect{x})\sum_{\vect{y}}p_\theta(\vect{y}\mid\vect{x})w(\vect{x},\vect{y}) \\
&=
p_\theta(\vect{z}\mid\vect{x})
\Big(w(\vect{x},\vect{z})-\E_{\vect{y}\sim p_\theta(\cdot\mid\vect{x})}\big[w(\vect{x},\vect{y})\big]\Big).
\end{align*}
Therefore, we obtain its vector form
\begin{equation}
\label{eq:gradient-w}
\nabla_\theta \E_{\vect{y}\sim p_\theta(\cdot\mid\vect{x})}\big[w(\vect{x},\vect{y})\big]
=
p_\theta(\cdot\mid\vect{x})\odot
\Big(w(\vect{x},\cdot)-\E_{\vect{y}\sim p_\theta(\cdot\mid\vect{x})}\big[w(\vect{x},\vect{y})\big]\mathbf{1}\Big),
\end{equation}
where $\odot$ denotes elementwise product. For the categorical softmax family, the Fisher matrix is
$$
F(\theta)=\mathrm{diag}\big(p_\theta(\cdot\mid\vect{x})\big)-p_\theta(\cdot\mid\vect{x})p_\theta(\cdot\mid\vect{x})^{\top}.
$$
For any vector $v$ indexed by $\vect{y}$, the $\vect{y}$-th coordinate of $F(\theta)v$ is
\begin{align*}
\big[F(\theta)v\big]_{\vect{y}}
&=
\big[\mathrm{diag}(p_\theta(\cdot\mid\vect{x}))v\big]_{\vect{y}}
-\big[p_\theta(\cdot\mid\vect{x})p_\theta(\cdot\mid\vect{x})^{\top}v\big]_{\vect{y}} \\
&=
p_\theta(\vect{y}\mid\vect{x})v_{\vect{y}}
-
p_\theta(\vect{y}\mid\vect{x})\sum_{\vect{z}}p_\theta(\vect{z}\mid\vect{x})v_{\vect{z}} \\
&=
p_\theta(\vect{y}\mid\vect{x})
\Big(v_{\vect{y}}-\E_{\vect{z}\sim p_\theta(\cdot\mid\vect{x})}[v_{\vect{z}}]\Big).
\end{align*}
Now take
$$
v \;=\; w(\vect{x},\cdot)
-\E_{\vect{y}\sim p_\theta(\cdot\mid\vect{x})}\big[w(\vect{x},\vect{y})\big]\mathbf{1}.
$$
Then
\begin{align*}
\E_{\vect{z}\sim p_\theta(\cdot\mid\vect{x})}[v_{\vect{z}}]
&=
\E_{\vect{z}\sim p_\theta(\cdot\mid\vect{x})}\big[w(\vect{x},\vect{z})\big]
-
\E_{\vect{y}\sim p_\theta(\cdot\mid\vect{x})}\big[w(\vect{x},\vect{y})\big]\cdot
\E_{\vect{z}\sim p_\theta(\cdot\mid\vect{x})}[1] \\
&=0,
\end{align*}
so the general coordinate formula reduces to
$$
\big[F(\theta)v\big]_{\vect{y}}=p_\theta(\vect{y}\mid\vect{x})v_{\vect{y}}.
$$
Equivalently,
$$
F(\theta)v
=
p_\theta(\cdot\mid\vect{x})\odot v
=
p_\theta(\cdot\mid\vect{x})\odot
\Big(w(\vect{x},\cdot)-\E_{\vect{y}\sim p_\theta(\cdot\mid\vect{x})}\big[w(\vect{x},\vect{y})\big]\mathbf{1}\Big).
$$
Comparing with the expression for $\nabla_\theta \E[w]$ from \Cref{eq:gradient-w} shows that this particular $v$ satisfies
$$
F(\theta)v
=
\nabla_\theta \E_{\vect{y}\sim p_\theta(\cdot\mid\vect{x})}\big[w(\vect{x},\vect{y})\big].
$$
By definition, any vector $d_{\mathrm{nat}}(\theta)$ satisfying
$$
F(\theta)d_{\mathrm{nat}}(\theta)
=
\nabla_\theta \E_{\vect{y}\sim p_\theta(\cdot\mid\vect{x})}\big[w(\vect{x},\vect{y})\big]
$$
is a valid natural gradient direction (the solution is not unique because $F(\theta)\mathbf{1}=0$).
Therefore, one convenient and valid choice in the space $\mathbf{1}^{\top}d_{\mathrm{nat}}(\theta)=0$ is
\begin{equation}
\label{eq:natural gradient-choice}
d_{\mathrm{nat}}(\theta)=w(\vect{x},\cdot)
-\E_{\vect{y}\sim p_\theta(\cdot\mid\vect{x})}\big[w(\vect{x},\vect{y})\big]\mathbf{1}.
\end{equation}
Take $\theta^{+}=\theta+\eta d_{\mathrm{nat}}(\theta)$ with learning rate $\eta>0$.
A first-order Taylor expansion gives
$$
p_{\theta^{+}}(\vect{y}\mid\vect{x})-p_\theta(\vect{y}\mid\vect{x})
=
\eta\sum_{\vect{z}}\frac{\partial p_\theta(\vect{y}\mid\vect{x})}{\partial \theta_{\vect{z}}}
d_{\mathrm{nat}}(\theta)_{\vect{z}}
+O(\eta^2).
$$
Substituting the Jacobian formula and simplifying,
\begin{align*}
p_{\theta^{+}}(\vect{y}\mid\vect{x})-p_\theta(\vect{y}\mid\vect{x})
&=
\eta p_\theta(\vect{y}\mid\vect{x})
\Big(d_{\mathrm{nat}}(\theta)_{\vect{y}}-\sum_{\vect{z}}p_\theta(\vect{z}\mid\vect{x})d_{\mathrm{nat}}(\theta)_{\vect{z}}\Big)
+O(\eta^2) \\
&=
\eta p_\theta(\vect{y}\mid\vect{x})
\Big(d_{\mathrm{nat}}(\theta)_{\vect{y}}-\E_{\vect{z}\sim p_\theta(\cdot\mid\vect{x})}\big[d_{\mathrm{nat}}(\theta)_{\vect{z}}\big]\Big)
+O(\eta^2).
\end{align*}
By \Cref{eq:natural gradient-choice},
$$
\E_{\vect{z}\sim p_\theta(\cdot\mid\vect{x})}\big[d_{\mathrm{nat}}(\theta)_{\vect{z}}\big]
=
\E_{\vect{z}\sim p_\theta(\cdot\mid\vect{x})}\big[w(\vect{x},\vect{z})\big]
-
\E_{\vect{y}\sim p_\theta(\cdot\mid\vect{x})}\big[w(\vect{x},\vect{y})\big]
=
0,
$$
and hence
$$
p_{\theta^{+}}(\vect{y}\mid\vect{x})-p_\theta(\vect{y}\mid\vect{x})
=
\eta p_\theta(\vect{y}\mid\vect{x})
\Big(w(\vect{x},\vect{y})-\E_{\vect{y}\sim p_\theta(\cdot\mid\vect{x})}\big[w(\vect{x},\vect{y})\big]\Big)
+O(\eta^2).
$$
Therefore,
\begin{align*}
&\E_{\vect{y}\sim p_{\theta^{+}}(\cdot\mid\vect{x})}\big[r_m(\vect{x},\vect{y})\big]
-\E_{\vect{y}\sim p_{\theta}(\cdot\mid\vect{x})}\big[r_m(\vect{x},\vect{y})\big] \\
&\qquad=
\sum_{\vect{y}}\Big(p_{\theta^{+}}(\vect{y}\mid\vect{x})-p_\theta(\vect{y}\mid\vect{x})\Big)r_m(\vect{x},\vect{y}) \\
&\qquad=
\eta\sum_{\vect{y}} p_\theta(\vect{y}\mid\vect{x})
\Big(w(\vect{x},\vect{y})-\E_{\vect{y}\sim p_\theta(\cdot\mid\vect{x})}\big[w(\vect{x},\vect{y})\big]\Big)
r_m(\vect{x},\vect{y})
+O(\eta^2) \\
&\qquad=
\eta\Big(
\E_{\vect{y}\sim p_\theta(\cdot\mid\vect{x})}\big[r_m(\vect{x},\vect{y})w(\vect{x},\vect{y})\big]
-
\E_{\vect{y}\sim p_\theta(\cdot\mid\vect{x})}\big[r_m(\vect{x},\vect{y})\big]
\E_{\vect{y}\sim p_\theta(\cdot\mid\vect{x})}\big[w(\vect{x},\vect{y})\big]
\Big)
+O(\eta^2) \\
&\qquad=
\eta\mathrm{Cov}_{\vect{y}\sim p_{\theta}(\cdot\mid\vect{x})}\big(r_m(\vect{x},\vect{y}),w(\vect{x},\vect{y})\big)
+O(\eta^2),
\end{align*}
which proves the claim.
\end{proof}

\subsection{Proof of Corollary \ref{cor:clipping-robustness}}
\begin{proof}
Fix an objective $m$ and write $F\coloneq F(\theta)$, $R\coloneq R(\theta)$,
$G^{\mathrm{unclip}}\coloneq G^{\mathrm{unclip}}(\theta)$, and $G^{\mathrm{clip}}\coloneq G^{\mathrm{clip}}(\theta)$.
By definition,
$$
\gamma_m^{\mathrm{unclip}}(\theta)=\nabla r_m(\theta)^{\top}F^{-1}\big(G^{\mathrm{unclip}}-R\big).
$$
Hence we can decompose the clipped first-order margin as
\begin{align*}
\nabla r_m(\theta)^{\top}F^{-1}\big(G^{\mathrm{clip}}-R\big)
&=
\nabla r_m(\theta)^{\top}F^{-1}\big(G^{\mathrm{unclip}}-R\big)
-\nabla r_m(\theta)^{\top}F^{-1}\big(G^{\mathrm{unclip}}-G^{\mathrm{clip}}\big) \\
&=
\gamma_m^{\mathrm{unclip}}(\theta)
-\nabla r_m(\theta)^{\top}F^{-1}\big(G^{\mathrm{unclip}}-G^{\mathrm{clip}}\big).
\end{align*}
Now insert $F^{-1/2}$ and apply Cauchy-Schwarz inequality:
\begin{align*}
\Big|\nabla r_m(\theta)^{\top}F^{-1}\big(G^{\mathrm{unclip}}-G^{\mathrm{clip}}\big)\Big|
&=
\Big|\big(F^{-1/2}\nabla r_m(\theta)\big)^{\top}\big(F^{-1/2}(G^{\mathrm{unclip}}-G^{\mathrm{clip}})\big)\Big| \\
&\le
\big\|F^{-1/2}\nabla r_m(\theta)\big\|\cdot
\big\|F^{-1/2}\big(G^{\mathrm{unclip}}-G^{\mathrm{clip}}\big)\big\|.
\end{align*}
Combining with the decomposition yields the margin bound
$$
\nabla r_m(\theta)^{\top}F^{-1}\big(G^{\mathrm{clip}}-R\big)
\ge
\gamma_m^{\mathrm{unclip}}(\theta)
-
\big\|F^{-1/2}\nabla r_m(\theta)\big\|\cdot
\big\|F^{-1/2}\big(G^{\mathrm{unclip}}-G^{\mathrm{clip}}\big)\big\|.
$$
Therefore, if $\gamma_m^{\mathrm{unclip}}(\theta)\ge \kappa_m>0$ and
$$
\big\|F^{-1/2}\big(G^{\mathrm{unclip}}-G^{\mathrm{clip}}\big)\big\|
\le
\frac{\kappa_m}{\big\|F^{-1/2}\nabla r_m(\theta)\big\|},
$$
then $\nabla r_m(\theta)^{\top}F^{-1}(G^{\mathrm{clip}}-R)\ge 0$, which proves the claim.
\end{proof}

\subsection{Proof of \Cref{theorem:pl for raw}}
\label{appendix:proof-pl}

We first recall the PL condition for maximization.
\begin{definition}
Let $V:\R^n \to \R$ be a differentiable objective function that we aim to \emph{maximize}. Define the set of global maximizers $\Theta^\ast \coloneq  \arg\max_{\theta \in \R^n} V(\theta)$, and assume $\Theta^\ast \neq \emptyset$. We say that $V$ satisfies the $\mu$-PL condition with parameter $\mu>0$ on a set $\Omega \subseteq \R^n$ if
\begin{equation}
\frac{1}{2}\big\|\nabla_\theta V(\theta)\big\|^2
\ge
\mu\big(V(\theta^\ast) - V(\theta)\big),\; \forall \theta \in \Omega,\; \theta^\ast \in \Theta^\ast
\end{equation}
\end{definition}

In words, the PL condition ties gradient magnitude to global suboptimality, such that any point with a small gradient norm must have an objective value close to the global maximum, even though $V$ needs not be convex. We now prove that the scalarized RFT objective satisfies this condition under Assumptions~\ref{assumption:bounded-score}--\ref{assumption:aligned-gradients}.

\begin{proof}
We first differentiate $V(\vect{x};\theta)$ with respect to $\theta$, using the log-derivative trick gives:
\begin{align*}
\nabla_\theta V(\vect{x};\theta)
&= \E_{\vect{y}\sim p_\theta(\cdot\mid \vect{x})}
\Big[s(\vect{x},\vect{y})\nabla_\theta \ln p_\theta(\vect{y}\mid \vect{x})\Big]\\
&= \E_{\vect{y}\sim p_\theta(\cdot\mid \vect{x})}\Big[s(\vect{x},\vect{y})\sum_{l=1}^{L_\mathrm{out}}\nabla_\theta\ln p_\theta(y_l\mid \vect{x}, \vect{y}_{\leq l-1})\Big] \\
&= \E_{\vect{y}\sim p_\theta(\cdot\mid \vect{x})}\Big[s(\vect{x},\vect{y})\sum_{l=1}^{L_\mathrm{out}}\nabla_\theta\ln \mathrm{softmax}(f(\vect{x}, \vect{y}_{\leq l-1};\theta))_{y_l}\Big] \\
&= \sum_{\vect{y}\in \mathcal{X}^{L_\mathrm{out}}}p_\theta(\vect{y}\mid \vect{x})s(\vect{x},\vect{y})\sum_{l=1}^{L_\mathrm{out}}J^\top_{f(\vect{x}, \vect{y}_{\leq l-1};\theta)}(\vect{e}_{y_l} - p_\theta(\cdot \mid \vect{x},\vect{y}_{\leq l-1})).
\end{align*}
where $J_{f(\vect{x}, \vect{y}_{\leq l-1};\theta)}$ is the Jacobian of $f(\vect{x}, \vect{y}_{\leq l-1};\theta)$ with respsect to $\theta$. $\vect{e}_{y_l}$ denotes the $y_l$'th standard basis vector whose $y_l$'th entry is $1$ and all other entries are $0$. $p_\theta(\cdot\mid\vect{x},\vect{y}_{\leq l-1}) = \mathrm{softmax}(f(\vect{x},\vect{y}_{\leq l-1};\theta)) \in \R^{|\mathcal{X}|}$ returns the next-token (at position $l$) probability distribution. Let $\phi(\vect{x},\vect{y};\theta) \coloneq \sum_{l=1}^{L_\mathrm{out}}J^\top_{f(\vect{x}, \vect{y}_{\leq l-1};\theta)}(\vect{e}_{y_l} - p_\theta(\cdot \mid \vect{x},\vect{y}_{\leq l-1}))$ and $\sigma_\mathrm{max}$ be the largest singular value of $J_{f(\vect{x}, \vect{y}_{\leq l-1};\theta)}$. By definition, we know $\sigma_\mathrm{max} = \max_{l\in \{1,2,\cdots,L_\mathrm{out}\}}\|J_{f(\vect{x},\vect{y}_{\leq l-1};\theta)}\|$. Obtaining the upper bound of $\phi(\vect{x},\vect{y};\theta)$ is nontrivial:
\begin{align*}
    \|\phi(\vect{x},\vect{y};\theta)\| \leq \|J^\top_{f(\vect{x}, \vect{y}_{\leq l-1};\theta)}\|_2 \cdot \|(\vect{e}_{y_l} - p_\theta(\cdot \mid \vect{x},\vect{y}_{\leq l-1}))\| \leq 2\sigma_\mathrm{max}
\end{align*}
because $\|(\vect{e}_{y_l} - p_\theta(\cdot \mid \vect{x},\vect{y}_{\leq l-1}))\| \leq \|(\vect{e}_{y_l} - p_\theta(\cdot \mid \vect{x},\vect{y}_{\leq l-1}))\|_1 \leq 2$ and $\|J^\top_{f(\vect{x}, \vect{y}_{\leq l-1};\theta)}\|_2 = \sigma_\mathrm{max}$ by defnition. However, we cannot lower bound $\phi(\vect{x},\vect{y};\theta)$ without extra structure. So we first assume token probabilities are bounded away from $1$ for all suboptimal $\theta$ in Assumption \ref{assumption:ns}, consistent with practical RFT setups where KL or entropy regularization prevents probabilities from overfitting to $1$ during training. Assumption \ref{assumption:aligned-gradients} requires all nonzero per-token contributions $v_l$ are positively aligned in parameter space and their cosine similarity is uniformly bounded away from zero by $c$. With these assumptions about the structure of policy function, we can derive the following lemmas that will be used in the proof later.
\begin{lemma}
Under Assumption \ref{assumption:ns} and \ref{assumption:aligned-gradients}, we have 
\begin{align}
\label{eq:lower bound of trajectory gradient}
\Big\|\sum_{l=1}^{L_\mathrm{out}}J^\top_{f(\vect{x}, \vect{y}_{\leq l-1};\theta)}(\vect{e}_{y_l} - p_\theta(\cdot \mid \vect{x},\vect{y}_{\leq l-1}))\Big\|^2 \geq cL_\mathrm{out}\sigma_\mathrm{min}^2 \epsilon^2\frac{|\mathcal{X}^{L_\mathrm{out}}|}{|\mathcal{X}^{L_\mathrm{out}}|-1}
\end{align}
\label{lemma:lower bound of trajectory gradient}
\end{lemma}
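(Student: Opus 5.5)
We expand the squared norm of the sum $\sum_{l} v_l$, where $v_l \coloneq v_l(\vect{x},\vect{y};\theta)$ is the token contribution from Assumption~\ref{assumption:aligned-gradients}. This gives
\begin{align*}
\Big\|\sum_{l=1}^{L_\mathrm{out}} v_l\Big\|^2 = \sum_{l}\|v_l\|^2 + \sum_{l\neq k}\langle v_l, v_k\rangle .
\end{align*}
Pairs with $v_l=0$ or $v_k=0$ contribute nothing. For every other pair, Assumption~\ref{assumption:aligned-gradients} gives $\langle v_l,v_k\rangle \ge c\|v_l\|\|v_k\|\ge 0$. We keep only a $c$-fraction of the diagonal. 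Since $c\le 1$, we have $\|v_l\|^2\ge c\|v_l\|^2$, and the cross terms are nonnegative. Together these give $\|\sum_l v_l\|^2 \ge c\sum_l \|v_l\|^2$. A sharper bound $c(\sum_l\|v_l\|)^2$ is also available, but the per-token form is enough because the target is linear in $L_\mathrm{out}$. It therefore suffices to lower bound each $\|v_l\|^2$ uniformly by $\sigma_\mathrm{min}^2\epsilon^2\frac{|\mathcal{X}^{L_\mathrm{out}}|}{|\mathcal{X}^{L_\mathrm{out}}|-1}$.

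For a single token we write $u_l \coloneq \vect{e}_{y_l}-p_\theta(\cdot\mid\vect{x},\vect{y}_{\le l-1})$, so $v_l = J^\top u_l$. We use $\|J^\top u_l\|\ge \sigma_\mathrm{min}\|u_l\|$, with $\sigma_\mathrm{min}$ the smallest singular value of $J_{f}$. This step is valid when $u_l$ lies in the row space of $J^\top$, or when $J^\top$ is injective on the relevant subspace. We will take this as the paper's convention for $\sigma_\mathrm{min}$; note that $u_l$ is always orthogonal to $\mathbf{1}$, so it suffices that $J^\top$ be nondegenerate on $\mathbf{1}^\perp$. Next we bound $\|u_l\|^2$ below. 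Writing $p=p_\theta(\cdot\mid\ldots)$, we have
\begin{align*}
\|u_l\|^2=(1-p_{y_l})^2+\sum_{j\ne y_l}p_j^2 .
\end{align*}
By Cauchy--Schwarz over the remaining $N-1$ coordinates, where $N$ is the number of outcomes, $\sum_{j\neq y_l}p_j^2 \ge (1-p_{y_l})^2/(N-1)$. Hence $\|u_l\|^2\ge (1-p_{y_l})^2\frac{N}{N-1}$. Assumption~\ref{assumption:ns} gives $1-p_{y_l}\ge\epsilon$, and so $\|u_l\|^2\ge \epsilon^2\frac{N}{N-1}$.

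The paper writes the factor as $|\mathcal{X}^{L_\mathrm{out}}|$ rather than $|\mathcal{X}|$. Since $t\mapsto t/(t-1)$ is decreasing and $|\mathcal{X}|\le|\mathcal{X}^{L_\mathrm{out}}|$, we get $\frac{|\mathcal{X}|}{|\mathcal{X}|-1}\ge \frac{|\mathcal{X}^{L_\mathrm{out}}|}{|\mathcal{X}^{L_\mathrm{out}}|-1}$. The stated (weaker) constant therefore follows from the per-token bound. The same observation guarantees that every $v_l$ is nonzero (so every term is covered by the alignment assumption) as long as $\sigma_\mathrm{min}>0$.

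Combining the two parts gives
\begin{align*}
\Big\|\sum_l v_l\Big\|^2\ \ge\ c\sum_{l=1}^{L_\mathrm{out}}\sigma_\mathrm{min}^2\epsilon^2\frac{|\mathcal{X}^{L_\mathrm{out}}|}{|\mathcal{X}^{L_\mathrm{out}}|-1} \;=\; cL_\mathrm{out}\sigma_\mathrm{min}^2\epsilon^2\frac{|\mathcal{X}^{L_\mathrm{out}}|}{|\mathcal{X}^{L_\mathrm{out}}|-1},
\end{align*}
which is \Cref{eq:lower bound of trajectory gradient}. The main obstacle is the singular-value step $\|J^\top u_l\|\ge\sigma_\mathrm{min}\|u_l\|$. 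In general $J^\top$ maps $\R^{|\mathcal{X}|}$ into $\R^n$ and can have a kernel. We would handle this by defining $\sigma_\mathrm{min}$ as the smallest singular value of $J^\top$ restricted to $\mathbf{1}^\perp$, taken over all positions $l$. We would state this explicitly as part of the interpretation of $\sigma_\mathrm{min}$ in \Cref{theorem:pl for raw}.
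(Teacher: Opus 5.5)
Your proof is correct and uses the same ingredients as the paper: the Cauchy--Schwarz bound on $\|\vect{e}_{y_l}-p_\theta(\cdot\mid\vect{x},\vect{y}_{\le l-1})\|^2$, non-saturation, and $\|J^\top u\|\ge\sigma_\mathrm{min}\|u\|$. The only real difference is how you aggregate over tokens.

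The paper keeps the cross terms. It uses $\langle v_l,v_k\rangle\ge c\|v_l\|\|v_k\|$ and $c\le 1$ to get $\|\sum_l v_l\|^2\ge c\big(\sum_l\|v_l\|\big)^2\ge c\,\sigma_\mathrm{min}^2\big(\sum_l\|\vect{e}_{y_l}-p_\theta(\cdot\mid\vect{x},\vect{y}_{\le l-1})\|\big)^2$. That chain actually delivers $c\,L_\mathrm{out}^2\sigma_\mathrm{min}^2\epsilon^2\frac{|\mathcal{X}^{L_\mathrm{out}}|}{|\mathcal{X}^{L_\mathrm{out}}|-1}$, which the paper then states in the weaker linear form. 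You instead drop the cross terms as nonnegative and keep $c\sum_l\|v_l\|^2$, which lands exactly on the stated constant.

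Your route is more economical. It uses the alignment assumption only through $\langle v_l,v_k\rangle\ge 0$, so the factor $c$ in your conclusion is slack; you in fact get the bound with $c$ replaced by $1$. The paper's route is where the alignment constant genuinely matters, because it produces the quadratic-in-$L_\mathrm{out}$ scaling.

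Your side remarks agree with what the paper does implicitly. Its Cauchy--Schwarz step divides directly by $|\mathcal{X}^{L_\mathrm{out}}|-1$ even though the sum runs over $\mathcal{X}\setminus\{y_l\}$; this is valid exactly by your monotonicity observation. It also asserts $\|J^\top u\|\ge\sigma_\mathrm{min}\|u\|$ for all $u$ without comment. So your restriction of $\sigma_\mathrm{min}$ to $\mathbf{1}^\perp$ is a clarification rather than a change of argument.
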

\begin{remark}
The left-hand side of \Cref{eq:lower bound of trajectory gradient} quantifies the squared norm of the summed token-level policy gradient contributions along an output sequence of length $L_{\mathrm{out}}$, and thus measures the overall strength of the gradient signal induced by that sequence. The lower bound shows that this signal scales with the alignment constant $c$, the smallest singular value $\sigma_{\min}$ of the logit Jacobian, the sequence length $L_{\mathrm{out}}$, and the probability gap $\epsilon$. Here, $\epsilon$ lower bounds $1 - p_\theta(y_l \mid \vect{x}, \vect{y}_{\le l-1})$ and captures the policy’s uncertainty in predicting the next token. Smaller $\epsilon$ corresponds to higher confidence and more likely token sampling, but also leads to weaker gradient signals.
\end{remark}
\begin{proof}[Proof of Lemma \ref{lemma:lower bound of trajectory gradient}]
We lower bound the L2-norm of the vector $\vect{e}_{y_l} - p_\theta(\cdot\mid\vect{x},\vect{y}_{\leq l-1})$:
\begin{align}
    \label{eq:lower bound of prob dist}
    \|\vect{e}_{y_l} - p_\theta(\cdot\mid\vect{x},\vect{y}_{\leq l-1})\|^2 &= \big(1-p_\theta(y_l\mid \vect{x},\vect{y}_{\leq l-1})\big)^2 + \sum_{y^\prime \neq y_l,y^\prime\in\mathcal{X}}p_\theta^2(y^\prime\mid\vect{x},\vect{y}_{\leq l-1}) \nonumber \\ \nonumber
    &\geq \big(1-p_\theta(y_l\mid \vect{x},\vect{y}_{\leq l-1})\big)^2 + \frac{\big(\sum_{y^\prime \neq y_l,y^\prime\in\mathcal{X}}p_\theta(y^\prime\mid \vect{x},\vect{y}_{\leq l-1})\big)^2}{|\mathcal{X}^{L_\mathrm{out}}|-1} \\ \nonumber
    &=  \big(1-p_\theta(y_l\mid \vect{x},\vect{y}_{\leq l-1})\big)^2 + \frac{\big(1-p_\theta(y_l\mid\vect{x},\vect{y}_{\leq l-1})\big)^2}{|\mathcal{X}^{L_\mathrm{out}}|-1} \\ \nonumber
    &= \frac{|\mathcal{X}^{L_\mathrm{out}}|}{|\mathcal{X}^{L_\mathrm{out}}|-1}\big(1-p_\theta(y_l\mid\vect{x},\vect{y}_{\leq l-1})\big)^2 \\ 
    &\geq \epsilon^2\frac{|\mathcal{X}^{L_\mathrm{out}}|}{|\mathcal{X}^{L_\mathrm{out}}|-1}.
\end{align}
The first inequality is derived from the Cauchy–Schwarz inequality in Euclidean space $(\sum p_\theta)^2 \leq (\sum p_\theta^2)(\sum 1^2)$, and the last inequality is derived from the Assumption \ref{assumption:ns}. Then we have:
\begin{align*}
    \Big\|\sum_{l=1}^{L_\mathrm{out}}J^\top_{f(\vect{x}, \vect{y}_{\leq l-1};\theta)}(\vect{e}_{y_l} - p_\theta(\cdot \mid \vect{x},\vect{y}_{\leq l-1}))\Big\|^2 &= \sum_{l=1}^{L_\mathrm{out}}\Big\|v_l\Big\|^2 + 2\sum_{l<k}\langle v_l, v_k\rangle \\
    &\geq \sum_{l=1}^{L_\mathrm{out}}\Big\|v_l\Big\|^2 + 2c\sum_{l\leq k}\|v_l\|\|v_k\| \\
    & \geq c\Big(\sum_{l=1}^{L_\mathrm{out}}\|v_l\|\Big)^2 \\
    & \geq c\sigma_\mathrm{min}^2\Big(\sum_{l=1}^{L_\mathrm{out}}\|\vect{e}_{y_l} - p_\theta(\cdot\mid\vect{x},\vect{y}_{\leq l-1})\|\Big)^2 \\
    & \geq cL_\mathrm{out}\sigma_\mathrm{min}^2 \epsilon^2\frac{|\mathcal{X}^{L_\mathrm{out}}|}{|\mathcal{X}^{L_\mathrm{out}}|-1}.
\end{align*}
Here, $\sigma_{\min}$ denotes the smallest singular value of the Jacobian
$J_{f(\vect{x},\vect{y}_{\le l-1};\theta)}$.
The second inequality follows from the Assumption \ref{assumption:aligned-gradients} and factors out $c$. The third inequality applies the singular value bound $\|J_{f(\vect{x},\vect{y}_{\le l-1};\theta)}^\top u\| \ge \sigma_{\min}\|u\|,\;\forall u$. The final inequality uses the result from \Cref{eq:lower bound of prob dist}.
\end{proof}

\begin{lemma}
\label{lemma:bounded value function difference}
Let Assumption~\ref{assumption:bounded-score} hold and $\vect{y}^\ast$ is the unique maximizer of the scalarized reward, $\vect{y}^\ast \in \arg\max_{\vect{y}} s(\vect{x},\vect{y})$. Assume the policy class is rich enough to realize a deterministic policy on $\vect{x}$ that always outputs $\vect{y}^\ast$, i.e., there exists a parameter vector $\theta^\ast$ such that
\begin{equation*}
p_{\theta^\ast}(\vect{y}\mid \vect{x}) =
\begin{cases}
1, & \text{if } \vect{y} = \vect{y}^\ast,\\[3pt]
0, & \text{otherwise}.
\end{cases}
\end{equation*}
Then $\theta^\ast$ is an optimal parameter for the scalarized value at $\vect{x}$, and the optimal value becomes $V(\vect{x};\theta^\ast)
= \sum_{\vect{y}\in \mathcal{X}^{L_\mathrm{out}}} p_{\theta^\ast}(\vect{y}\mid \vect{x})s(\vect{x},\vect{y}) = s(\vect{x},\vect{y}^\ast)$.
For any $\theta$, we have: 
\begin{align*}
V(\vect{x};\theta^\ast) - V(\vect{x};\theta) \leq 2B\big(1 - p_\theta(\vect{y}^\ast\mid \vect{x})\big).
\end{align*}
\end{lemma}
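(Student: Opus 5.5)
The argument has two parts: first check that $\theta^\ast$ is optimal and compute $V(\vect{x};\theta^\ast)$, then bound the suboptimality gap by splitting the expectation according to whether the completion is $\vect{y}^\ast$.

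\textbf{Optimality of $\theta^\ast$.} For any $\theta$, $V(\vect{x};\theta)=\sum_{\vect{y}}p_\theta(\vect{y}\mid\vect{x})s(\vect{x},\vect{y})$ is a convex combination of the values $s(\vect{x},\vect{y})$. Since $\vect{y}^\ast$ maximizes $s(\vect{x},\cdot)$, this gives $V(\vect{x};\theta)\le s(\vect{x},\vect{y}^\ast)$. The realizability hypothesis supplies $\theta^\ast$ whose policy puts all mass on $\vect{y}^\ast$. Plugging it into \Cref{eq:value function} yields $V(\vect{x};\theta^\ast)=s(\vect{x},\vect{y}^\ast)$. So the upper bound is attained and $\theta^\ast$ is a global maximizer.

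\textbf{The gap bound.} Fix $\theta$ and write $p^\ast\coloneq p_\theta(\vect{y}^\ast\mid\vect{x})$. Separating the $\vect{y}^\ast$ term from the rest of the sum gives
\begin{align*}
V(\vect{x};\theta^\ast)-V(\vect{x};\theta)
= s(\vect{x},\vect{y}^\ast)(1-p^\ast)-\sum_{\vect{y}\neq\vect{y}^\ast}p_\theta(\vect{y}\mid\vect{x})\,s(\vect{x},\vect{y})
= \sum_{\vect{y}\neq\vect{y}^\ast}p_\theta(\vect{y}\mid\vect{x})\big(s(\vect{x},\vect{y}^\ast)-s(\vect{x},\vect{y})\big),
\end{align*}
where the second equality uses $1-p^\ast=\sum_{\vect{y}\neq\vect{y}^\ast}p_\theta(\vect{y}\mid\vect{x})$.

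By Assumption~\ref{assumption:bounded-score}, $|s|\le B$, so each difference satisfies $s(\vect{x},\vect{y}^\ast)-s(\vect{x},\vect{y})\le 2B$. The weights sum to $1-p^\ast$, so the gap is at most $2B(1-p^\ast)$, which is the claim.

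The same decomposition also gives the matching lower bound $\Delta_s(1-p^\ast)$ from the margin. This is not needed here, but it may be useful later.

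This is elementary, and I do not expect any real obstacle. The only point requiring care is the realizability of a deterministic policy. A softmax parameterization cannot exactly produce probability $1$ with finite logits. I would therefore keep the existence of such a $\theta^\ast$ as the stated hypothesis, or read it as a supremum approached in the limit. Either way the bound is unaffected, because it depends only on $s(\vect{x},\vect{y}^\ast)$ and not on $\theta^\ast$ itself.
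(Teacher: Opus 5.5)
Your proof is correct and follows the paper's route: split off the $\vect{y}^\ast$ term, use $1-p^\ast=\sum_{\vect{y}\neq\vect{y}^\ast}p_\theta(\vect{y}\mid\vect{x})$, and bound the score gap by $2B$. The paper does the last step by pulling out $\min_{\vect{y}\neq\vect{y}^\ast}s(\vect{x},\vect{y})$, while you bound each difference directly, which is equivalent. Your convex-combination check that $\theta^\ast$ is optimal fills in a step the paper only asserts.
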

\begin{remark}
The Lemma \ref{lemma:bounded value function difference} shows that the value gap is Lipschitz in the probability gap on $\vect{y}^\ast$. Every unit of probability that fails to go to the optimal sequence can hurt the value by at most $2B$. Equivalently, driving the model towards near-deterministic predictions on $\vect{y}^\ast$, $p_\theta(\vect{y}^\ast\mid \vect{x}) \approx 1$, is both necessary and sufficient (up to the factor $2B$) to make the scalarized value nearly optimal. This connects convergence in value directly to how sharply the policy concentrates on the optimal sequence.
\end{remark}

\begin{proof}[Proof of Lemma \ref{lemma:bounded value function difference}]
We can write $V(\vect{x};\theta^\ast) - V(\vect{x};\theta)$ as follows:
\begin{align*}
    V(\vect{x};\theta) - V(\vect{x};\theta) &= s(\vect{x},\vect{y}^\ast) - \sum_{\vect{y}\in\mathcal{X}^{|L_\mathrm{out}|}}p_\theta(\vect{y}\mid \vect{x})s(\vect{x},\vect{y})  \\
    &= s(\vect{x},\vect{y}^\ast) - \big(p_\theta(\vect{y}^\ast\mid\vect{x})s(\vect{x},\vect{y}^\ast) + \sum_{\vect{y}\neq \vect{y}^\ast}p_\theta(\vect{y}\mid\vect{x})s(\vect{x},\vect{y})\big) \\
    &= \big(1-p_\theta(\vect{y}^\ast\mid \vect{x})\big)s(\vect{x},\vect{y}^\ast) - \sum_{\vect{y}\neq \vect{y}^\ast}p_\theta(\vect{y}\mid\vect{x})s(\vect{x},\vect{y}) \\
    & \leq \big(1-p_\theta(\vect{y}^\ast\mid \vect{x})\big)s(\vect{x},\vect{y}^\ast) - \min_{\vect{y}\neq \vect{y}^\ast} s(\vect{x},\vect{y}) \sum_{\vect{y}\neq \vect{y}^\ast}p_\theta(\vect{y}\mid\vect{x}) \\
    & = \big(1-p_\theta(\vect{y}^\ast\mid \vect{x})\big)s(\vect{x},\vect{y}^\ast) - \min_{\vect{y}\neq \vect{y}^\ast} s(\vect{x},\vect{y})\big(1-p_\theta(\vect{y}^\ast\mid \vect{x})\big) \\
    & = \big(s(\vect{x},\vect{y})-\min_{\vect{y}\neq \vect{y}^\ast} s(\vect{x},\vect{y})\big)\big(1-p_\theta(\vect{y}^\ast\mid\vect{x})\big) \\
    & \leq 2B\big(1 - p_\theta(\vect{y}^\ast\mid \vect{x})\big).
\end{align*}
The last inequality follows from the Assumption \ref{assumption:bounded-score} that $|s(\vect{x},\vect{y})|\leq B$ and $\|s(\vect{x},\vect{y}^\ast) - \min_{\vect{y}\neq \vect{y}^\ast} s(\vect{x},\vect{y})\| \leq \|s(\vect{x},\vect{y}^\ast) - \min_{\vect{y}\neq \vect{y}^\ast} s(\vect{x},\vect{y})\|_1 \leq 2B$. 
\end{proof}

Defining a direction vector:
$$
u(\vect{x},\vect{y}^\ast;\theta) \coloneq \frac{\phi(\vect{x},\vect{y}^\ast;\theta)}{\|\phi(\vect{x},\vect{y}^\ast;\theta)\|} \coloneq \frac{\sum_{l=1}^{L_\mathrm{out}}J_{f(\vect{x},\vect{y}_{\le l-1};\theta)}^\top
\big(\vect{e}_{y_l} - p_\theta(\cdot \mid \vect{x},\vect{y}_{\le l-1})\big)}{\big\|\sum_{l=1}^{L_\mathrm{out}}J_{f(\vect{x},\vect{y}_{\le l-1};\theta)}^\top
\big(\vect{e}_{y_l} - p_\theta(\cdot \mid \vect{x},\vect{y}_{\le l-1})\big)\big\|},
$$ 
which is the normalized log-probability gradient of the optimal trajectory. We first discuss its inner product with $\phi(\vect{x},\vect{y}^\ast;\theta)$ which will be used for the proof later.
\begin{equation}
\langle \phi(\vect{x},\vect{y};\theta),u(\vect{x},\vect{y}^\ast;\theta)\rangle
\begin{cases}
=\|\phi(\vect{x},\vect{y}^\ast;\theta)\| \geq \sqrt{cL_\mathrm{out}\sigma_\mathrm{min}^2 \epsilon^2\frac{|\mathcal{X}^{L_\mathrm{out}}|}{|\mathcal{X}^{L_\mathrm{out}}|-1}}, & \text{if } \vect{y} = \vect{y}^\ast,\\[3pt]
\leq \|\phi(\vect{x},\vect{y};\theta)\|\leq 2\sigma_\mathrm{max}, & \forall \vect{y}.
\end{cases}
\label{eq:dot product with phi}
\end{equation}
Then the directional derivative of $V(\vect{x};\theta)$ along $u(\vect{x,\vect{y}^\ast;\theta})$ is:
\begin{align*}
    \langle \nabla_\theta V(\vect{x};\theta),u(\vect{x,\vect{y}^\ast;\theta}) \rangle = \E_{\vect{y}\sim p_\theta(\cdot\mid \vect{x})}\Big[s(\vect{x},\vect{y})\langle \phi(\vect{x},\vect{y};\theta),u(\vect{x,\vect{y}^\ast;\theta})\rangle \Big] .
\end{align*}
Extracting $\vect{y}^\ast$ out yields:
\begin{align}
\langle \nabla_\theta V(\vect{x};\theta),u(\vect{x,\vect{y}^\ast;\theta}) \rangle = \underbrace{p_\theta(\vect{y}^\ast\mid\vect{x})s(\vect{x},\vect{y}^\ast)\langle \phi(\vect{x},\vect{y}^\ast;\theta),u(\vect{x,\vect{y}^\ast;\theta})\rangle}_{(I)} + \nonumber \\   \underbrace{\sum_{\vect{y}\neq \vect{y}^\ast}p_\theta(\vect{y}\mid\vect{x})s(\vect{x},\vect{y})\langle \phi(\vect{x},\vect{y};\theta),u(\vect{x,\vect{y}^\ast;\theta})\rangle}_{(II)}.
\label{eq:directional derivative}
\end{align}
We lower bound $(I)$ and $(II)$ seperately. Starting with (I), by \Cref{eq:dot product with phi}, we know that:
\begin{align*}
    (I) \geq  p_\theta(\vect{y}^\ast\mid\vect{x})s(\vect{x},\vect{y}^\ast)\sqrt{cL_\mathrm{out}\sigma_\mathrm{min}^2 \epsilon^2\frac{|\mathcal{X}^{L_\mathrm{out}}|}{|\mathcal{X}^{L_\mathrm{out}}|-1}}.
\end{align*}
Recalling that $|\phi(\vect{x},\vect{y})| \leq B$ from Assumption \ref{assumption:bounded-score} and the property $\langle \phi(\vect{x},\vect{y};\theta),u(\vect{x},\vect{y}^\ast;\theta)\rangle \leq 2\sigma_\mathrm{max}$, we get:
\begin{align*}
    (II) \geq -2B\sigma_\mathrm{max}\sum_{\vect{y}\neq \vect{y^\ast}}p_\theta(\vect{y}\mid\vect{x}) = -2B\sigma_\mathrm{max}(1-p_\theta(\vect{y}^\ast\mid \vect{x})).
\end{align*}
For $(II)$, the worst case happens when, for every $\vect{y}\neq \vect{y}^\ast$, the term’s contribution to the directional derivative along $u$ is as adverse as allowed by the assumed bounds. Plugging the above two inequalities into \Cref{eq:directional derivative}, rearranging the equality, it follows that:
\begin{align}
    \langle \nabla_\theta V(\vect{x};\theta),u(\vect{x,\vect{y}^\ast;\theta}) \rangle \geq s(\vect{x},\vect{y}^\ast) \gamma - \big(s(\vect{x},\vect{y}^\ast)\gamma  + 2B\sigma_\mathrm{max}\big)\big(1-p_\theta(\vect{y}^\ast\mid \vect{x})\big)&, \nonumber \\ 
    \text{with } \gamma = \sqrt{cL_\mathrm{out}\sigma_\mathrm{min}^2 \epsilon^2\frac{|\mathcal{X}^{L_\mathrm{out}}|}{|\mathcal{X}^{L_\mathrm{out}}|-1}}&.
    \label{eq:directional derivative bound}
\end{align}
Intuitively, this inequality shows that when the policy is close to the optimal $\theta^\ast$ and $\vect{y}^\ast$ is the (unique) optimal trajectory maximizing the scalarized reward, the directional derivative along $u$ admits a large lower bound and thus yields a strong, well-aligned gradient update that pushes additional probability mass toward $\vect{y}^\ast$. 

Recalling that we have lower bounded value function gap with the sequence probability gap in Lemma \ref{lemma:bounded value function difference}, where we assume $p_{\theta^\ast}(\vect{y}^\ast\mid\vect{x})=1$. We are close to the result and want a final inequality that looks like:
\begin{equation}
\|\nabla_\theta V(\vect{x};\theta)\| \geq \langle \nabla_\theta V(\vect{x};\theta), u(\vect{x},
\vect{y}^\ast;\theta)\rangle \geq \mu^\prime\big(p_{\theta^\ast}(\vect{y}^\ast\mid\vect{x}) - p_\theta(\vect{y}^\ast\mid\vect{x})\big) \geq \mu\big(V(\vect{x};\theta^\ast) - V(\vect{x};\theta)\big).
\label{eq:final inequality}
\end{equation}
The first inequality follows from the Cauchy–Schwarz inequality applied to the unit vector and the last inequality comes from Lemma \ref{lemma:bounded value function difference}. So the only task left here is to get an $\mu^\prime$ that satisfies the second inequality for all $\theta$. Combining with \Cref{eq:directional derivative bound}, we have:
\begin{align*}
    \mu^\prime \leq \frac{p_\theta(\vect{y}^\ast\mid\vect{x})}{1-p_\theta(\vect{y}^\ast\mid \vect{x})}s(\vect{x},\vect{y}^\ast)\gamma - 2B\sigma_\mathrm{max}.
\end{align*}
Because the function $\frac{p}{1-p}$ is increasing on $[0,1)$, a sufficient and tight choice for $\mu^\prime$ is:
$$
\mu^\prime \coloneq \frac{p_\theta(\vect{y}^\ast\mid\vect{x})}{1-p_\theta(\vect{y}^\ast\mid \vect{x})}s(\vect{x},\vect{y}^\ast)\gamma - 2B\sigma_\mathrm{max}
$$
Therefore, the final $\mu$-PL condition meets from \Cref{eq:final inequality} and combing the bound from Lemma \ref{lemma:bounded value function difference} leads to:
\begin{align}
    \mu = \frac{1}{2B}\Big(\frac{p_\theta(\vect{y}^\ast\mid\vect{x})}{1-p_\theta(\vect{y}^\ast\mid \vect{x})}s(\vect{x},\vect{y}^\ast)\gamma - 2B\sigma_\mathrm{max}\Big)
\end{align}
\end{proof}

\newpage
\section{Algorithms}
\label{appendix:algorithms}
We provide pseudocode for algorithms that, to our knowledge, have not been previously applied to LLM alignment. For dynamic weighting \citep{lu2026learningoptimizemultiobjectivealignment}, PAMA \citep{10.1007/978-3-032-06078-5_15}, and common linear scalarization, we follow their original implementations.

\begin{algorithm}[ht]
\caption{\method{} on top of GRPO}
\label{alg:ctwa}
\begin{algorithmic}[1]
\STATE \textbf{Inputs:} Objectives $\{r_m\}_{m=1}^M$, scalarization weights $\{\lambda_m\}$, CTWA targets $\{c_m^\ast\}$, EMA rate $\tau\in(0,1]$, weight learning rate $\eta_\lambda$.
\STATE Initialize $u_m \leftarrow \log \lambda_m$.
\FOR{each training iteration $t=1,2,\ldots$}
  \STATE Set reference policy $\theta_{\mathrm{ref}} \leftarrow \theta$.
  \STATE Sample a batch of prompts $\{\vect{x}_i\}_{i=1}^B$.
  \FOR{each prompt $\vect{x}_i$}
    \STATE Sample $K$ completions $\{\vect{y}_i^{(k)}\}_{k=1}^K \sim p_{\theta_{\mathrm{ref}}}(\cdot\mid\vect{x}_i)$.
    \STATE Evaluate objective rewards $r_m(\vect{x}_i,\vect{y}_i^{(k)})$ and scalar scores $s_{\vect{\lambda}}(\vect{x}_i,\vect{y}_i^{(k)})$.
    \STATE Compute group-normalized GRPO advantages $\{A^b_{i,k}\}_{k=1}^K$ from $\{s_{\vect{\lambda}}(\vect{x}_i,\vect{y}_i^{(k)})\}_{k=1}^K$.
  \ENDFOR
  \STATE \textbf{Inner update (GRPO).} Update $\theta$ by optimizing the standard clipped GRPO surrogate using $\theta_{\mathrm{ref}}$ and $\{A^b_{i,k}\}$.
  \STATE \textbf{CTWA statistics.} Using the same batch, compute completion-level clipped advantage weights $w(\vect{x}_i,\vect{y}_i^{(k)};\theta)$.
  \STATE Compute per-objective covariance $\{c_m\}_{m=1}^M$ by aggregating within each prompt group and averaging across batch.
  \STATE Update EMA: $\bar{c}_m \leftarrow (1-\tau)\bar{c}_m + \tau c_m$ for all $m$.
  \STATE Compute deficits $\delta_m \leftarrow [\,c_m^\ast-\bar{c}_m\,]_+$.
  \STATE \textbf{Outer update (CTWA).} Update log-weights $u_m \leftarrow u_m + \eta_\lambda\,\delta_m$ and set $\lambda_m \leftarrow \exp(u_m)$ for all $m$.
\ENDFOR
\end{algorithmic}
\end{algorithm}

\begin{algorithm}[ht]
\caption{GradNorm on top of GRPO}
\label{alg:gradnorm}
\begin{algorithmic}[1]
\STATE \textbf{Inputs:} Objectives $\{r_m\}_{m=1}^M$, GradNorm exponent $\alpha$, weight learning rate $\eta_w$.
\STATE Initialize weights $w_m \leftarrow 1$ for all $m$; initialize reference losses $L_m^{(0)} \leftarrow \texttt{unset}$.
\FOR{each training iteration $t=1,2,\ldots$}
  \STATE \textbf{Same as CTWA.} Sample prompts and $K$ completions; compute rewards and GRPO quantities needed to form a clipped surrogate.
  \STATE \textbf{Per-objective gradients.} Compute each objective's GRPO surrogate loss value $L_m$ and its policy gradient $g_m$ (using the same KL/entropy regularization as the base run).
  \IF{$L_m^{(0)}$ is unset}
    \STATE Set $L_m^{(0)} \leftarrow L_m$ for all $m$.
  \ENDIF
  \STATE \textbf{GradNorm targets.} Compute gradient norms $G_m \leftarrow \|g_m\|_2$ and each objective's relative training rate from the loss ratio $L_m / L_m^{(0)}$; convert it to a target scaled norm using exponent $\alpha$.
  \STATE \textbf{Weight update.} Update $w_m$ with step size $\eta_w$ to match the scaled norms $w_m G_m$ to their targets; renormalize weights.
  \STATE \textbf{Policy update.} Apply one optimization step using the weighted gradient $\sum_{m=1}^M w_m g_m$.
\ENDFOR
\end{algorithmic}
\end{algorithm}

\begin{algorithm}[ht]
\caption{Nash-MTL on top of GRPO}
\label{alg:nashmtl}
\begin{algorithmic}[1]
\STATE \textbf{Inputs:} Objectives $\{r_m\}_{m=1}^M$, Nash-MTL optimization iterations $T_{\mathrm{Nash}}$, Nash learning rate $\eta_\alpha$, numerical constant $\epsilon$, maximum weight $\alpha_{\max}$.
\STATE Initialize Nash bargaining weights $\alpha_m \leftarrow 1$ for all $m$.
\FOR{each training iteration $t=1,2,\ldots$}
  \STATE \textbf{Same as CTWA.} Sample prompts and $K$ completions; compute per-objective rewards, per-objective advantages, and GRPO quantities needed to form a clipped surrogate.
  \STATE \textbf{Per-objective gradients.} For each objective $m$, compute its GRPO surrogate loss value $L_m$ and policy gradient $g_m$ using the same KL/entropy regularization as the base run.
  \STATE \textbf{Gradient matrix.} Form the per-objective gradient matrix $J \in \mathbb{R}^{M \times P}$ by stacking flattened gradients, where the $m$-th row is $g_m^\top$.
  \STATE \textbf{Nash bargaining weights.} Compute the Gram matrix $G \leftarrow JJ^\top$ and approximately solve the Nash-MTL bargaining problem for $\boldsymbol{\alpha}\in\mathbb{R}_+^M$, warm-started from the previous $\boldsymbol{\alpha}$, such that $      \alpha_m (G\boldsymbol{\alpha})_m \approx 1$. Clamp $\boldsymbol{\alpha}$ to $[\epsilon,\alpha_{\max}]$ for numerical stability.
  \STATE \textbf{Policy update.} Apply one optimization step using the Nash-weighted gradient $\sum_{m=1}^M \alpha_m g_m$.
\ENDFOR
\end{algorithmic}
\end{algorithm}

\begin{algorithm}[ht]
\caption{FAMO on top of GRPO}
\label{alg:famo}
\begin{algorithmic}[1]
\STATE \textbf{Inputs:} Objectives $\{r_m\}_{m=1}^M$, weight learning rate $\eta_w$, weight decay $\gamma$, numerical constant $\epsilon$, loss margin $\delta$.
\STATE Initialize weight logits $\mathbf{w}\leftarrow \mathbf{0}$ and lower bounds $L_m^{\min}\leftarrow \texttt{unset}$.
\FOR{each training iteration $t=1,2,\ldots$}
  \STATE \textbf{Same as CTWA.} Sample prompts and $K$ completions; compute per-objective rewards, advantages, and GRPO quantities needed to form a clipped surrogate.
  \STATE \textbf{Per-objective losses.} Compute each objective's GRPO surrogate loss $L_m$ using the same KL/entropy regularization as the base run.
  \IF{$L_m^{\min}$ is unset}
    \STATE Set $L_m^{\min}\leftarrow L_m-\delta$ for all $m$.
  \ENDIF
  \STATE \textbf{FAMO weighted loss.} Let $\mathbf{z}\leftarrow\mathrm{softmax}(\mathbf{w})$, $d_m\leftarrow L_m-L_m^{\min}+\epsilon$, and $c\leftarrow\sum_m z_m/d_m$; form $L_{\mathrm{FAMO}}\leftarrow\sum_m (z_m/c)\log d_m$.
  \STATE \textbf{Policy update.} Apply one optimization step using $\nabla_\theta L_{\mathrm{FAMO}}$.
  \STATE \textbf{Weight update.} Recompute post-update losses $\widetilde{L}_m$ and set $\Delta_m\leftarrow \log(L_m-L_m^{\min}+\epsilon)-\log(\widetilde{L}_m-L_m^{\min}+\epsilon)$.
  \STATE Update $\mathbf{w}$ with Adam using the gradient induced by $\boldsymbol{\Delta}$ through $\mathbf{z}=\mathrm{softmax}(\mathbf{w})$.
\ENDFOR
\end{algorithmic}
\end{algorithm}

\newpage
\begin{algorithm}[ht]
\caption{MGDA on top of GRPO}
\label{alg:mgda}
\begin{algorithmic}[1]
\STATE \textbf{Inputs:} Objectives $\{r_m\}_{m=1}^M$.
\FOR{each training iteration $t=1,2,\ldots$}
  \STATE \textbf{Same as CTWA.} Sample prompts and $K$ completions; compute rewards and GRPO quantities needed to form a clipped surrogate.
  \STATE \textbf{Per-objective gradients.} Compute each objective's GRPO surrogate policy gradient $g_m$ (using the same KL/entropy regularization as the base run).
  \STATE \textbf{MGDA weights.} Solve for simplex weights $w\in\Delta^{M}$ that minimize the norm of the combined gradient, i.e., find the minimum-norm convex combination of $\{g_m\}_{m=1}^M$.
  \STATE \textbf{Policy update.} Apply one optimization step using the weighted gradient $\sum_{m=1}^M w_m g_m$.
\ENDFOR
\end{algorithmic}
\end{algorithm}

% \begin{algorithm}[ht]
% \caption{Weighted Tchebycheff Scalarization on top of GRPO}
% \label{alg:tchebycheff}
% \begin{algorithmic}[1]
% \STATE \textbf{Inputs:} Objectives $\{r_m\}_{m=1}^M$, scalarization weights $\{w_m\}$.
% \STATE Initialize running reference point $z\in\mathbb{R}^M$ using the first batch.
% \FOR{each training iteration $t=1,2,\ldots$}
%   \STATE \textbf{Same as CTWA.} Sample prompts and $K$ completions; compute per-objective token-level rewards $r_m(\vect{x}_i,\vect{y}_i^{(k)})$ and GRPO quantities needed to form a clipped surrogate.
%   \STATE \textbf{Update reference point.} Update $z_m \leftarrow \max\{z_m,\; \max_{\text{batch,tokens}} r_m\}$ for all $m$.
%   \STATE \textbf{Scalar score.} For each token, compute the weighted Tchebycheff scalar reward $s(\cdot) \leftarrow - \max_{m} \; w_m \bigl(z_m - r_m(\cdot)\bigr)$.
%   \STATE \textbf{Policy update.} Compute GRPO advantages from the scalar score $s$ and apply one optimization step.
% \ENDFOR
% \end{algorithmic}
% \end{algorithm}

\begin{algorithm}[ht]
\caption{Smooth Tchebycheff Scalarization with Online Mirror Descent on top of GRPO}
\label{alg:smooth-tchebycheff}
\begin{algorithmic}[1]
\STATE \textbf{Inputs:} Objectives $\{r_m\}_{m=1}^M$, preference weights $\{\lambda_m\}_{m=1}^M$, OMD learning rate $\alpha$.
\FOR{each training iteration $t=1,2,\ldots$}
    \STATE \textbf{Same as CTWA.} Sample prompts and $K$ completions, compute rewards and GRPO quantities needed to form a clipped surrogate.
    \STATE \textbf{Estimate objective values.} For each objective, sum token-level rewards into sequence-level returns and compute the batch mean objective value $R_m$.
    \STATE \textbf{Update the reference point.} For each objective, update $z_m$ using the running maximum sequence-level return observed so far.
    \STATE \textbf{Compute smooth weights.} Convert the current log-weights into smooth indicator weights by $w \leftarrow \text{softmax}(u)$.
    \STATE \textbf{Scalarize rewards.} For each token, compute the scalar reward $s(\cdot) \leftarrow \sum_{m=1}^M w_m r_m(\cdot)$.
    \STATE \textbf{Update log-weights by OMD.} For each objective, update $u_m \leftarrow u_m + \alpha \lambda_m (z_m - R_m)$.
    \STATE \textbf{Policy update.} Compute advantages from the scalar reward $s$ and apply one optimization step.
\ENDFOR
\end{algorithmic}
\end{algorithm}

\begin{algorithm}[H]
\caption{Lagrangian Primal-Dual Method on top of GRPO}
\label{alg:lagrangian}
\begin{algorithmic}[1]
\STATE \textbf{Inputs:} Primary objective $r_0$, constraint objectives $\{r_k\}_{k=1}^K$, target constraint rewards $\{c_k\}$, dual learning rate $\eta_\lambda$.
\STATE Initialize Lagrange multipliers $\lambda_k \leftarrow 0$ for all $k$.
\FOR{each training iteration $t=1,2,\ldots$}
  \STATE \textbf{Same as CTWA.} Sample prompts and $K$ completions; compute per-objective token-level rewards $\{r_0, r_1,\ldots,r_K\}$ and GRPO quantities needed to form a clipped surrogate.
  \STATE \textbf{Per-objective advantages.} Comput advantage for each objective separately, yielding $A_0$ (primary) and $\{A_k\}_{k=1}^K$ (constraints).
  \STATE \textbf{Dual update.} Estimate each constraint objective's average reward on the current batch, compute the reward gaps $c_k - \mathbb{E}[r_k]$, and update multipliers by dual ascent: $\lambda_k \leftarrow \max\{0,\; \lambda_k + \eta_\lambda\,(c_k - \mathbb{E}[r_k])\}$ for all $k$.
  \STATE \textbf{Primal scalarization.} Form a Lagrangian-weighted advantage: $A \leftarrow A_0 + \sum_{k=1}^K \lambda_k A_k$.
  \STATE \textbf{Policy update.} Apply one optimization step using the combined advantage $A$.
\ENDFOR
\end{algorithmic}
\end{algorithm}

\section{Hyperparameters}
\begin{table*}[ht]
\centering
\small
\setlength{\tabcolsep}{4pt}
\renewcommand{\arraystretch}{1.12}
\caption{Hyperparameter summary for different scalarization algorithms. Unless noted, all runs use the shared policy update and compute settings listed below.}
\label{table:hyperparameters}
\begin{tabular}{@{}p{0.19\textwidth}p{0.76\textwidth}@{}}
\toprule
Method & Hyperparameters \\
\midrule
\method{} &
Initial weights $\lambda^{(0)}=[0.333,0.333,0.334]$;
weight learning rate $\eta_\lambda=0.05$;
covariance targets $c^\ast=[0.15,0.08,0.08]$;
EMA rate $\tau=0.1$. \\

GradNorm &
Exponent $\alpha=1.5$;
weight learning rate $\eta_w=0.025$. \\

MGDA &
Same shared settings, except KL coefficient $=0$. \\

Nash-MTL &
Nash optimization iterations $T_{\mathrm{Nash}}=20$;
weight learning rate $\eta_\alpha=0.1$;
numerical constant $\epsilon=10^{-8}$;
maximum weight $\alpha_{\max}=1000$. \\

FAMO &
Weight learning rate $\eta_w=0.025$;
weight decay $\gamma=10^{-5}$;
numerical constant $\epsilon=10^{-8}$;
loss margin $\delta=1.0$. \\

Smooth Tchebycheff &
Preference weights $\lambda_m=[0.333,0.333,0.334]$;
OMD learning rate $\alpha=0.3$. \\

Lagrangian primal-dual &
Primary objective is accuracy;
constraint objectives are conciseness and clarity;
constraint targets $[0.9,0.9]$;
dual learning rate $\eta_\lambda=0.01$;
KL coefficient $=0$. \\

\midrule
Shared policy update &
Learning rate $\eta=\mathrm{1e^{-6}}$;
LR scheduler is constant;
batch size $=32$;
rollouts $K=16$;
max prompt length $=1024$;
max response length $=1024$;
entropy coefficient $=0$;
KL coefficient $=0.001$;
clipping coefficient $\epsilon=0.2$ for GRPO;
epochs $=90$;
optimization backend is FSDP. \\

Shared compute &
GPUs are $4\times$ Nvidia L40 48\,GB;
inference engine is vLLM. \\
\bottomrule
\end{tabular}
\end{table*}

\newpage
% \section{Technical appendices and supplementary material}
% Technical appendices with additional results, figures, graphs, and proofs may be submitted with the paper submission before the full submission deadline (see above). You can upload a ZIP file for videos or code, but do not upload a separate PDF file for the appendix. There is no page limit for the technical appendices. 

% Note: Think of the appendix as ``optional reading'' for reviewers. The paper must be able to stand alone without the appendix; for example, adding critical experiments that support the main claims to an appendix is inappropriate. 

%%%%%%%%%%%%%%%%%%%%%%%%%%%%%%%%%%%%%%%%%%%%%%%%%%%%%%%%%%%%

% Currently disabled for preprint
% \newpage
% \input{checklist.tex}

\end{document}